\newcommand\nnfootnote[1]{%
  \begin{NoHyper}
  \renewcommand\thefootnote{}\footnote{#1}%
  \addtocounter{footnote}{-1}%
  \end{NoHyper}
}
\newtheorem{theorem}{Theorem}[section]
\newtheorem{lemma}[theorem]{Lemma}
\theoremstyle{definition}
\newtheorem{defn}{Definition}[section]
\def\eqref#1{Eqn.~\ref{#1}}
\def\Eqref#1{Eqn.~\ref{#1}}
\def\1{\bm{1}}
\def\btheta{{\boldsymbol{\theta}}}
\def\bepsilon{{\boldsymbol{\epsilon}}}
\def\rvx{{\mathbf{x}}}
\def\rvy{{\mathbf{y}}}
\def\rmA{{\mathbf{A}}}
\def\rmI{{\mathbf{I}}}
\def\rmS{{\mathbf{S}}}
\DeclareMathAlphabet{\mathsfit}{\encodingdefault}{\sfdefault}{m}{sl}
\SetMathAlphabet{\mathsfit}{bold}{\encodingdefault}{\sfdefault}{bx}{n}
\def\sR{{\mathbb{R}}}
\def\tran{^\top}
\newcommand{\covars}{\rvx^{(h)}_{[b][:t]},\; \rvx^{(f)}_{[b][t+1:t+N_h]}, \rvx^{(s)}_{[b]}}
\newcommand{\bottomtargets}{\rvy_{[b][t+1:t+N_h]}}
\providecommand{\cmark}{\textcolor{teal}{\ding{51}}}%
\providecommand{\xmark}{\textcolor{red}{\ding{55}}}%
\providecommand {\techreport}[1]{{}}
\long\def\EDIT#1{{\color{black}{#1}\color{black}}}
\long\def\EDITt#1{{\color{black}{#1}\color{black}}}
\long\def\EDITtwo#1{{\color{black}{#1}\color{black}}}
\newcommand{\model}[1]{\text{\texttt{#1}}\xspace}
\newcommand{\Naive}{\model{Naive}}
\newcommand{\SNaive}{\model{SNaive}}
\newcommand{\ARIMA}{\model{ARIMA}}
\newcommand{\DPMN}{\model{DPMN}}
\newcommand{\DeepVAR}{\model{DeepVAR}}
\newcommand{\MQCNN}{\model{MQCNN}}
\newcommand{\NBEATS}{\model{NBEATS}}
\newcommand{\NHITS}{\model{NHITS}}
\newcommand{\LSTM}{\model{LSTM}}
\newcommand{\TFT}{\model{TFT}}
\newcommand{\GroupBU}{\model{GroupBU}}
\newcommand{\FCGAGA}{\model{FCGAGA}}
\newcommand{\HierEtoE}{\model{HierE2E}}
\newcommand{\ERM}{\model{ERM}}
\newcommand{\MinT}{\model{MinT}}
\newcommand{\PERMBU}{\model{PERMBU}}
\newcommand{\BottomUp}{\model{BU}}
\newcommand{\Bootstrap}{\model{Bootstrap}}
\newcommand{\Normality}{\model{Normality}}
\newcommand{\Adam}{\model{Adam}}
\newcommand{\ours}{\model{CLOVER}}
\newcommand{\ourscomplete}{\emph{Coherent Learning Objective Reparameterization Neural Network}\xspace}
\providecommand{\TourismSgains}{10.24\xspace}
\providecommand{\Wikigains}{4.5\xspace}
\providecommand{\TourismLgains}{4.16\xspace}
\providecommand{\Favoritagains}{27.67\xspace}
\providecommand{\Trafficgains}{54.40\xspace}
\newcommand{\dataset}[1]{\texttt{#1}\xspace}
\newcommand{\TourismL}{\dataset{Tourism-L}}
\newcommand{\TourismS}{\dataset{Tourism-S}}
\newcommand{\Traffic}{\dataset{Traffic}}
\newcommand{\Favorita}{\dataset{Favorita}}
\newcommand{\Labour}{\dataset{Labour}}
\newcommand{\Wiki}{\dataset{Wiki}}
\title{
$\clubsuit$ CLOVER $\clubsuit$: Probabilistic Forecasting with\\ Coherent Learning Objective Reparameterization
\author{
Kin G. Olivares$^*$\\
Amazon Forecasting Science \\
\texttt{kigutie@amazon.com}
\And
Geoffrey N\'egiar$^{*\dagger}$\\
The Forecasting Company \\
\texttt{geoff@theforecastingcompany.com}
\And
Ruijun Ma$^*$ \\
Amazon Forecasting Science \\
\texttt{ruijunma@amazon.com }
\And
O. Nganba Meetei$^\dagger$  \\
Peloton Interactive, Inc. \\
\texttt{nganba@gmail.com}
\And Mengfei Cao  \\
Amazon Forecasting Science \\
\texttt{mfcao@amazon.com}
\And
Michael W. Mahoney  \\
Amazon Forecasting Science \\
\texttt{zmahmich@amazon.com}
}
}
\begin{document}

\maketitle
\nnfootnote{$^*$ Equal contribution.}
\nnfootnote{$^\dagger$ Work completed at Amazon Forecasting Science, where Geoffrey interned as a UC Berkeley Ph.D. candidate.}

\vspace{-1.5cm}
\begin{abstract}
Obtaining accurate probabilistic forecasts is an operational challenge in many applications, such as energy management, climate forecasting, supply chain planning, and resource allocation.
Many of these applications present a natural hierarchical structure over the forecasted quantities; and forecasting systems that adhere to this hierarchical structure are said to be coherent. 
Furthermore, operational planning benefits from the accuracy at all levels of the aggregation hierarchy. However, building accurate and coherent forecasting systems is challenging: classic multivariate time series tools and neural network methods are still being adapted for this purpose. In this paper, \EDITtwo{we augment an MQForecaster neural network architecture with a modified multivariate Gaussian factor model that achieves coherence by construction. The factor model samples can be differentiated with respect to the model parameters, allowing optimization on arbitrary differentiable learning objectives that align with the forecasting system's goals, including quantile loss and the scaled Continuous Ranked Probability Score (CRPS). We call our method the \emph{Coherent Learning Objective Reparametrization Neural Network} (\texttt{CLOVER}).} In comparison to state-of-the-art coherent forecasting methods, 
\texttt{CLOVER} achieves significant improvements in scaled CRPS forecast accuracy, with \EDIT{average gains of 15\%, as measured on six publicly-available} datasets.
\end{abstract}

\section{Introduction}
\label{sec:introduction}

Obtaining accurate forecasts is an important step for planning in complex and uncertain environments, with applications ranging from energy to supply chain management, from transportation to climate prediction~\citep{hong2014global,Gneiting14Review, Makridakis2022M5}. 
Going beyond point forecasts such as means and medians, probabilistic forecasting provides a key tool for predicting uncertain future events.
This involves, for example, forecasting that there is a 90\% chance of rain on a certain day or that there is a 99\% chance that people will want to buy fewer than 100 items in a certain store in a given week. 
Providing more detailed predictions of this form permits finer uncertainty quantification.
This, in turn, allows planners to prepare for different scenarios and allocate resources according to their anticipated likelihood and cost structure.
This can lead to better resource allocation, improved decision-making, and less waste.

In many forecasting applications, there exist natural hierarchies over the quantities one wants to predict, such as energy consumption at various temporal granularities (from monthly to weekly), different geographic levels (from building-level to city-level to state-level), or retail demand for specific items (in a hierarchical product taxonomy). 
Typically, most or all levels of the hierarchy are important: the bottom levels are key for operational short-term planning, while higher levels of aggregation provide insight into longer-term or broader trends. 
Moreover, probabilistic forecasts are often desired to be coherent (or consistent) to ensure efficient decision-making at all levels~\citep{hong2014global, jeon2019probabilistic}. 
Coherence is achieved when the forecast distribution assigns zero probability to forecasts that do not satisfy the hierarchy's constraints~\citep{taieb2017coherent, panagiotelis2023probabilistic,Olivares2021ProbabilisticHF} (see Definition~\ref{def:coherent_aggregation}). Designing an accurate model, capable of leveraging information from all hierarchical levels while enforcing coherence is a well-known and challenging task~\citep{HYNDMAN2011Reconciliation}.

\begin{table*}[t] 
\footnotesize
\centering
\tabcolsep=0.11cm
\begin{tabular}{l|cccc}
\toprule
Method                                                       & \thead{Coherent \\ End-to-End}  & \EDIT{\thead{Multivariate \\ Series Inputs}}  & \thead{Multivariate \\ Output Distribution} & \thead{Arbitrary \\ Learning Objective} \\ \midrule
\PERMBU \citep{taieb2017coherent}                            & \xmark      & \xmark                                        & \xmark                                & \xmark                                  \\ 
\Bootstrap \citep{panagiotelis2023probabilistic}             & \xmark      & \xmark                                        & \xmark                                & \xmark                                  \\
\Normality \citep{wickramasuriya2023probabilistic_gaussian}  & \xmark      & \xmark                                        & \cmark                                & \xmark                                  \\ \midrule
\DPMN\citep{Olivares2021ProbabilisticHF}                     & \cmark      & \xmark                                        & \cmark                                & \xmark                                  \\
\HierEtoE\citep{rangapuram2021end}                           & \cmark      & \cmark                                        & \xmark                                & \xmark                                  \\
\ours (ours)                                                 & \cmark      & \cmark                                        & \cmark                                & \cmark                                  \\ \bottomrule
\end{tabular}
\caption{
Coherent forecast methods' desirable properties.
}
\label{tab:comparison}
\end{table*}

The hierarchical forecasting literature has been dominated by two-stage reconciliation approaches, where univariate methods are first fitted and later reconciled towards coherence. 
For many years, most research has focused on mean reconciliation~\citep{HYNDMAN2011Reconciliation, Hyndman2013ForecastingPA, Vitullo2011DisaggregatingTS,Hyndman2016Reconciled, DANGERFIELD1992TDorBU, Wickramasuriya2019ReconciliationTrace, Mishchenko2019ASA}. 
More recent statistical methods consider coherent probabilistic forecasts through variants of the bootstrap reconciliation technique~\citep{taieb2017coherent, panagiotelis2023probabilistic} or the clever use of the properties of the Gaussian forecast distributions~\citep{wickramasuriya2023probabilistic_gaussian}. A detailed hierarchical forecast review is provided by \citet{athanasopoulos2024hierarchical_review}.
Large-scale applications of hierarchical forecasting require practitioners to simplify the two-stage reconciliation process by favoring \textit{end-to-end} approaches that simultaneously fit all levels of the hierarchy, while still achieving coherence. The end-to-end approach refers to training a model constrained to achieve coherence by directly optimizing for accuracy. End-to-end methods offer advantages such as reduced complexity, improved computational efficiency, and improved adaptability by streamlining the entire forecasting pipeline into a single, unified model. More importantly, end-to-end models generally achieve better accuracy compared to two-stage models that are first trained independently for optimized accuracy and then made coherent through various reconciliation approaches~\citep{rangapuram2021end,Olivares2021ProbabilisticHF}.

To our knowledge, only three methods produce coherent probabilistic forecasts and allow models to be trained end-to-end: \citet{rangapuram2021end}, \citet{Olivares2021ProbabilisticHF}, and \citet{Das2022ADT}. 
(There is also parallel research on hierarchical forecasts with relaxed constraints~\EDIT{\citep{Han2021SimultaneouslyRQ,Paria2021HierarchicallyRD,kamarthi2022profhit,kamarthi2024incorporating_sparsity,umagami2023hiperformerhierarchicallypermutationequivarianttransformer}}, but we limit our analysis of this line of work since our focus is on strictly coherent forecasting methods). In particular, \citet{Olivares2021ProbabilisticHF} considers a finite mixture of Poisson distributions that captures correlations implicitly through latent variables and does not leverage cross-time series information. \citet{donti2021dc3} achieves coherence by constructing probabilistic forecasts for aggregated time series through the completion of equality. On the other hand,~\citet{rangapuram2021end} leverages multivariate time series information but achieves coherence through a differentiable projection layer, which could degrade forecast accuracy: it does not directly model correlations between the multivariate outputs, but rather couples them through its projection layer. Dedicated effort is still necessary to capture these hierarchical relationships to improve forecast accuracy. 
Such probabilistic methods can benefit from the ability to optimize for arbitrary loss functions by differentiating samples, as demonstrated by~\citet{rangapuram2021end}. 
The capacity to optimize any loss computed from the forecast samples can help align the forecasting system's goals with the neural network's learning objective.

Summarized, an ideal hierarchical forecasting method should satisfy several desiderata: 1)
be coherent end-to-end; 2) model a joint multivariate probability distribution, capturing the intricate relationships between series within the hierarchy;
3) leverage cross-time series information and accurately reflect these relationships; and 4) generate differentiable samples, to enable the method to optimize for arbitrary learning objectives that align with the forecasting system's goals. In this paper, we present a method that satisfies all these ideal properties; see Table~\ref{tab:comparison} for a summary.

\EDIT{
We introduce the \ourscomplete (\ours), a method for producing probabilistic coherent forecasts that satisfies all the desired properties stated above. Our \textbf{main contributions} are the following.

\begin{enumerate}
    \item 
    We design a multivariate Gaussian factor model that achieves forecast coherence exactly by construction, representing the joint forecast distribution over the bottom-level series. This factor model is generic and can augment most neural forecasting models with minimal modifications, \EDITtwo{with the condition that all the hierarchy's  series are available during training and inference.}

    \item \EDIT{In contrast to other joint distribution models, our factor model samples' differentiability—enabled by the reparameterization trick—offers greater versatility, allowing optimization of a wider range of learning objectives beyond negative log likelihood. We leverage this versatility by training the model using both the Continuous Ranked Probability Score (CRPS) and the Energy Score~\citep{Matheson1976ScoringRF}. For hierarchical forecasting tasks, aligning the training objective with the CRPS evaluation metric leads to significant improvements in accuracy.}

    \item We combine the multivariate factor model with an augmented \MQCNN architecture~\citep{Wen2017AMQ,olivares2022hierarchicalforecast} into the \ourscomplete (\ours). Our augmented architecture accepts vector autoregressive (VAR) inputs through a multi layer perceptron module.

    \item 
    \ours achieves state-of-the-art performance in six publicly available benchmark datasets, demonstrating substantial improvement in both probabilistic and mean forecast accuracy. On average, \ours outperforms the second-best method by 15 percent in probabilistic forecasting accuracy and delivers a 30 percent increase in mean forecast accuracy.

\end{enumerate}
}
\section{Hierarchical Forecast Task.}

\paragraph{Notation.} \EDIT{We introduce the hierarchical forecast task following \citet{Olivares2021ProbabilisticHF}}. We denote a hierarchical multivariate time series vector by $\rvy_{[i], t} = \left[\rvy_{[a], t}\tran~|~\rvy_{[b], t}\tran\right]\in\sR^{N_a + N_b}$, where $\EDIT{[i]=[a]\cup[b]}$, $[a]$, and $[b]$ denote the set of full, aggregate, and bottom indices of the time series, respectively. There are $|[i]| = N_a + N_b$ time series in total, with \EDIT{$|[a]|=N_a$} aggregates from the \EDIT{$|[b]|=N_b$} bottom time series, at the finest level of granularity. We use $t$ as a time index. In our notation, we keep track of the shape of tensors using square brackets in subscripts. Since each aggregated time series is a linear transformation of the multivariate bottom series, we write the hierarchical aggregation constraint as
\begin{align}
    \rvy_{[i],t} = \rmS_{[i][b]}\rvy_{[b], t} \iff \begin{bmatrix}
        \rvy_{[a], t} \\ \rvy_{[b], t}
    \end{bmatrix}
    = 
    \begin{bmatrix}
        \rmA_{[a][b]} \\ \rmI_{[b][b]}
    \end{bmatrix} \rvy_{[b], t}.
\end{align}

The aggregation matrix $\rmA_{[a][b]}$ represents the collection of linear transformations for deriving the aggregates and sums the bottom series to the aggregate levels. The hierarchical aggregation constraints matrix $\rmS_{[i][b]}$ is obtained by stacking $\rmA_{[a][b]}$ and the $N_b \times N_b$ identity matrix $\rmI_{[b][b]}$.

For a simple example, consider $N_b = 4$ bottom-series, so $[b] = \{1,2,3,4\}$ and $y_{Total,t} = \sum_{i=1}^4 y_{i,t}$. Figure~\ref{fig:aggregation_representations} shows an example of such hierarchical structure, where the multivariate hierarchical time series is defined as: 
\begin{equation}
    \begin{aligned}        
    \rvy_{[a],t}=\left[y_{Total,t},\; y_{1,t}+y_{2,t}\;y_{3,t}+y_{4,t}\right]\tran, \qquad
        \rvy_{[b],t}=\left[ y_{1,t},\; y_{2,t},\; y_{3,t},\; y_{4,t} \right]\tran.
    \end{aligned}
\label{eq:hierarchical_aggregations2}
\end{equation}
\begin{figure*}[t]
\centering
\begin{subfigure}[b]{.45\textwidth}
\centering
\label{fig:hierarchical_example}
\includegraphics[width=0.65\linewidth]{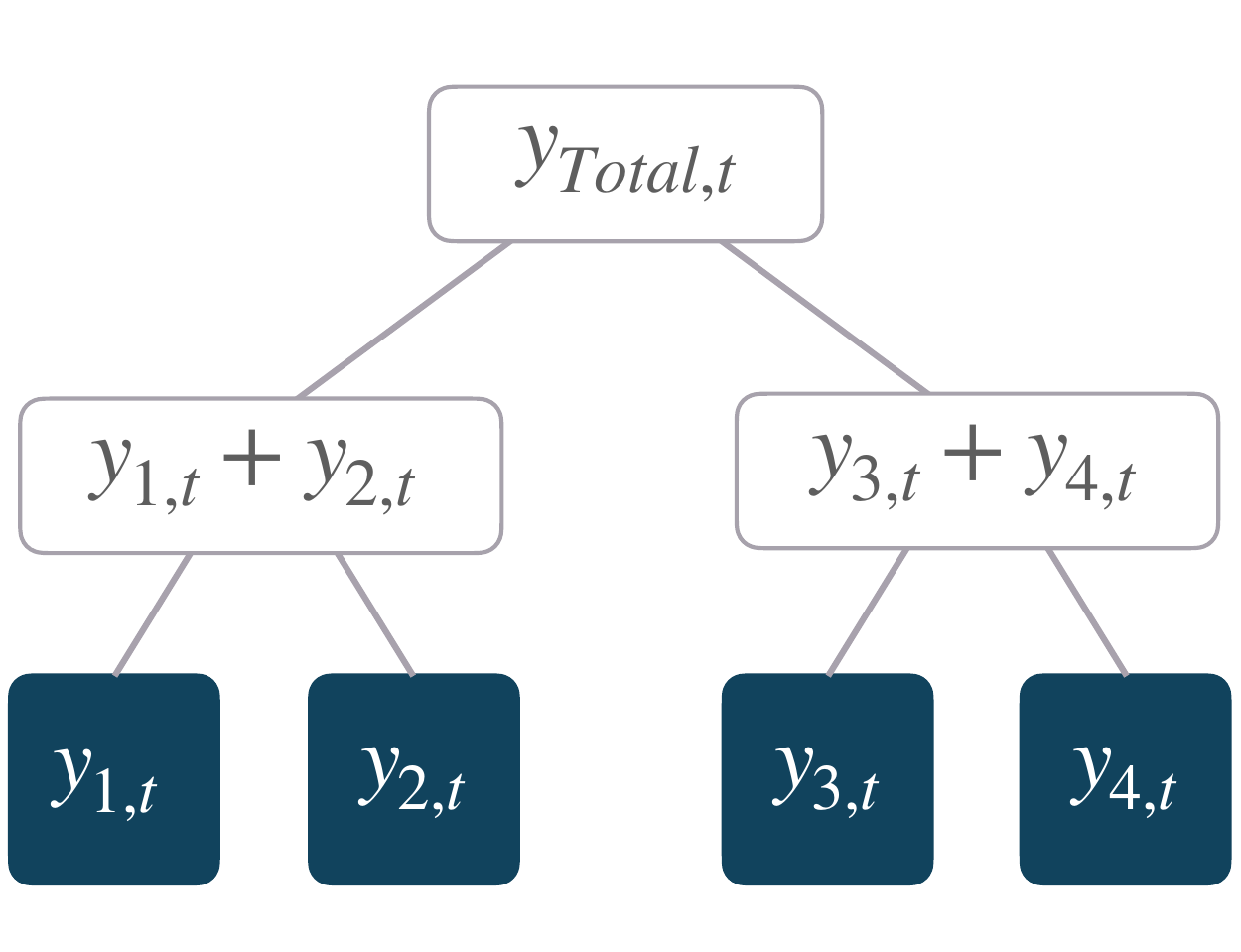}
\caption{Graph representation}
\label{subfig:graph}
\end{subfigure}
\begin{subfigure}[b]{.45\textwidth}
\centering
\begin{equation*}
\mathbf{S}_{[i][b]}
=
\begin{bmatrix}
           \\
\mathbf{A}_{\mathrm{[a][b]}} \\ 
           \\ 
           \\
\mathbf{I}_{\mathrm{[b][b]}} \\
           \\
\end{bmatrix}
= \begin{bmatrix}
1 & 1 & 1 & 1 \\ \hline
1 & 1 & 0 & 0 \\
0 & 0 & 1 & 1 \\ \hline
1 & 0 & 0 & 0 \\
0 & 1 & 0 & 0 \\
0 & 0 & 1 & 0 \\
0 & 0 & 0 & 1 \\
\end{bmatrix}
\qquad\qquad
\end{equation*}
\caption{Matrix representation}
\label{subfig:matrix}
\end{subfigure}
\caption{A simple time series hierarchical structure with $N_a=3$ aggregates over $N_b=4$ bottom time series. Figure~\ref{subfig:graph} shows the disaggregated bottom variables with blue background. Figure~\ref{subfig:matrix} (right) shows the corresponding hierarchical aggregation constraints matrix with horizontal lines to separate levels of the hierarchy. We decompose our evaluation throughout the levels. }
\hspace{3cm}
\label{fig:aggregation_representations}
\end{figure*}

\paragraph{Probabilistic Forecast Task.} Consider historical temporal features $\mathbf{x}^{(h)}_{[b][:t]}$, known future information $\mathbf{x}^{(f)}_{[b][t+1:t+N_h]}$, and static data $\mathbf{x}^{(s)}_{[b]}$, forecast creation date $t$ and forecast horizons in $ [t+1:t+N_h]$. A classic forecasting task aims to estimate the following conditional probability~\footnote{\EDIT{The conditional independence assumption across $t$ in~\eqref{eq:normal_forecast}, maintains the computational tractability of the forecasting task, and is an assumption used by most forecasting methods.}}:
\begin{equation}
\mathbb{P}_{[i]}\left(\mathbf{Y}_{[i],t+\eta}\,\mid\,\mathbf{x}^{(h)}_{[b][:t]},\;
\mathbf{x}^{(f)}_{[b][t+1:t+N_h]},\;
\mathbf{x}^{(s)}_{[b]} \,\right)\qquad \text{for} \ \eta=1,\cdots,N_h.
\label{eq:normal_forecast}
\end{equation}

\EDIT{The hierarchical forecasting task} augments the forecast probability \EDIT{in~\eqref{eq:normal_forecast}} with coherence constraints in ~\eqref{def:coherent_aggregation}~\citep{Taieb2020HierarchicalPF,panagiotelis2023probabilistic, olivares2022hierarchicalforecast}, by restricting the probabilistic forecast space to assign zero probability to non-coherent forecasts. Definition~\ref{def:coherent_aggregation} formalizes the intuition, stating that the distribution of a given aggregate random variable is exactly the distribution defined as the aggregates of the bottom-series distributions through the summation matrix $\rmS_{[i][b]}$.

\begin{defn}
    Let \EDIT{$(\Omega_{[b]}, \mathcal{F}_{[b]}, \mathbb{P}_{[b]})$} be a probabilistic forecast space (on the bottom-level series $\mathbf{Y}_{[b]}$), with sample space $\Omega_{[b]}$, event space $\mathcal{F}_{[b]}$, and $\mathbb{P}_{[b]}$ a forecast probability. \EDIT{Let $\mathbf{S}_{[i][b]}(\cdot):\Omega_{[b]} \mapsto \Omega_{[i]}$ be the linear transformation implied by the aggregation constraints matrix.}
    A \textbf{coherent forecast} space \EDIT{$(\Omega_{[i]}, \mathcal{F}_{[i]}, \mathbb{P}_{[i]}$)} satisfies
    \begin{align}
        \mathbb{P}_{[i]}\left(\rmS_{[i][b]}(\mathcal{B})\right) =  \mathbb{P}_{[b]}(\mathcal{B}) ,
    \end{align}
    for any set $\mathcal{B} \in \EDIT{\mathcal{F}}_{[b]}$ and its image $\rmS_{[i][b]}(\mathcal{B})\in \EDIT{\mathcal{F}}_{[i]}$.
\label{def:coherent_aggregation}
\end{defn}

\EDIT{\paragraph{Hierarchical Forecast Scoring Rule.} In this work and most of the hierarchical forecast literature, the performance of probabilistic forecasts is primarily evaluated by the Continuous Ranked Probability Score (CRPS), e.g. \cite{taieb2017coherent,rangapuram2021end, Olivares2021ProbabilisticHF,panagiotelis2023probabilistic, Das2022ADT, wickramasuriya2023probabilistic_gaussian}. 
The CRPS between a target $y$ and distributional forecast $Y$ is defined as
\begin{equation}
    \mathrm{CRPS}\left(y,Y\right) 
    = \mathbb{E}_{Y}\left[\,|Y-y|\,\right]-\frac{1}{2}\mathbb{E}_{Y,Y'}\left[\,|Y-Y'|\,\right],
\label{eq:crps2}
\end{equation}
where $Y'$ is distributed as $Y$, but is independent of it~\citep{Matheson1976ScoringRF, gneiting2007strictly}. 

CRPS is commonly used because it is a strictly proper scoring rule \citep{gneiting2007strictly} and is agnostic to model and distributional assumptions. The metric also gives a summary of performance on all quantile forecasts \citep{laio2007verification}, as 
\begin{equation}
    \text{CRPS}(y,Y) = 2\int \text{QL}_q(y,F^{-1}_Y(q))dq,
\end{equation}
where $F^{-1}_Y(q)$ is the $q$-quantile of variable $Y$. The $q$-quantile loss is defined as 
\begin{equation}
    \text{QL}_q(y,F^{-1}_Y(q)) = q(y-F^{-1}_Y(q))_+ + (1-q)(F^{-1}_Y(q)-y)_+.
\label{eqn:crps3}
\end{equation}

}  

\section{Methodology}
\label{sxn:methods}

In this section, we describe our main method, the \ourscomplete (\ours).
It consists of a multivariate probabilistic model, an underlying neural network structure, and a coherent end-to-end model estimation procedure.

\subsection{Coherent Probabilistic Model}
\label{sec:probabilistic_model}

Our predicted probabilistic forecasts at all hierarchical levels are jointly represented by a Gaussian factor model. Our neural network maps the known information (past, static and known future) to the location, scale, and shared factor parameters, and the forecasted factor model parameters are designed to model correlations between the bottom-level series, while conditioning on all known information. 
Our factor model\footnote{Early work on factor forecast models augmenting neural networks done by \citet{wang2019deepfactor} does not ensure coherence.} combined with the coherent aggregation in~\Eqref{eq:bootstrap_reconciliation2} directly estimates the multivariate probability of bottom-level series $\bottomtargets$ conditioning on historical, known-future, and static covariates $\covars$, i.e.,
\begin{equation}
\mathbb{P}\left(\boldsymbol{\tilde{Y}}_{[i][t+1:t+N_h]} \mid \covars\right) = 
\mathbb{P}\left(\mathbf{S}_{[i][b]}\boldsymbol{\hat{Y}}_{[b][t+1:t+N_h]} \mid 
\boldsymbol{{\hat{\mu}}}_{[b][h],t},\;
\boldsymbol{{\hat{\sigma}}}_{[b][h],t},
\mathbf{{\hat{F}}}_{[b][k][h],t}
\;\right).
\end{equation}

At a given forecast creation date $t$, the model uses the location $\boldsymbol{{\hat{\mu}}}_{[b][h],t}\in\mathbb{R}^{N_b \times N_h}$, scale $\boldsymbol{{\hat{\sigma}}}_{[b][h],t}\in\mathbb{R}^{N_b \times N_h}$ and shared factor $\mathbf{{\hat{F}}}_{[b][k][h],t}\in\mathbb{R}^{N_b \times N_k \times N_h}$ parameters, along with samples from standard normal variables $\mathbf{z}_{[b],\eta} \sim \mathcal{N}(\mathbf{0}_{[b]}, \mathbf{I}_{[b][b]})$, and $\boldsymbol{\epsilon}_{[k],\eta}\sim \mathcal{N}(\mathbf{0}_{[k]},\mathbf{I}_{[k][k]})$ to compose the following multivariate variables for each horizon:
\begin{eqnarray}
    \mathbf{\hat{y}}_{[b],\eta,t} =
    \boldsymbol{\hat{\mu}}_{[b],\eta,t} +
    \mathrm{Diag}(\boldsymbol{\hat{\sigma}}_{[b],\eta,t}) \mathbf{z}_{[b],\eta,t} + 
    \hat{\mathbf{F}}_{[b][k],\eta,t} \boldsymbol{\epsilon}_{[k],\eta,t}, \qquad
    \eta = 1,\cdots,N_h.
     \label{eq:gaussian_factor_model}
     \label{eq:base_sample}
\end{eqnarray}

After sampling from the multivariate factor, we \textit{coherently aggregate} the clipped outputs of the network, 
\begin{equation}
    \mathbf{\tilde{y}}_{[i],\eta,\tau} = \mathbf{S}_{[i][b]}\left(\mathbf{\hat{y}}_{[b],\eta,\tau}\right)_{+}.
\label{eq:bootstrap_reconciliation2}
\end{equation}

The shared factors enable the factor model to capture the relationships across the disaggregated series, and the covariance structure of the disaggregated series follows:
\begin{equation}
\mathrm{Cov}\left(\mathbf{\hat{y}}_{[b],\eta,t}\right) =
    \mathrm{Diag}(\boldsymbol{\hat{\sigma}}^{2}_{[b],\eta,t}) + \hat{\mathbf{F}}_{[b][k],\eta,t} \hat{\mathbf{F}}_{[b][k],\eta,t} \tran.
\label{eq:covariance}
\end{equation}

\begin{defn}
Any bottom-level multivariate distribution can be transformed into a coherent distribution through \textbf{coherent aggregation}~\footnote{Coherent aggregation can be thought of a special case of bootstrap reconciliation~\citep{panagiotelis2023probabilistic} that only relies on a bottom-level forecast distribution.}.%
Given a sample $\mathbf{\hat{y}}_{[b]}\sim \mathbb{P}_{[b]}$, a coherent $ \mathbb{P}_{[i]}$ distribution can be constructed with the following sample transformation
\begin{equation}
    \mathbf{\tilde{y}}_{[i]} = \mathbf{S}_{[i][b]}\left(\mathbf{\hat{y}}_{[b]}\right).
\label{eq:bootstrap_reconciliation}
\end{equation}
In other words, it is enough to aggregate the bottom-level forecasts in a bottom-up manner. We include in Appendix~\ref{sec:factor_model_properties} a proof of the approach's coherence property and the covariance structure.
\end{defn}

\subsection{Neural Network Architecture}
\label{subsec:loss}

\begin{figure*}[t]
    \centering
    \includegraphics[width=.99\textwidth]{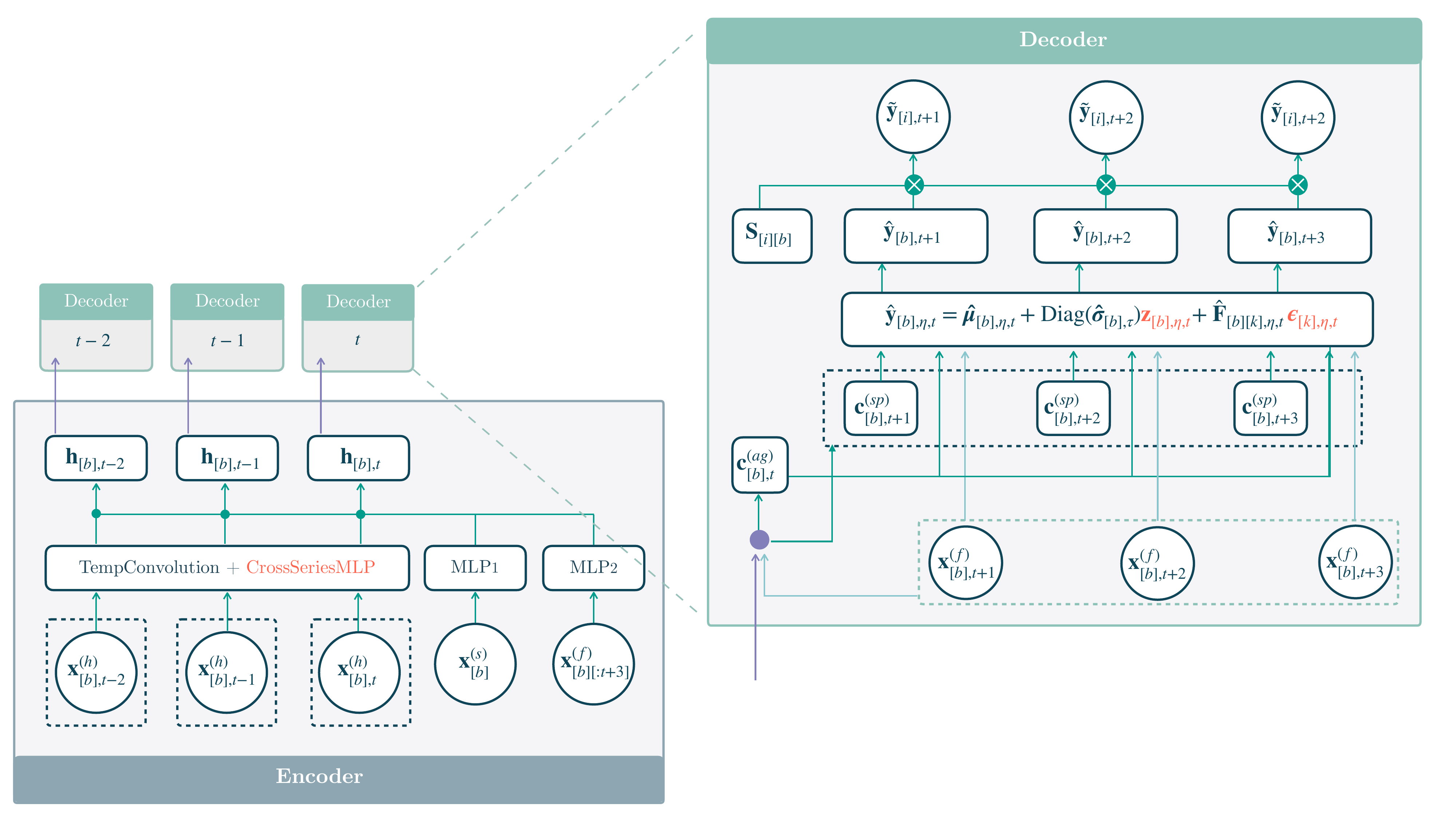}
    \caption{The \ourscomplete is a Sequence-to-Sequence with Context network that uses dilated temporal convolutions as the primary encoder and multilayer perceptron based decoders for the creation of the multi-step forecast.
    \ours coherently aggregates the samples of the factor model
    $\mathbf{\tilde{y}}_{[i],\eta,t}=\mathbf{S}_{[i][b]}\mathbf{\hat{y}}_{[b],\eta,t}$. We mark in red the standard normal samples that are parameter-free, the reparameterization trick allows to apply backpropagation through the factor model outputs. \EDIT{\ours extends upon the univariate \MQCNN, through the cross series multi layer perceptron}.
    \vspace{-1\baselineskip} 
    }
\label{fig:model}
\end{figure*}

\EDITtwo{As mentioned in Section~\ref{sec:introduction}, the factor model can augment most neural forecasting models if all series in the hierarchy are available during training and inference, as this is a sufficient condition to obtain the factor model parameters. Appendix~\ref{sec:ablation_studies} shows an augmented subset of NeuralForecast models~\citep{olivares2022library_neuralforecast}, while the main paper focuses on the \MQCNN-based model~\citep{Wen2017AMQ, olivares2022hierarchicalforecast}}. We select \MQCNN for its outstanding performance in multi-step forecasting problems. Our architecture based on \MQCNN has a main encoder which consists of a stack of dilated temporal convolutions, and it is applied to historical information for all series. In addition, it uses a global multi layer perceptron (MLP) to encode the static and future information. The encoder at time $t$ is described in \Eqref{equatio:encoder} below, and is applied to each disaggregated series:
\begin{equation}
\begin{aligned}
    \mathbf{h}^{(h)}_{[i],t} &= \mathrm{TempConvolution}\left([\mathbf{S}_{[i][b]}\mathbf{x}^{(h)}_{[b][:t]}]\right) \;^{(\footnotemark)} \\
    \mathbf{h}_{[b]}^{(s)}     &= \mathrm{MLP}_1\left(\mathbf{x}^{(s)}_{[b]}\right) \\
    \mathbf{h}^{(f)}_{[b],t} &= \mathrm{MLP}_{2}\left(\mathbf{x}^{(f)}_{[b][t+1:t+N_h]}\right) .
\end{aligned} \footnotetext{Temporal exogenous data only aggregates the target signal, other features (e.g., calendar) are maintained without aggregation.}
\label{equatio:encoder}
\end{equation}

We use a residual cross series MLP to capture vector autoregressive relationships in the hierarchy with minimal modifications to the architecture~\footnote{Approach also inspired by regional forecast at Amazon's Supply Chain Optimization Technologies~ \citep{savorgnan2024craters_architecture}.}:
\begin{equation}
\mathbf{h}^{(h)}_{[b],t} = \mathrm{CrossSeriesMLP}\left(\mathbf{h}^{(h)}_{[i],t}\right).
\label{eq:cross_series_mlp}
\end{equation}

\ours uses a two-stage MLP decoder:
the first decoder summarizes information in the horizon-agnostic context $\mathbf{c}^{(ag)}_{[b],t}$ and the horizon-specific context $\mathbf{c}^{(sp)}_{[b][h],t}$; and
the second stage decoder transforms the contexts into the factor model parameters $\left(\boldsymbol{{\hat{\mu}}}_{[b][h],t},\;
\boldsymbol{{\hat{\sigma}}}_{[b][h],t},
\mathbf{{\hat{F}}}_{[b][k][h],t}\right)$. 
\Eqref{equation:decoder} describes the operations:
\begin{equation}
\begin{aligned}
\mathbf{h}_{[b],t} &= \left[\mathbf{h}^{(h)}_{[b],t}, \mathbf{h}^{(s)}_{[b]}, \mathbf{h}^{(f)}_{[b],t}\right] \\
\mathbf{c}^{(ag)}_{[b],t} &= \mathrm{MLP}_3\left(\mathbf{h}_{[b],t}\right) \\
\mathbf{c}^{(sp)}_{[b][h],t} &= \mathrm{MLP}_{4}\left(\mathbf{h}_{[b],t}\right) \\
\left(\boldsymbol{{\hat{\mu}}}_{[b][h],t},\;
\boldsymbol{{\hat{\sigma}}}_{[b][h],t},
\mathbf{{\hat{F}}}_{[b][k][h],t}\right) 
&= \mathrm{MLP}_5\left(\left[\mathbf{c}^{(sp)}_{[b][h],t}, \mathbf{c}^{(ag)}_{[b],t}, \mathbf{x}^{(f)}_{[b][t+1:t+N_h]}\right]\right).
\label{equation:decoder}
\end{aligned}
\end{equation}

For the last step, the network composes the factor model samples using \eqref{eq:gaussian_factor_model} and aggregates them (equivalent to bottom-up reconciliation) in~\eqref{eq:bootstrap_reconciliation2}.

We design our method to provide differentiable samples, allowing for sample-based loss differentiation with respect to both distributional parameters and neural network weights. This is achieved using Gaussian linear latent variable models and the reparameterization trick~\citep{Kingma2013AutoEncodingVB}.

\subsection{Learning Objective}

Let $\btheta$ be a model that resides in the class of models $\Theta$ defined by the model architecture. 
Here, $\theta$ can be thought of a non-linear function mapping from the model feature space to the parameter set for all target horizons, we have 
\begin{equation}
    (\boldsymbol{{\hat{\mu}}}_{[b][h],t},\;
\boldsymbol{{\hat{\sigma}}}_{[b][h],t},
\mathbf{{\hat{F}}}_{[b][k][h],t}) = \btheta(\covars).
\end{equation}

Let $\hat Y_{i,\eta,t}(\btheta)$ be the random variable parameterized by $\btheta$. In some problems, multi-step coherent forecasts for multiple items are needed (e.g., in retail business, coherent regional demand forecasts are required for each product). Let $u$ be the index of such an item within an index set $\{1,\cdots,N_u\}$ of interest, and let $\tilde Y_{u,i,\eta,t}(\btheta)$ be the coherent forecast random variable for the target $y_{u,i,t+\eta}$. \EDITtwo{As we explain in Appendix~\ref{sec:reparameterization_trick}}, using the reparameterization trick~\citep{Kingma2013AutoEncodingVB}, within the class of parameters $\Theta$ defined by the neural network architecture, we optimize for either CRPS
\begin{equation}
\label{equation:crps_loss}
    \min_{\btheta\in\Theta}  
    \sum_{u,i,\eta,t} \mathrm{CRPS}\left(y_{u,i,t+\eta},\; \tilde{Y}_{u,i,\eta,t}(\btheta)\right)
    =\sum_{u,i,\eta,t} 
    \mathbb{E}\left[|y_{u,i,t+\eta},\; \tilde{Y}_{u,i,\eta,t}|\right]
    -\frac{1}{2}\mathbb{E}\left[|\tilde{Y}_{u,i,\eta,t}-\tilde{Y}^{'}_{u,i,\eta,t}|\right]
    .
\end{equation}
\EDIT{
or the Energy Score
\begin{equation}
\label{equation:energy_loss}
\begin{aligned}
    \min_{\btheta\in\Theta}  
    \sum_{u,t} \mathrm{ES}\left(\mathbb{P}_{u,t,[i][h]},\; \tilde{Y}_{u,t,[i][h]}(\btheta)\right)
    =\sum_{u,t} 
    \mathbb{E}\left[||y_{u,t,[i][h]}-\; \tilde{Y}_{u,t,[i][h]}||^{\beta}_{2}\right]
    -\frac{1}{2}\mathbb{E}\left[||\tilde{Y}_{u,t,[i][h]}-\tilde{Y}^{'}_{u,t,[i][h]}||^{\beta}_{2}\right]
    .
\end{aligned}
\end{equation}
}

We train the model using stochastic gradient descent (\Adam~\citep{Kingma2014AdamAM}) with early stopping~\citep{yuan2007early_stopping}. Appendix~\ref{sec:training_methodology} contains details of the network's optimization and hyperparameter selection.

\subsection{Discussion}
\label{sxn:model_discussion}
\looseness-1 Here, we discuss differences between \ours (our method) with two coherent end-to-end probabilistic forecasting baselines in \HierEtoE~\citep{rangapuram2021end} and \DPMN~\citep{Olivares2021ProbabilisticHF}.

\looseness-1 The \HierEtoE method of~\citet{rangapuram2021end} is ``too general.'' It consists of an augmented \DeepVAR neural network model~\citep{Flunkert2017DeepARPF} that produces marginal forecasts for all hierarchical series. \HierEtoE\ claims to be more general than hierarchical forecasting, since it is designed to enforce any convex constraint satisfied by the forecasts; due to the constraining operation in the method, it has to revise the optimized forecasts. It does not leverage the specifics of the hierarchical constraints, which are more structured than a general convex constraint.
\HierEtoE\ produces samples from independent Gaussian distributions for each time-series in the hierarchy; since the samples are not guaranteed to be hierarchically coherent, \HierEtoE couples samples by projecting them onto the space of coherent probabilistic forecasts. 
Both the sampling operation~\citep{Kingma2013AutoEncodingVB} and the projection are differentiable, allowing the method to be trained end-to-end; \HierEtoE restricts to optimize likelihood or CRPS. 
Both \HierEtoE and \ours can allow different distribution choices, since Gaussians can be replaced by any distribution that can be sampled in a differentiable way, i.e., almost any continuous distribution~\citep{Ruiz2016TheGR, Figurnov2018ImplicitRG, Jankowiak2018PathwiseDB}. In~\citet{rangapuram2021end}, the projection operator ensures coherence, and correlations between bottom-levels are learned only by optimizing the neural network. In contrast, \ours produces forecasts for bottom-level series only, while relying on common factors to encode correlations. This removes the need to forecast at all levels simultaneously, therefore, reducing computational requirements if we are only interested in a subset of the aggregates. 

On the other hand, the \DPMN baseline~\citep{Olivares2021ProbabilisticHF} is ``too restrictive,'' in particular, as a Poisson mixture can be prone to distribution misspecification problems. It is known that when a probability model is misspecified, optimizing log likelihood is equivalent to minimizing Kullback–Leibler (KL) divergence with respect to the true probabilistic distribution, KL divergence measures change in probability space, while optimizing CRPS is equivalent to minimizing the Cramer-von Mises criterion \citep{gneiting2007strictly}, which quantifies the distance with respect to the probability model in the sample space. The \ours learning objective for the probabilistic model is resilient to distributional misspecification \citep{bellemare2017cramer}. Moreover, \ours can be optimized to accommodate other evaluation metrics of interest.

Finally, \DPMN estimates the covariance among time series, but does not take advantage of the multivariate input when encoding historical time series. Similarly to other \ARIMA-based baselines, on specific hierarchical benchmark datasets such as \Traffic, \DPMN produces suboptimal bottom-series forecasts. We improve the encoder for historical time series by adding a CrossSeriesMLP after the Temporal convolution encoder, which bridges the accuracy gap between \HierEtoE and our \MQCNN-based approach.

\section{Empirical Evaluation}

In this section, we present our main empirical results. 
First, we describe the empirical setup.  
Second, we evaluate \ours and compare it with state-of-the-art hierarchical forecast models.  
Third, we present the results of the ablation study that further analyze the source of improvements in variants of \ours.

\subsection{Setting}
\label{sec:setting}

\paragraph{Datasets.} \EDIT{We analyze six qualitatively different public datasets: \Labour, \Traffic, \TourismS, \TourismL, \Wiki, and \Favorita, each requiring significant modeling flexibility due to their varied properties.} \EDIT{The \Favorita dataset is the largest dataset evaluated which includes count and real-valued regional sales data, with over 340,000 series. The \TourismS and \TourismL datasets, report quarterly and monthly visitor numbers to Australian regions, respectively, and they are grouped by region and travel purpose. The \Traffic dataset contains daily highway occupancy rates from the San Francisco Bay Area, featuring highly correlated series with strong Granger causalities. The smallest \Labour dataset tracks monthly Australian employment by status, gender, and geography, the series in this dataset are highly cointegrated. Lastly, the \Wiki dataset summarizes daily views of online articles by country, topic, and access type. We provide more dataset details in Appendix~\ref{sec:dataset_details}.}

\begin{table*}[t] 
\small
\centering
\tabcolsep=0.11cm
\begin{tabular}{@{}l|ccccccc@{}}
\toprule
Dataset           & \# Items ($N_u$) & Bottom ($N_b$) & Levels     & Aggregated ($N_a+N_b$) & Time range                & Frequency         & Horizon ($N_h$) \\ \midrule
\EDIT{\Labour}    & \EDIT{1}         & \EDIT{32}      & \EDIT{4}   & \EDIT{57}              & \EDIT{2/1978-12/2019}     & \EDIT{Monthly}    & \EDIT{12}       \\
\Traffic          & 1                & 200            & 4          & 207                    & 1/2008-3/2009             & Daily             & 1               \\ 
\EDIT{\TourismS}  & \EDIT{1}         & \EDIT{56}      & \EDIT{4}   & \EDIT{89}              & \EDIT{1998-2006}          & \EDIT{Quarterly}  & \EDIT{4}        \\ 
\TourismL         & 1                & 304            & 4/5        & 555                    & 1998-2016                 & Monthly           & 12              \\ 
\EDIT{\Wiki}      & \EDIT{1}         & \EDIT{150}     & \EDIT{5}   & \EDIT{199}             & \EDIT{1/2016-12/2016}     & \EDIT{Daily}      & \EDIT{7}        \\ 
\Favorita         &  4036            & 54             & 4          & 93                     &   1/2013 - 8/2017         & Daily             & 34              \\ 
\bottomrule
\end{tabular}
\caption{Summary of publicly-available data used in our empirical evaluation.}
\label{tab:datasets}
\end{table*}

\paragraph{Evaluation metrics.} 

Our main evaluation metric is the mean scaled CRPS \EDIT{from \Eqref{eq:crps2}} defined as the score described in~\Eqref{eq:scrps}, divided by the sum of all target values. Let $\boldsymbol{l}^{(g)}$ be a vector of length $N_a+N_b$ consisting of binary indicators for a hierarchical level $g$, where for each $j\in[i]$, $l^{(g)}_j=1$ if aggregated series $j$ is included in the hierarchical level $g$, and 0 otherwise. Then sCRPS for the hierarchical level $g$ is defined as
  \begin{equation}
    \mathrm{sCRPS}\left(\mathbf{y}_{[i\,][t+1:t+N_h]},\;\tilde{Y}_{[i\,][h],t}\mid \boldsymbol{l}^{(g)}\right) = 
    \frac{\sum_{i=1}^{N_a+N_b}\left(\sum_{\eta=1}^{N_h}\mathrm{CRPS}(y_{i,t+\eta},\tilde{Y}_{i,\eta,t})\right)\cdot l^{(g)}_i}{\sum_{i=1}^{N_a+N_b}||\mathbf{y}_{i,[t+1:t+N_h]} ||_1\cdot l^{(g)}_i} .
\label{eq:scrps}
\end{equation}

\paragraph{Baseline Models.} We compare our method with the following coherent probabilistic methods: (1) \DPMN-\GroupBU~\citep{Olivares2021ProbabilisticHF}, (2) \HierEtoE~\citep{rangapuram2021end}, (3) \ARIMA-\PERMBU-\MinT~\citep{taieb2017coherent}, (4) \ARIMA-\Bootstrap-\BottomUp~\citep{panagiotelis2023probabilistic}, and (5) an \ARIMA. In addition, in Appendix~\ref{sec:mean_evaluation}, we compare our method with the following coherent mean methods: (1) \DPMN-\GroupBU, (2) \ARIMA-\ERM~\citep{Taieb19HF}, (3) \ARIMA-\MinT~\citep{Wickramasuriya2019ReconciliationTrace}, (4) \ARIMA-\BottomUp, (5) an \ARIMA, and (6) Seasonal Naive. We use the implementation of statistical methods available in the StatsForecast and HierarchicalForecast libraries~\citep{olivares2022hierarchicalforecast,garza2022statsforecast}. 

\subsection{Forecasting Results}

\begin{table*}[tp]
\scriptsize
\centering
\caption{Empirical evaluation of probabilistic coherent forecasts. Mean \emph{scaled CRPS} (sCRPS) averaged over 5 runs, at each aggregation level, the best result is highlighted (lower values are preferred). \EDIT{We report 95\% confidence intervals}, the methods without standard deviation have deterministic solutions. \\ 
\tiny{
\textsuperscript{*} The \HierEtoE\ results differ from \cite{rangapuram2021end}, here the sCRPS quantile interval space has a finer granularity of 1 percent instead of 5 percent in \cite{rangapuram2021end}. We run \HierEtoE's best reported hyperparameters on \TourismS with horizon=4, instead of \HierEtoE's original horizon=8. \\
\textsuperscript{**} 
\EDIT{\PERMBU-\MinT \ on \TourismL \ is unavailable because its implementation cannot naively be applied to datasets with multiple hierarchies, this is because the bottom up aggregation strategy cannot identify a unique way to obtain the upper level distributions.}
}
}
\label{table:empirical_evaluation}
\setlength\tabcolsep{3.0pt}
\begin{tabular}{cc|cccccc|c}
\toprule
\textsc{Data} 
& \textsc{Level} 
& \thead{\ours \\ (crps)}
& \thead{\ours \\ (energy)} 
& \DPMN-\GroupBU    
& \HierEtoE\textsuperscript{*} 
& \PERMBU-\MinT \textsuperscript{**}
& \Bootstrap-\BottomUp 
& \thead{\ARIMA \\ (not coherent)}  \\ \midrule
\parbox[t]{.2mm}{\multirow{5}{*}{\rotatebox[origin=c]{90}{\Labour}}}
 & Overall       & \EDITt{0.0074±0.0005}           &  \EDITt{0.0075±0.0002}       & \EDITt{0.0102±0.0011}        & \EDITt{0.0171±.0003}          & \textbf{\EDITt{0.0067±0.0001}} & \EDITt{0.0076±0.0001}  & \EDITt{0.0070}    \\
 & 1 (geo.)      & \EDITt{0.0032±0.0010}           &  \EDITt{0.0035±0.0006}       & \EDITt{0.0033±0.0011}        & \EDITt{0.0052±.0003}          & \textbf{\EDITt{0.0013±0.0001}} & \EDITt{0.0017±0.0001}  & \EDITt{0.0017}    \\
 & 2 (geo.)      & \EDITt{0.0054±0.0005}           &  \EDITt{0.0054±0.0004}       & \EDITt{0.0088±0.0009}        & \EDITt{0.0181±.0003}          & \EDITt{0.0045±0.0001}          & \EDITt{0.0053±0.0001}  & \textbf{\EDITt{0.0044}} \\
 & 3 (geo.)      & \EDITt{0.0075±0.0003}           &  \textbf{\EDITt{0.0074±0.0004}}& \EDITt{0.0115±0.0009}      & \EDITt{0.0188±.0003}& \EDITt{0.0076±0.0001} & \EDITt{0.0086±0.0001}  & \EDITt{0.0076}    \\
 & 4 (geo.)      & \EDITt{0.0137±0.0004}           &  \EDITt{0.0138±0.0002}       & \EDITt{0.0173±0.0020}        & \EDITt{0.0262±.0004}          & \textbf{\EDITt{0.0133±0.0001}} & \EDITt{0.0148±0.0001}  & \EDITt{0.0138}    \\ \midrule
\parbox[t]{.2mm}{\multirow{4}{*}{\rotatebox[origin=c]{90}{\Traffic}}}
 & Overall       & \textbf{0.0171±0.0036}         &  \EDITt{0.0228±0.0075}        & 0.0907±0.0024                & 0.0375±0.0058                 & 0.0677±0.0061                  & 0.0736±0.0024          & 0.0751            \\
 & 1 (geo.)      & \textbf{0.0026±0.0012}         &  \EDITt{0.0064±0.0077}        & 0.0397±0.0044                & 0.0183±0.0091                 & 0.0331±0.0085                  & 0.0468±0.0031          & 0.0376            \\
 & 2 (geo.)      & \textbf{0.0029±0.0014}         &  \EDITt{0.0064±0.0067}        & 0.0537±0.0024                & 0.0183±0.0081                 & 0.0341±0.0081                  & 0.0483±0.0030          & 0.0412            \\
 & 3 (geo.)      & \textbf{0.0044±0.0022}         &  \EDITt{0.0075±0.0059}        & 0.0538±0.0022                & 0.0209±0.0071                 & 0.0417±0.0061                  & 0.0530±0.0025          & 0.0549            \\
 & 4 (geo.)      & \textbf{0.0587±0.0106}         &  \EDITt{0.0709±0.0104}        & 0.2155±0.0022                & 0.0974±0.0021                 & 0.1621±0.0027                  & 0.1463±0.0017          & 0.1665            \\ \midrule
\parbox[t]{.2mm}{\multirow{5}{*}{\rotatebox[origin=c]{90}{\TourismS}}}
& Overall        & \textbf{\EDITt{0.0631±0.0012}}  & \EDITt{0.0669±0.0022}        & \EDITt{0.0740±0.0122}        & \EDITt{0.0761±0.0007}         & \EDITt{0.0812±0.0010}          & \EDITt{0.0703±0.0017}  & \EDITt{0.0759}    \\
& 1 (geo.)       & \textbf{\EDITt{0.0268±0.0031}}  & \EDITt{0.0304±0.0049}        & \EDITt{0.0329±0.0074}        & \EDITt{0.0400±0.0009}         & \EDITt{0.0375±0.0014}          & \EDITt{0.0335±0.0026}  & \EDITt{0.0354}    \\
& 2 (geo.)       & \textbf{\EDITt{0.0484±0.0019}}  & \EDITt{0.0507±0.0036}        & \EDITt{0.0502±0.0058}        & \EDITt{0.0609±0.0012}         & \EDITt{0.0679±0.0016}          & \EDITt{0.0507±0.0023}  & \EDITt{0.0709}    \\
& 3 (geo.)       & \textbf{\EDITt{0.0784±0.0020}}  & \EDITt{0.0817±0.0027}        & \EDITt{0.0914±0.0171}        & \EDITt{0.0914±0.0008}         & \EDITt{0.0959±0.0012}          & \EDITt{0.0845±0.0016}  & \EDITt{0.0848}    \\
& 4 (geo.)       & \textbf{\EDITt{0.0989±0.0017}}  & \EDITt{0.1050±0.0031}        & \EDITt{0.1212±0.0238}        & \EDITt{0.1122±0.0007}         & \EDITt{0.1234±0.0012}          & \EDITt{0.1124±0.0013}  & \EDITt{0.1125}    \\ \midrule
\parbox[t]{.2mm}{\multirow{8}{*}{\rotatebox[origin=c]{90}{\TourismL}}}
& Overall        & \textbf{0.1197±0.0037}         & \EDITt{0.1268±0.0045}         & 0.1249±0.0020               & 0.1472±0.0029                  & -                              & 0.1375±0.0013          & 0.1416            \\
& 1 (geo.)       & 0.0292±0.0042                  & \EDITt{0.0294±0.0080}         & 0.0431±0.0040               & 0.0842±0.0051                  & -                              & 0.0622±0.0026          & \textbf{0.0263}   \\
& 2 (geo.)       & \textbf{0.0593±0.0049}         & \EDITt{0.0598±0.0059}         & 0.0637±0.0032               & 0.1012±0.0029                  & -                              & 0.0820±0.0019          & 0.0904            \\
& 3 (geo.)       & \textbf{0.1044±0.0030}         & \EDITt{0.1077±0.0054}         & 0.1084±0.0033               & 0.1317±0.0022                  & -                              & 0.1207±0.0010          & 0.1389            \\
& 4 (geo.)       & \textbf{0.1540±0.0046}         & \EDITt{0.1620±0.0060}         & 0.1554±0.0025               & 0.1705±0.0023                  & -                              & 0.1646±0.0007          & 0.1878            \\
& 5 (prp.)       & \textbf{0.0594±0.0076}         & \EDITt{0.0600±0.0040}         & 0.0700±0.0038               & 0.0995±0.0061                  & -                              & 0.0788±0.0018          & 0.0770            \\
& 6 (prp.)       & 0.1100±0.0049                  & \EDITt{0.1140±0.0034}         & \textbf{0.1070±0.0023}      & 0.1336±0.0042                  & -                              & 0.1268±0.0017          & 0.1270            \\
& 7 (prp.)       & \textbf{0.1824±0.0024}         & \EDITt{0.1934±0.0045}         & 0.1887±0.0032               & 0.1955±0.0025                  & -                              & 0.1949±0.0010          & 0.2022            \\
& 8 (prp.)       & \textbf{0.2591±0.0050}         & \EDITt{0.2902±0.0054}         & 0.2629±0.0034               & 0.2615±0.0016                  & -                              & 0.2698±0.0004          & 0.2834            \\ \midrule
\parbox[t]{.2mm}{\multirow{6}{*}{\rotatebox[origin=c]{90}{\Wiki}}}
& Overall        & \textbf{\EDITt{0.2475±0.0076}}  & \EDITt{0.3184±0.0262}        & \EDITt{0.3158±0.0240}       & \EDITt{0.2592±0.0031}          & \EDITt{0.4008±0.0046}          & \EDITt{0.2816±0.0036}  & \EDITt{0.3907}    \\
& 1 (geo.)       & \textbf{\EDITt{0.0823±0.0161}}  & \EDITt{0.1536±0.0666}        & \EDITt{0.1709±0.0354}       & \EDITt{0.1007±0.0046}          & \EDITt{0.1886±0.0129}          & \EDITt{0.1630±0.0065}  & \EDITt{0.1981}    \\
& 2 (geo.)       & \textbf{\EDITt{0.1765±0.0154}}  & \EDITt{0.2381±0.0171}        & \EDITt{0.2299±0.0241}       & \EDITt{0.1963±0.0037}          & \EDITt{0.2691±0.0073}          & \EDITt{0.2192±0.0043}  & \EDITt{0.2566}    \\
& 3 (geo.)       & \textbf{\EDITt{0.2743±0.0108}}  & \EDITt{0.3213±0.0343}        & \EDITt{0.3311±0.0230}       & \EDITt{0.2784±0.0038}          & \EDITt{0.4049±0.0062}          & \EDITt{0.2923±0.0032}  & \EDITt{0.4100}    \\
& 4 (geo.)       & \textbf{\EDITt{0.2814±0.0100}}  & \EDITt{0.3326±0.0332}        & \EDITt{0.3370±0.0223}       & \EDITt{0.2900±0.0043}          & \EDITt{0.4236±0.0062}          & \EDITt{0.2999±0.0032}  & \EDITt{0.4182}    \\
& 5 (prp.)       & \textbf{\EDITt{0.4231±0.0127}}  & \EDITt{0.5466±0.1406}        & \EDITt{0.5098±0.0667}       & \EDITt{0.4307±0.0039}          & \EDITt{0.7177±0.0064}          & \EDITt{0.4339±0.0039}  & \EDITt{0.6708}    \\ \midrule
\parbox[t]{.2mm}{\multirow{4}{*}{\rotatebox[origin=c]{90}{\Favorita}}}
& Overall        & \textbf{0.2908±0.0025}         & \EDITt{0.3279±0.0052}         & 0.4020±0.0182               & 0.5298±0.0091                  & 0.4670±0.0096                  & 0.4110±0.0085          & 0.4373            \\
& 1 (geo.)       & \textbf{0.1841±0.0033}         & \EDITt{0.2209±0.0045}         & 0.2760±0.0149               & 0.4714±0.0103                  & 0.2692±0.0076                  & 0.2900±0.0067          & 0.3112            \\
& 2 (geo.)       & \textbf{0.2754±0.0026}         & \EDITt{0.3112±0.0051}         & 0.3865±0.0207               & 0.5182±0.0107                  & 0.3824±0.0092                  & 0.3877±0.0082          & 0.4183            \\
& 3 (geo.)       & \textbf{0.2945±0.0025}         & \EDITt{0.3313±0.0053}         & 0.4068±0.0206               & 0.5291±0.0129                  & 0.6838±0.0108                  & 0.4490±0.0098          & 0.4446            \\
& 4 (geo.)       & \textbf{0.4092±0.0022}         & \EDITt{0.4484±0.0068}         & 0.5387±0.0253               & 0.6012±0.0131                  & 0.5532±0.0116                  & 0.5749±0.0003          & 0.5749            \\ \bottomrule
\label{tab:results}
\end{tabular}
\end{table*}



As mentioned above, we compare the proposed model to the \DPMN~\citep{Olivares2021ProbabilisticHF}, the \HierEtoE~\citep{rangapuram2021end}, and two \ARIMA-based reconciliation methods \citep{Wickramasuriya2019ReconciliationTrace,panagiotelis2023probabilistic}. Following previous work, we report the sCRPS at all levels of the defined hierarchies; see Table~\ref{tab:results}. The reconciliation results for \ARIMA are generated using HierarchicalForecast~\citet{olivares2022hierarchicalforecast}, with each confidence interval calculated based on 10 independent runs. The results for \HierEtoE are generated based on three independent runs using hyperparameters tuned by \citet{olivares2022hierarchicalforecast}. All metrics for \DPMN are quoted from \citet{Olivares2021ProbabilisticHF} with identical experimental settings on all datasets. 

\EDIT{
As the \emph{Overall} row in Table~\ref{table:empirical_evaluation} shows that \ours improves sCRPS compared to the best alternative in five of six datasets, with gains of $\Favoritagains\%$ on \Favorita, $\TourismSgains\%$ on \TourismS, $\TourismLgains\%$ on \TourismL , $\Wikigains\%$ on \Wiki and $\Trafficgains\%$ in \Traffic. There is a degradation of $8\%$ in the reconciliation that performs the best on \Labour, the smallest data set. For five out of six datasets, our model achieves the best or second-best sCRPS accuracy across all the levels of the hierarchy; it is important to consider that aggregate levels are much smaller in sample size for which we prefer the bottom-level measurements as an indicator of the methods' accuracy.

In \Traffic, our model achieves remarkably better results than the \DPMN and \HierEtoE baselines, which we explain by the ability to model VAR relationships accurately because of the smoothing the time series features before using them as inputs for other series. For the smallest dataset, Australian \Labour,  all deep learning models, including \ours, showed a decline in accuracy compared to statistical baselines. We hypothesize that this performance drop is due to the limited data available, as 57 monthly series may not be sufficient to effectively train complex models like deep learning architectures. Another explanation for the \Labour degradation is the presence of strong trends and cycles, which can lead to artificially high correlations between the series, resulting in spurious relationships that may confuse the VAR modules.


We complement the main results with a mean forecast evaluation section in Appendix~\ref{sec:mean_evaluation}. As we see in Table~\ref{table:empirical_evaluation_relmse} the sCRPS accuracy gains are mostly mirrored in the relative squared error metric (relSE) . We qualitatively show the \ours forecast distributions for a hierarchical structure in the \Favorita dataset in Appendix~\ref{sec:visualize_prediction}. We also include in Appendix~\ref{sec:visualize_prediction} a probability-probability plot comparing the similarity of the empirical and forecast distributions. As can be seen in Figure~\ref{fig:pp_plots}, the forecast distributions are qualitatively well calibrated.
}

\subsection{Ablation Studies}

To analyze \ours's source of improvements, we performed two ablation studies on the \Traffic dataset, where we investigated first the effects of the learning objective and then the effects of VAR inputs enabled by the CrossSeriesMLP. Here we report a summary and refer to the details in Appendix~\ref{sec:ablation_studies}.

In the first study, we explore the effects of a cross-series MLP that mimics the vector autoregressive model. We compare the \ours architecture with and without the cross-series multi layer perceptron (CrossSeriesMLP) introduced in \Eqref{eq:cross_series_mlp}.
Such a module enables the network to share information from the series in the hierarchy with minimal modifications to the architecture. 
\EDIT{Table~\ref{table:ablation_study_crossmlp} and Figure~\ref{fig:ablation_studies}} show that CrossSeriesMLP improved \Traffic forecast accuracy by 66\% compared to variants without it.
We attribute the effectiveness of the VAR approach to the presence of Granger-causal relationships in traffic intersections. 
\EDIT{The VAR-augmented \ours improves upon well-established univariate architectures, including \LSTM~\citep{sak2014lstm}, \NBEATS~\citep{oreshkin2020nbeats,olivares2022nbeatsx}, \NHITS~\citep{challu2022nhits}, \TFT~\citep{lim2021temporal_fusion_transformer}, and \FCGAGA~\citep{oreshkin2021fcgaga}, a spatio-temporal specialized architecture.}

In the second ablation study on learning objectives, we compared CRPS-based optimization, as described in \eqref{equation:crps_loss}, with the negative log-likelihood estimation for the Gaussian factor model introduced in Section~\ref{sec:probabilistic_model}, and other likelihood-estimated distributions. \EDIT{Table~\ref{table:ablation_study_losses} and Figure~\ref{fig:ablation_studies}} show that the CRPS-optimized factor model improves forecast accuracy by nearly 60\% when compared to the log-likelihood optimized model.

\begin{figure*}[t]
\centering
\begin{subfigure}[b]{.61\textwidth}
         \centering
         \includegraphics[width=.9\columnwidth]{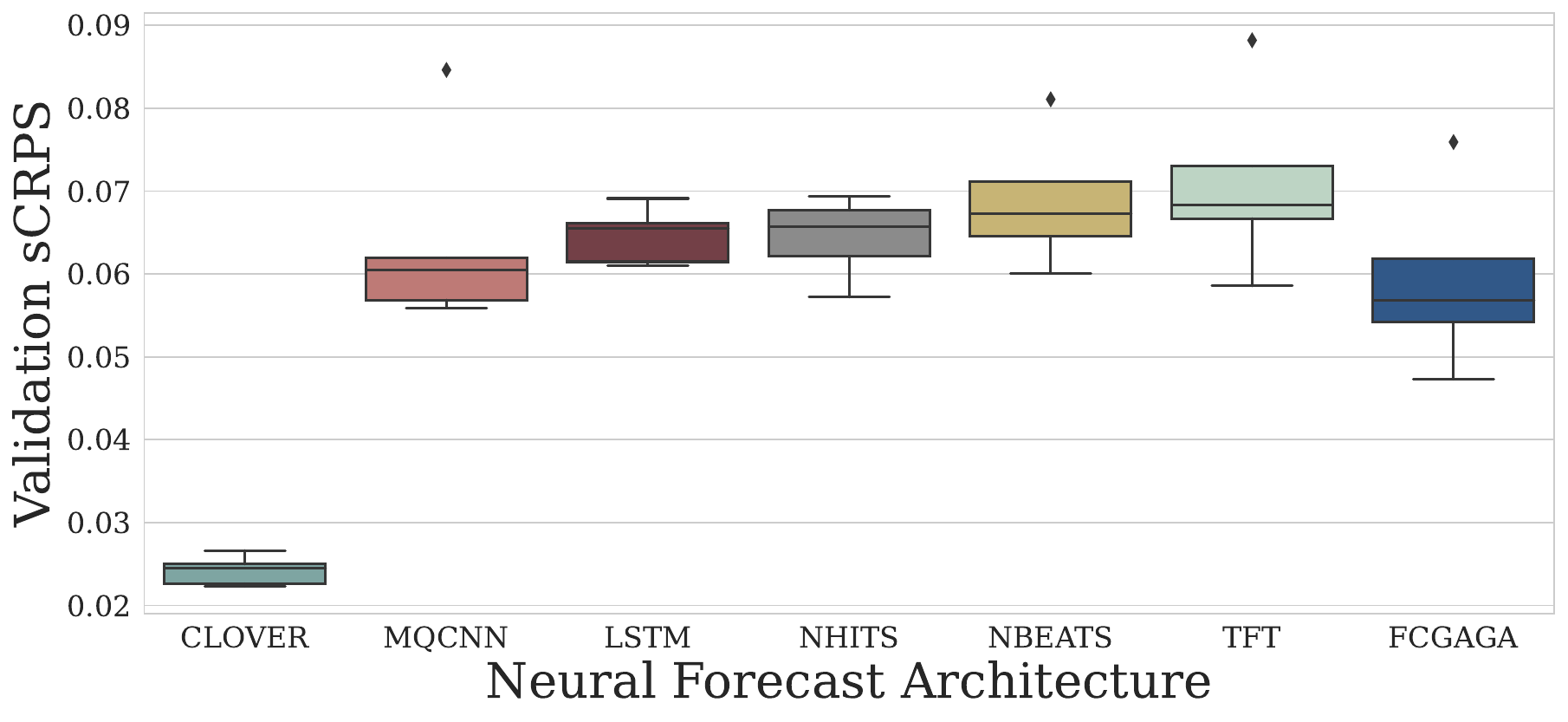}
         \label{fig:learning_objective_variants}
    \end{subfigure}
\begin{subfigure}[b]{.38\textwidth}
         \centering
         \includegraphics[width=.94\columnwidth]{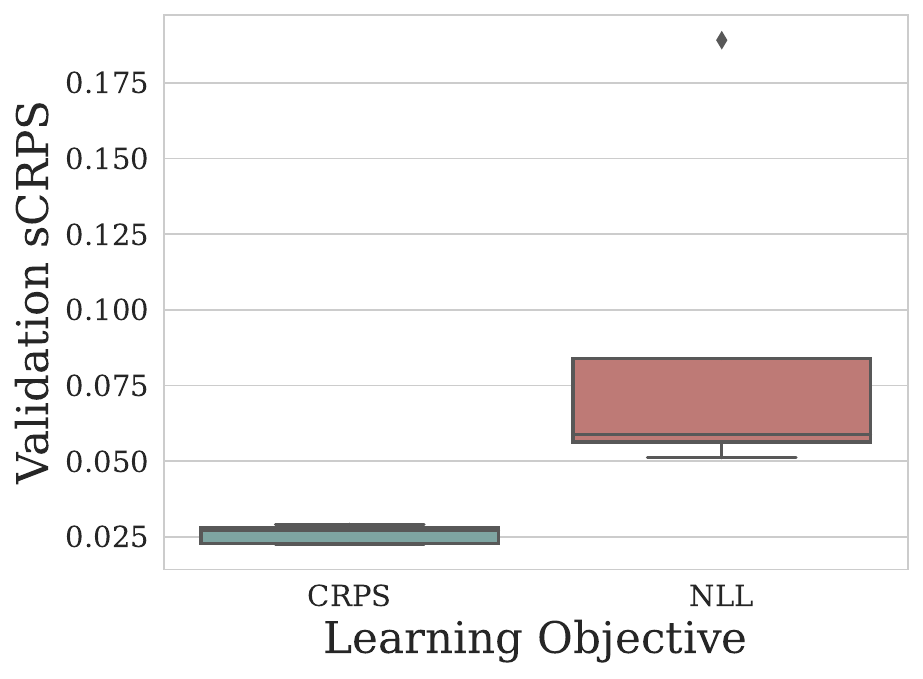}
\end{subfigure}
\caption{\EDIT{Ablation studies on the Bay Area \Traffic dataset: a) In highly correlated hierarchies, VAR inputs enabled by the \ours to significantly improve over the univariate \MQCNN's accuracy. b) The factor model CRPS learning objective demonstrates clear advantages over classic negative log-likelihood.
Full ablation studies described in Appendix~\ref{sec:ablation_studies}.
}
}
\hspace{3cm}
\label{fig:ablation_studies}
\end{figure*}

\section{Conclusion}
\label{sec:conclusion}

In this work, we present a novel multivariate factor forecasting model, integrated with the MQCNN neural network architecture, resulting in the \ourscomplete (\ours) model. Using CRPS as the learning objective, we achieve significant improvements in forecast accuracy over traditional negative log-likelihood objectives.

\EDIT{Our experiments on six benchmark datasets—Favorita grocery demand, Australian quarterly and monthly tourism, Australian labour, Wikipedia article visits, and Bay Area Traffic—showed consistent improvements in CRPS accuracy, averaging over 15 percent. We observed a 4 percent performance drop on the smaller Australian labour dataset. Ablation studies further confirmed the importance of the CRPS learning objective and the enhancements made to support multivariate time series inputs, which were key to the model's success.}

Our model's success on datasets with strong Granger causality is due to its vector autoregressive capabilities. However, in the presence of cointegrated series, vector autoregresive approaches can be ineffective. Developing multivariate time series methods that are robust to cointegration is a promising area for future research.

This study uses the reparameterization trick to enable alternative learning objectives such as CRPS and the energy score for neural forecasting tasks and the use of coherent factor models to capture correlations among hierarchical series structures.  While we focus on parameterizing the multivariate predictive distribution as a Gaussian multivariate factor model, the framework's flexibility can also accommodate other distributions that support sample differentiability. This is of special interest for outlier quantiles that cannot be well approximated by Gaussian variables. Exciting future research directions include extending the reparameterization trick to handle discrete distributions, which could further enhance the accuracy of forecast distributions built on this framework. \EDIT{Another promising direction for future research is to extend the usage of the reparameterization trick from the learning objectives, into the hierarchical aggregation structure itself, provided the aggregation structure is done through differentiable transformations.}

\EDIT{Finally, we observed performance challenges of neural forecasting models when working with limited dataset sizes. Traditional tools like data augmentation and pretraining need to be adapted to handle multivariate time series and associated covariates. Extending these techniques to the hierarchical forecasting domain is a promising research direction.}

\section*{Acknowledgements}

This work was supported by the Amazon Supply Chain Optimization Team (SCOT). We express our gratitude to Stefania La Vattiata for her contribution to the illustrations, which effectively captured the essence of our methods. Special thanks to David Luo for his work on hierarchical forecast baselines. Thanks to Boris Oreshkin for his pointers to latest developments in graph neural networks, and recommendations to enhance our ablation studies. We also appreciate Riccardo Savorgnan and the regional team for enriching discussions on alternatives for adapting the MQCNN architecture to multivariate time series. The authors thank Utkarsh for his suggestion on optimizing the closed-form CRPS for truncated normal variables. We thank Youxin Zhang for his ideas to improve the computational complexity of the Energy Score and the CRPS estimators. Thanks to Lee Dicker and Medha Agarwal for their input on the energy score part of the paper and qualitative evaluation of the forecast calibration.

\bibliographystyle{tmlr}
\bibliography{main.bib}

\begin{thebibliography}{57}
\providecommand{\natexlab}[1]{#1}
\providecommand{\url}[1]{\texttt{#1}}
\expandafter\ifx\csname urlstyle\endcsname\relax
  \providecommand{\doi}[1]{doi: #1}\else
  \providecommand{\doi}{doi: \begingroup \urlstyle{rm}\Url}\fi

\bibitem[Anava et~al.(2018)Anava, Kuznetsov, and {(Google Inc.
  Sponsorship)}]{wikipedia2018web_traffic_dataset}
Oren Anava, Vitaly Kuznetsov, and {(Google Inc. Sponsorship)}.
\newblock Web traffic time series forecasting, forecast future traffic to
  wikipedia pages.
\newblock Kaggle Competition, 2018.
\newblock URL
  \url{https://www.kaggle.com/c/web-traffic-time-series-forecasting/}.

\bibitem[Athanasopoulos et~al.(2024)Athanasopoulos, Hyndman, Kourentzes, and
  Panagiotelis]{athanasopoulos2024hierarchical_review}
George Athanasopoulos, Rob~J. Hyndman, Nikolaos Kourentzes, and Anastasios
  Panagiotelis.
\newblock Forecast reconciliation: A review.
\newblock \emph{International Journal of Forecasting}, 40\penalty0
  (2):\penalty0 430--456, 2024.
\newblock ISSN 0169-2070.
\newblock \doi{https://doi.org/10.1016/j.ijforecast.2023.10.010}.
\newblock URL
  \url{https://www.sciencedirect.com/science/article/pii/S0169207023001097}.

\bibitem[{Australian Bureau of Statistics}(2019)]{Aulabor2019Aulabor_dataset}
{Australian Bureau of Statistics}.
\newblock {Labour Force, Australia}.
\newblock Accessed Online, 2019.
\newblock URL
  \url{https://www.abs.gov.au/AUSSTATS/abs@.nsf/DetailsPage/6202.0Dec\%202019?OpenDocument}.

\bibitem[Bellemare et~al.(2017)Bellemare, Danihelka, Dabney, Mohamed,
  Lakshminarayanan, Hoyer, and Munos]{bellemare2017cramer}
Marc~G Bellemare, Ivo Danihelka, Will Dabney, Shakir Mohamed, Balaji
  Lakshminarayanan, Stephan Hoyer, and R{\'e}mi Munos.
\newblock The cramer distance as a solution to biased wasserstein gradients.
\newblock \emph{arXiv preprint arXiv:1705.10743}, 2017.

\bibitem[{Ben Taieb} \& Koo(2019){Ben Taieb} and Koo]{Taieb19HF}
Souhaib {Ben Taieb} and Bonsoo Koo.
\newblock Regularized regression for hierarchical forecasting without
  unbiasedness conditions.
\newblock In \emph{Proceedings of the 25th ACM SIGKDD International Conference
  on Knowledge Discovery \& Data Mining}, KDD '19, pp.\  1337–1347, New York,
  NY, USA, 2019. Association for Computing Machinery.
\newblock ISBN 9781450362016.
\newblock \doi{10.1145/3292500.3330976}.
\newblock URL \url{https://doi.org/10.1145/3292500.3330976}.

\bibitem[{Ben Taieb} et~al.(2017){Ben Taieb}, Taylor, and
  Hyndman]{taieb2017coherent}
Souhaib {Ben Taieb}, James~W. Taylor, and Rob~J. Hyndman.
\newblock Coherent probabilistic forecasts for hierarchical time series.
\newblock In Doina Precup and Yee~Whye Teh (eds.), \emph{Proceedings of the
  34th International Conference on Machine Learning}, volume~70 of
  \emph{Proceedings of Machine Learning Research}, pp.\  3348--3357. PMLR,
  06--11 Aug 2017.
\newblock URL \url{http://proceedings.mlr.press/v70/taieb17a.html}.

\bibitem[{Ben Taieb} et~al.(2020){Ben Taieb}, Taylor, and
  Hyndman]{Taieb2020HierarchicalPF}
Souhaib {Ben Taieb}, James~W. Taylor, and Rob~J Hyndman.
\newblock Hierarchical probabilistic forecasting of electricity demand with
  smart meter data.
\newblock \emph{Journal of the American Statistical Association}, 116:\penalty0
  27 -- 43, 2020.
\newblock URL \url{https://api.semanticscholar.org/CorpusID:43214772}.

\bibitem[Challu et~al.(2023)Challu, Olivares, Oreshkin, Garza, Mergenthaler,
  and Dubrawski]{challu2022nhits}
Cristian Challu, Kin~G. Olivares, Boris~N. Oreshkin, Federico Garza, Max
  Mergenthaler, and Artur Dubrawski.
\newblock {NHITS}: {N}eural {H}ierarchical {I}nterpolation for {T}ime {S}eries
  forecasting.
\newblock In \emph{The Association for the Advancement of Artificial
  Intelligence Conference 2023 (AAAI 2023)}, 2023.
\newblock URL \url{https://arxiv.org/abs/2201.12886}.

\bibitem[Dangerfield \& Morris(1992)Dangerfield and
  Morris]{DANGERFIELD1992TDorBU}
Byron~J. Dangerfield and John~S. Morris.
\newblock Top-down or bottom-up: Aggregate versus disaggregate extrapolations.
\newblock \emph{International Journal of Forecasting}, 8\penalty0 (2):\penalty0
  233--241, 1992.
\newblock URL
  \url{https://www.sciencedirect.com/science/article/pii/016920709290121O}.

\bibitem[Das et~al.(2023)Das, Kong, Paria, and Sen]{Das2022ADT}
A.~Das, W.~Kong, B.~Paria, and R.~Sen.
\newblock Dirichlet proportions model for hierarchically coherent probabilistic
  forecasting.
\newblock In Robin~J. Evans and Ilya Shpitser (eds.), \emph{Proceedings of the
  Thirty-Ninth Conference on Uncertainty in Artificial Intelligence}, volume
  216 of \emph{Proceedings of Machine Learning Research}, pp.\  518--528. PMLR,
  31 Jul--04 Aug 2023.
\newblock URL \url{https://proceedings.mlr.press/v216/das23b.html}.

\bibitem[Donti et~al.(2021)Donti, Rolnick, and Kolter]{donti2021dc3}
Priya~L Donti, David Rolnick, and J~Zico Kolter.
\newblock Dc3: A learning method for optimization with hard constraints.
\newblock \emph{arXiv preprint arXiv:2104.12225}, 2021.

\bibitem[Favorita et~al.(2017)Favorita, inversion, Elliott, and
  McDonald]{favorita-grocery-sales-forecasting}
Corporación Favorita, inversion, Julia Elliott, and Mark McDonald.
\newblock Corporación favorita grocery sales forecasting, 2017.
\newblock URL
  \url{https://kaggle.com/competitions/favorita-grocery-sales-forecasting}.

\bibitem[Figurnov et~al.(2018)Figurnov, Mohamed, and
  Mnih]{Figurnov2018ImplicitRG}
Michael Figurnov, Shakir Mohamed, and Andriy Mnih.
\newblock Implicit reparameterization gradients.
\newblock In \emph{Neural Information Processing Systems}, 2018.

\bibitem[Flunkert et~al.(2017)Flunkert, Salinas, and
  Gasthaus]{Flunkert2017DeepARPF}
Valentin Flunkert, David Salinas, and Jan Gasthaus.
\newblock {DeepAR}: Probabilistic forecasting with autoregressive recurrent
  networks.
\newblock \emph{ArXiv}, abs/1704.04110, 2017.

\bibitem[Garza et~al.(2022)Garza, Canseco, Challú, and
  Olivares]{garza2022statsforecast}
Federico Garza, Max~Mergenthaler Canseco, Cristian Challú, and Kin~G.
  Olivares.
\newblock {StatsForecast}: Lightning fast forecasting with statistical and
  econometric models.
\newblock {PyCon} Salt Lake City, Utah, US 2022, 2022.
\newblock URL \url{https://github.com/Nixtla/statsforecast}.

\bibitem[Gneiting \& Katzfuss(2014)Gneiting and Katzfuss]{Gneiting14Review}
Tilmann Gneiting and Matthias Katzfuss.
\newblock Probabilistic forecasting.
\newblock \emph{Annual Review of Statistics and Its Application}, 1\penalty0
  (1):\penalty0 125--151, 2014.
\newblock URL \url{https://doi.org/10.1146/annurev-statistics-062713-085831}.

\bibitem[Gneiting \& Raftery(2007)Gneiting and Raftery]{gneiting2007strictly}
Tilmann Gneiting and Adrian~E Raftery.
\newblock Strictly proper scoring rules, prediction, and estimation.
\newblock \emph{Journal of the American statistical Association}, 102\penalty0
  (477):\penalty0 359--378, 2007.

\bibitem[Han et~al.(2021)Han, Dasgupta, and Ghosh]{Han2021SimultaneouslyRQ}
Xing Han, Sambarta Dasgupta, and Joydeep Ghosh.
\newblock Simultaneously reconciled quantile forecasting of hierarchically
  related time series.
\newblock In \emph{International Conference on Artificial Intelligence and
  Statistics}, 2021.

\bibitem[Hong et~al.(2014)Hong, Pinson, and Fan]{hong2014global}
Tao Hong, Pierre Pinson, and Shu Fan.
\newblock Global energy forecasting competition 2012, 2014.

\bibitem[Hyndman \& Koehler(2006)Hyndman and
  Koehler]{hyndman2006another_look_measures}
Rob~J. Hyndman and Anne~B. Koehler.
\newblock Another look at measures of forecast accuracy.
\newblock \emph{International Journal of Forecasting}, 22\penalty0
  (4):\penalty0 679 -- 688, 2006.
\newblock ISSN 0169-2070.
\newblock \doi{https://doi.org/10.1016/j.ijforecast.2006.03.001}.
\newblock URL
  \url{http://www.sciencedirect.com/science/article/pii/S0169207006000239}.

\bibitem[Hyndman et~al.(2011)Hyndman, Ahmed, Athanasopoulos, and
  Shang]{HYNDMAN2011Reconciliation}
Rob~J. Hyndman, Roman~A. Ahmed, George Athanasopoulos, and Han~Lin Shang.
\newblock Optimal combination forecasts for hierarchical time series.
\newblock \emph{Computational Statistics \& Data Analysis}, 55\penalty0
  (9):\penalty0 2579--2589, 2011.
\newblock URL
  \url{https://www.sciencedirect.com/science/article/pii/S0167947311000971}.

\bibitem[Hyndman et~al.(2016)Hyndman, Lee, and Wang]{Hyndman2016Reconciled}
Rob~J. Hyndman, Alan~J. Lee, and Earo Wang.
\newblock Fast computation of reconciled forecasts for hierarchical and grouped
  time series.
\newblock \emph{Comput. Stat. Data Anal.}, 97\penalty0 (C):\penalty0 16–32,
  may 2016.
\newblock URL \url{https://doi.org/10.1016/j.csda.2015.11.007}.

\bibitem[Hyndman et~al.(2024)Hyndman, Athanasopoulos, Garza, Challu,
  Mergenthaler, and Olivares]{Hyndman2013ForecastingPA}
Rob~J Hyndman, George Athanasopoulos, Azul Garza, Cristian Challu, Max
  Mergenthaler, and Kin~G. Olivares.
\newblock \emph{{Forecasting: Principles and Practice, the Pythonic Way}}.
\newblock {OTexts}, {Melbourne, Australia}, 2024.
\newblock available at https://otexts.com/fpppy/.

\bibitem[Jankowiak \& Obermeyer(2018)Jankowiak and
  Obermeyer]{Jankowiak2018PathwiseDB}
Martin Jankowiak and Fritz Obermeyer.
\newblock Pathwise derivatives beyond the reparameterization trick.
\newblock \emph{ArXiv}, abs/1806.01851, 2018.

\bibitem[Jeon et~al.(2019)Jeon, Panagiotelis, and
  Petropoulos]{jeon2019probabilistic}
Jooyoung Jeon, Anastasios Panagiotelis, and Fotios Petropoulos.
\newblock Probabilistic forecast reconciliation with applications to wind power
  and electric load.
\newblock \emph{European Journal of Operational Research}, 279\penalty0
  (2):\penalty0 364--379, 2019.

\bibitem[Kamarthi et~al.(2022)Kamarthi, Kong, Rodr{\'\i}guez, Zhang, and
  Prakash]{kamarthi2022profhit}
Harshavardhan Kamarthi, Lingkai Kong, Alexander Rodr{\'\i}guez, Chao Zhang, and
  B~Aditya Prakash.
\newblock Profhit: Probabilistic robust forecasting for hierarchical
  time-series.
\newblock \emph{arXiv preprint arXiv:2206.07940}, 2022.

\bibitem[Kamarthi et~al.(2024)Kamarthi, Sasanur, Tong, Zhou, Peters, Czyzyk,
  and Prakash]{kamarthi2024incorporating_sparsity}
Harshavardhan Kamarthi, Aditya~B. Sasanur, Xinjie Tong, Xingyu Zhou, James
  Peters, Joe Czyzyk, and B.~Aditya Prakash.
\newblock Large scale hierarchical industrial demand time-series forecasting
  incorporating sparsity.
\newblock In \emph{Proceedings of the 30th ACM SIGKDD Conference on Knowledge
  Discovery and Data Mining}, KDD '24, pp.\  5230–5239, New York, NY, USA,
  2024. Association for Computing Machinery.
\newblock ISBN 9798400704901.
\newblock \doi{10.1145/3637528.3671632}.
\newblock URL \url{https://doi.org/10.1145/3637528.3671632}.

\bibitem[Kingma \& Ba(2014)Kingma and Ba]{Kingma2014AdamAM}
Diederik~P. Kingma and Jimmy Ba.
\newblock Adam: A method for stochastic optimization.
\newblock \emph{CoRR}, abs/1412.6980, 2014.

\bibitem[Kingma \& Welling(2013)Kingma and Welling]{Kingma2013AutoEncodingVB}
Diederik~P. Kingma and Max Welling.
\newblock Auto-encoding variational bayes.
\newblock \emph{CoRR}, abs/1312.6114, 2013.

\bibitem[Laio \& Tamea(2007)Laio and Tamea]{laio2007verification}
Francesco Laio and Stefania Tamea.
\newblock Verification tools for probabilistic forecasts of continuous
  hydrological variables.
\newblock \emph{Hydrology and Earth System Sciences}, 11\penalty0 (4):\penalty0
  1267--1277, 2007.

\bibitem[Lim et~al.(2021)Lim, Arık, Loeff, and
  Pfister]{lim2021temporal_fusion_transformer}
Bryan Lim, Sercan~Ö. Arık, Nicolas Loeff, and Tomas Pfister.
\newblock Temporal fusion transformers for interpretable multi-horizon time
  series forecasting.
\newblock \emph{International Journal of Forecasting}, 37\penalty0
  (4):\penalty0 1748--1764, 2021.
\newblock ISSN 0169-2070.
\newblock \doi{https://doi.org/10.1016/j.ijforecast.2021.03.012}.
\newblock URL
  \url{https://www.sciencedirect.com/science/article/pii/S0169207021000637}.

\bibitem[Makridakis et~al.(2022)Makridakis, Spiliotis, and
  Assimakopoulos]{Makridakis2022M5}
Spyros Makridakis, Evangelos Spiliotis, and Vassilios Assimakopoulos.
\newblock M5 accuracy competition: Results, findings, and conclusions.
\newblock \emph{International Journal of Forecasting}, 38\penalty0
  (4):\penalty0 1346--1364, 2022.
\newblock URL
  \url{https://www.sciencedirect.com/science/article/pii/S0169207021001874}.
\newblock Special Issue: M5 competition.

\bibitem[Matheson \& Winkler(1976)Matheson and Winkler]{Matheson1976ScoringRF}
James~E. Matheson and Robert~L. Winkler.
\newblock Scoring rules for continuous probability distributions.
\newblock \emph{Management Science}, 22:\penalty0 1087--1096, 1976.

\bibitem[Mishchenko et~al.(2019)Mishchenko, Montgomery, and
  Vaggi]{Mishchenko2019ASA}
Konstantin Mishchenko, Mallory Montgomery, and Federico Vaggi.
\newblock A self-supervised approach to hierarchical forecasting with
  applications to groupwise synthetic controls.
\newblock \emph{ArXiv}, abs/1906.10586, 2019.

\bibitem[Olivares et~al.(2022{\natexlab{a}})Olivares, Challu, Marcjasz, Weron,
  and Dubrawski]{olivares2022nbeatsx}
Kin~G. Olivares, Cristian Challu, Grzegorz Marcjasz, Rafał Weron, and Artur
  Dubrawski.
\newblock {Neural basis expansion analysis with exogenous variables:
  Forecasting electricity prices with NBEATSx}.
\newblock \emph{International Journal of Forecasting}, 2022{\natexlab{a}}.
\newblock ISSN 0169-2070.
\newblock \doi{https://doi.org/10.1016/j.ijforecast.2022.03.001}.
\newblock URL
  \url{https://www.sciencedirect.com/science/article/pii/S0169207022000413}.

\bibitem[Olivares et~al.(2022{\natexlab{b}})Olivares, Challú, Garza, Canseco,
  and Dubrawski]{olivares2022library_neuralforecast}
Kin~G. Olivares, Cristian Challú, Federico Garza, Max~Mergenthaler Canseco,
  and Artur Dubrawski.
\newblock {NeuralForecast}: User friendly state-of-the-art neural forecasting
  models.
\newblock {PyCon} Salt Lake City, Utah, US 2022, 2022{\natexlab{b}}.
\newblock URL \url{https://github.com/Nixtla/neuralforecast}.

\bibitem[Olivares et~al.(2022{\natexlab{c}})Olivares, Garza, Luo, Challú,
  Mergenthaler, {Ben Taieb}, Wickramasuriya, and
  Dubrawski]{olivares2022hierarchicalforecast}
Kin~G. Olivares, Federico Garza, David Luo, Cristian Challú, Max Mergenthaler,
  Souhaib {Ben Taieb}, Shanika~L. Wickramasuriya, and Artur Dubrawski.
\newblock {HierarchicalForecast}: A reference framework for hierarchical
  forecasting in python.
\newblock \emph{Work in progress paper, submitted to Journal of Machine
  Learning Research.}, abs/2207.03517, 2022{\natexlab{c}}.
\newblock URL \url{https://arxiv.org/abs/2207.03517}.

\bibitem[Olivares et~al.(2023)Olivares, Meetei, Ma, Reddy, Cao, and
  Dicker]{Olivares2021ProbabilisticHF}
Kin~G. Olivares, O.~Nganba Meetei, Ruijun Ma, Rohan Reddy, Mengfei Cao, and Lee
  Dicker.
\newblock Probabilistic hierarchical forecasting with deep poisson mixtures.
\newblock \emph{International Journal of Forecasting}, 2023.
\newblock URL
  \url{https://www.sciencedirect.com/science/article/pii/S0169207023000432}.

\bibitem[Oreshkin et~al.(2020)Oreshkin, Carpov, Chapados, and
  Bengio]{oreshkin2020nbeats}
Boris~N. Oreshkin, Dmitri Carpov, Nicolas Chapados, and Yoshua Bengio.
\newblock {N-BEATS:} neural basis expansion analysis for interpretable time
  series forecasting.
\newblock In \emph{8th International Conference on Learning Representations,
  {ICLR} 2020}, 2020.
\newblock URL \url{https://openreview.net/forum?id=r1ecqn4YwB}.

\bibitem[Oreshkin et~al.(2021)Oreshkin, Amini, Coyle, and
  Coates]{oreshkin2021fcgaga}
Boris~N. Oreshkin, Arezou Amini, Lucy Coyle, and Mark~J. Coates.
\newblock {FC-GAGA}: Fully connected gated graph architecture for
  spatio-temporal forecasting.
\newblock In \emph{The Association for the Advancement of Artificial
  Intelligence Conference 2021 (AAAI 2021)}, 2021.
\newblock URL \url{https://arxiv.org/abs/2007.15531}.

\bibitem[Panagiotelis et~al.(2023)Panagiotelis, Gamakumara, Athanasopoulos, and
  Hyndman]{panagiotelis2023probabilistic}
Anastasios Panagiotelis, Puwasala Gamakumara, George Athanasopoulos, and Rob~J
  Hyndman.
\newblock Probabilistic forecast reconciliation: Properties, evaluation and
  score optimisation.
\newblock \emph{European Journal of Operational Research}, 306\penalty0
  (2):\penalty0 693--706, 2023.

\bibitem[Paria et~al.(2021)Paria, Sen, Ahmed, and
  Das]{Paria2021HierarchicallyRD}
Biswajit Paria, Rajat Sen, Amr Ahmed, and Abhimanyu Das.
\newblock {Hierarchically Regularized Deep Forecasting}.
\newblock In \emph{Submitted to Proceedings of the 39th International
  Conference on Machine Learning}. PMLR. Working Paper version available at
  arXiv:2106.07630, 2021.

\bibitem[Paszke et~al.(2019)Paszke, Gross, Massa, Lerer, Bradbury, Chanan,
  Killeen, Lin, Gimelshein, Antiga, Desmaison, Kopf, Yang, DeVito, Raison,
  Tejani, Chilamkurthy, Steiner, Fang, Bai, and Chintala]{Pytorch}
Adam Paszke, Sam Gross, Francisco Massa, Adam Lerer, James Bradbury, Gregory
  Chanan, Trevor Killeen, Zeming Lin, Natalia Gimelshein, Luca Antiga, Alban
  Desmaison, Andreas Kopf, Edward Yang, Zachary DeVito, Martin Raison, Alykhan
  Tejani, Sasank Chilamkurthy, Benoit Steiner, Lu~Fang, Junjie Bai, and Soumith
  Chintala.
\newblock Pytorch: An imperative style, high-performance deep learning library.
\newblock In \emph{Advances in Neural Information Processing Systems 32}, pp.\
  8024--8035. Curran Associates, Inc., 2019.
\newblock URL
  \url{http://papers.neurips.cc/paper/9015-pytorch-an-imperative-style-high-performance-deep-learning-library.pdf}.

\bibitem[Rangapuram et~al.(2021)Rangapuram, Werner, Benidis, Mercado, Gasthaus,
  and Januschowski]{rangapuram2021end}
Syama~Sundar Rangapuram, Lucien~D Werner, Konstantinos Benidis, Pedro Mercado,
  Jan Gasthaus, and Tim Januschowski.
\newblock End-to-end learning of coherent probabilistic forecasts for
  hierarchical time series.
\newblock In \emph{International Conference on Machine Learning}, pp.\
  8832--8843. PMLR, 2021.

\bibitem[Ruiz et~al.(2016)Ruiz, Titsias, and Blei]{Ruiz2016TheGR}
Francisco J.~R. Ruiz, Michalis~K. Titsias, and David~M. Blei.
\newblock The generalized reparameterization gradient.
\newblock In \emph{NIPS}, 2016.

\bibitem[Sak et~al.(2014)Sak, Senior, and Beaufays]{sak2014lstm}
Hasim Sak, Andrew~W. Senior, and Fran{\c{c}}oise Beaufays.
\newblock Long short-term memory based recurrent neural network architectures
  for large vocabulary speech recognition.
\newblock \emph{Computing Research Repository}, abs/1402.1128, 2014.
\newblock URL \url{http://arxiv.org/abs/1402.1128}.

\bibitem[Savorgnan et~al.(2024)Savorgnan, Guo, Vincent, Gopalsamy, and
  Shirley]{savorgnan2024craters_architecture}
Riccardo Savorgnan, Chao Guo, Quenneville-Belair Vincent, Rahul Gopalsamy, and
  Kenny Shirley.
\newblock Cross region attention for regional forecasting.
\newblock Demand Forecasting Team, Supply Chain Optimization Technologies
  (SCOT). Amazon Machine Learning Conference, 2024.
\newblock URL \url{https://w.amazon.com/bin/view/Forecasting/Delphi}.

\bibitem[Thorarinsdottir \& Gneiting(2009)Thorarinsdottir and
  Gneiting]{thorarinsdottir2010truncated_normal}
Thordis~L. Thorarinsdottir and Tilmann Gneiting.
\newblock {Probabilistic Forecasts of Wind Speed: Ensemble Model Output
  Statistics by using Heteroscedastic Censored Regression}.
\newblock \emph{Journal of the Royal Statistical Society Series A: Statistics
  in Society}, 173\penalty0 (2):\penalty0 371--388, 11 2009.
\newblock ISSN 0964-1998.
\newblock \doi{10.1111/j.1467-985X.2009.00616.x}.
\newblock URL \url{https://doi.org/10.1111/j.1467-985X.2009.00616.x}.

\bibitem[{Tourism Australia, Canberra}(2005)]{canberra2005tourismS_dataset}
{Tourism Australia, Canberra}.
\newblock Tourism {R}esearch {A}ustralia (2005), {T}ravel by {A}ustralians.
\newblock
  \url{https://www.kaggle.com/luisblanche/quarterly-tourism-in-australia/},
  2005.

\bibitem[Umagami et~al.(2023)Umagami, Ono, Mukuta, and
  Harada]{umagami2023hiperformerhierarchicallypermutationequivarianttransformer}
Ryo Umagami, Yu~Ono, Yusuke Mukuta, and Tatsuya Harada.
\newblock Hiperformer: Hierarchically permutation-equivariant transformer for
  time series forecasting, 2023.
\newblock URL \url{https://arxiv.org/abs/2305.08073}.

\bibitem[Vitullo(2011)]{Vitullo2011DisaggregatingTS}
Steven~R. Vitullo.
\newblock Disaggregating time series data for energy consumption by aggregate
  and individual customer.
\newblock \emph{Department of Electrical and Computer Engineering, Ph. D.
  Dissertation.}, 2011.

\bibitem[Wang et~al.(2019)Wang, Smola, Maddix, Gasthaus, Foster, and
  Januschowski]{wang2019deepfactor}
Yuyang Wang, Alex Smola, Danielle Maddix, Jan Gasthaus, Dean Foster, and Tim
  Januschowski.
\newblock Deep factors for forecasting.
\newblock In Kamalika Chaudhuri and Ruslan Salakhutdinov (eds.),
  \emph{Proceedings of the 36th International Conference on Machine Learning},
  volume~97 of \emph{Proceedings of Machine Learning Research}, pp.\
  6607--6617. PMLR, 09--15 Jun 2019.
\newblock URL \url{https://proceedings.mlr.press/v97/wang19k.html}.

\bibitem[Wen et~al.(2017)Wen, Torkkola, Narayanaswamy, and Madeka]{Wen2017AMQ}
Ruofeng Wen, Kari Torkkola, Balakrishnan Narayanaswamy, and Dhruv Madeka.
\newblock A {M}ulti-horizon {Q}uantile {R}ecurrent {F}orecaster.
\newblock In \emph{31st Conference on Neural Information Processing Systems
  {NIPS} 2017, Time Series Workshop}, 2017.
\newblock URL \url{https://arxiv.org/abs/1711.11053}.

\bibitem[Wickramasuriya(2023)]{wickramasuriya2023probabilistic_gaussian}
Shanika~L. Wickramasuriya.
\newblock {Probabilistic forecast reconciliation under the Gaussian framework}.
\newblock \emph{Accepted at Journal of Business and Economic Statistics}, 2023.

\bibitem[Wickramasuriya et~al.(2019)Wickramasuriya, Athanasopoulos, and
  Hyndman]{Wickramasuriya2019ReconciliationTrace}
Shanika~L. Wickramasuriya, George Athanasopoulos, and Rob~J. Hyndman.
\newblock Optimal forecast reconciliation for hierarchical and grouped time
  series through trace minimization.
\newblock \emph{Journal of the American Statistical Association}, 114\penalty0
  (526):\penalty0 804--819, 2019.

\bibitem[Yao et~al.(2007)Yao, Rosasco, and Andrea]{yuan2007early_stopping}
Yuan Yao, Lorenzo Rosasco, and Caponneto Andrea.
\newblock On early stopping in gradient descent learning.
\newblock \emph{Constructive Approximation}, 26(2):\penalty0 289--315, 2007.
\newblock URL \url{https://doi.org/10.1007/s00365-006-0663-2}.

\bibitem[Zamo \& Naveau(2018)Zamo and Naveau]{zamo2018crps_estimators}
Micha{\"e}l Zamo and Philippe Naveau.
\newblock {Estimation of the Continuous Ranked Probability Score with Limited
  Information and Applications to Ensemble Weather Forecasts}.
\newblock \emph{{Mathematical Geosciences}}, 50\penalty0 (2):\penalty0
  209--234, February 2018.
\newblock \doi{10.1007/s11004-017-9709-7}.
\newblock URL \url{https://hal.science/hal-02976423}.

\end{thebibliography}

\newpage
\appendix
\onecolumn
\section{Multivariate Factor Model Coherence and Covariance}

\label{sec:factor_model_properties}

\subsection{Coherent Aggregation Properties}

The coherence of \ours is a special case of the bootstrap sample reconciliation technique~\citep{panagiotelis2023probabilistic}, as explored by \cite{Olivares2021ProbabilisticHF}.

\begin{lemma}
\label{def:probabilistic_coherence_lemma} 
Let $(\Omega_{[b]}, \mathcal{F}_{[b]}, \mathbb{P}_{[b]})$ be a probabilistic forecast space with $\mathcal{F}_{[b]}$ a $\sigma$-algebra in $\Omega_{[b]}$. 
If a forecast distribution $\mathbb{P}_{[i]}$ assigns a zero probability to sets that do not contain coherent forecasts, it defines a coherent probabilistic forecast space $(\Omega_{[i]}, \mathcal{F}_{[i]}, \mathbb{P}_{[i]})$ with $\Omega_{[i]} = \mathbf{S}_{[i][b]}(\Omega_{[b]})$.
\begin{equation}
    \mathbb{P}_{[a]}\left(\mathbf{y}_{[a]} \notin \mathbf{A}_{[a][b]}(\mathcal{B})\;|\;\mathcal{B}\right) = 0 \implies
    \mathbb{P}_{[i]}\left(\mathbf{S}_{[i][b]}(\mathcal{B})\right) = \mathbb{P}_{[b]}\left(\mathcal{B}\right)
    \quad \forall \mathcal{B} \in \mathcal{F}_{[b]} .
\end{equation}
\end{lemma}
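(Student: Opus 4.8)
The statement to prove is essentially a tautological consequence of how one builds a pushforward measure, so the plan is to make the construction explicit and verify the one nontrivial point: that the pushforward is well-defined on the stated $\sigma$-algebra. First I would define the candidate coherent space directly. Set $\Omega_{[i]} = \mathbf{S}_{[i][b]}(\Omega_{[b]})$, let $\mathcal{F}_{[i]}$ be the $\sigma$-algebra generated by images $\mathbf{S}_{[i][b]}(\mathcal{B})$ for $\mathcal{B} \in \mathcal{F}_{[b]}$ (equivalently, since $\mathbf{S}_{[i][b]}$ restricted to $\Omega_{[b]}$ is injective—its first block contains the identity $\mathbf{I}_{[b][b]}$—the pushforward $\sigma$-algebra coincides with $\{\mathbf{S}_{[i][b]}(\mathcal{B}) : \mathcal{B}\in\mathcal{F}_{[b]}\}$), and define $\mathbb{P}_{[i]}$ as the pushforward $(\mathbf{S}_{[i][b]})_\# \mathbb{P}_{[b]}$, i.e. $\mathbb{P}_{[i]}(E) = \mathbb{P}_{[b]}(\mathbf{S}_{[i][b]}^{-1}(E))$. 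The key observation driving the whole argument is that injectivity of $\mathbf{S}_{[i][b]}$ on $\Omega_{[b]}$ makes $\mathbf{S}_{[i][b]}^{-1}(\mathbf{S}_{[i][b]}(\mathcal{B})) = \mathcal{B}$, so immediately $\mathbb{P}_{[i]}(\mathbf{S}_{[i][b]}(\mathcal{B})) = \mathbb{P}_{[b]}(\mathcal{B})$, which is the desired identity.

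Second, I would connect this to the hypothesis about zero probability on non-coherent sets. The hypothesis $\mathbb{P}_{[a]}(\mathbf{y}_{[a]} \notin \mathbf{A}_{[a][b]}(\mathcal{B}) \mid \mathcal{B}) = 0$ says that, conditioned on the bottom-level sample lying in $\mathcal{B}$, the aggregate components are almost surely the correct deterministic function of the bottom components. This is precisely what guarantees that any forecast distribution $\mathbb{P}_{[i]}$ satisfying it must be supported on the graph $\{(\mathbf{A}_{[a][b]}\mathbf{y}_{[b]}, \mathbf{y}_{[b]}) : \mathbf{y}_{[b]} \in \Omega_{[b]}\} = \mathbf{S}_{[i][b]}(\Omega_{[b]})$, and that its bottom-level marginal being $\mathbb{P}_{[b]}$ pins it down uniquely as the pushforward above. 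So I would argue: (i) the support condition forces $\mathbb{P}_{[i]}$ to factor through $\mathbf{S}_{[i][b]}$; (ii) the marginal-consistency with $\mathbb{P}_{[b]}$ then forces $\mathbb{P}_{[i]} = (\mathbf{S}_{[i][b]})_\#\mathbb{P}_{[b]}$; (iii) conclude via the displayed identity that the coherence equation of Definition~\ref{def:coherent_aggregation} holds, so $(\Omega_{[i]}, \mathcal{F}_{[i]}, \mathbb{P}_{[i]})$ is a coherent forecast space.

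Third, I would handle the measure-theoretic housekeeping: checking that $\mathbf{S}_{[i][b]}(\mathcal{B})$ is genuinely in $\mathcal{F}_{[i]}$ (immediate from the definition of the generated/pushforward $\sigma$-algebra) and that $\mathbb{P}_{[i]}$ is a bona fide probability measure (countable additivity is inherited from $\mathbb{P}_{[b]}$ through the preimage map, which commutes with countable unions and complements; total mass one follows from $\mathbf{S}_{[i][b]}^{-1}(\Omega_{[i]}) = \Omega_{[b]}$). I would also note that since $\mathbf{S}_{[i][b]}$ is linear and continuous, images and preimages of Borel sets behave well, so no pathology arises on the relevant spaces.

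\textbf{Main obstacle.} The only real subtlety—everything else is bookkeeping—is justifying that $\mathbf{S}_{[i][b]}$ restricted to $\Omega_{[b]}$ is injective with a measurable inverse on its image, so that "pushing forward'' and the set identity $\mathbf{S}_{[i][b]}^{-1}(\mathbf{S}_{[i][b]}(\mathcal{B})) = \mathcal{B}$ are both legitimate; injectivity is clear from the $\mathbf{I}_{[b][b]}$ block, and measurability of the inverse follows because that block is a coordinate projection, but this is the point a careful write-up must not skip. A secondary, more conceptual point worth stating clearly is exactly how the hypothesis "$\mathbb{P}_{[i]}$ assigns zero probability to non-coherent sets'' is being used: it is what rules out any other extension of $\mathbb{P}_{[b]}$ to the larger index set, and I would phrase step (i)–(ii) above carefully so the implication $\Longrightarrow$ in the lemma is visibly an equality of measures and not merely an inequality.
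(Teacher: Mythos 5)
Your route is viable but genuinely different from the paper's. The paper proves the displayed identity directly, set by set: it rewrites $\mathbb{P}_{[i]}\left(\mathbf{S}_{[i][b]}(\mathcal{B})\right)$ as the probability of the intersection of the two cylinder events (aggregate coordinates in $\mathbf{A}_{[a][b]}(\mathcal{B})$, bottom coordinates in $\mathcal{B}$) --- an identification whose excess over the image consists only of non-coherent points --- then applies the multiplication rule $\mathbb{P}_{[i]}(E\cap F)=\mathbb{P}_{[a]}(E\mid F)\,\mathbb{P}_{[b]}(F)$ and uses the hypothesis to set the conditional factor equal to one. No pushforward construction, no support statement, and no identification of $\mathbb{P}_{[i]}$ are needed. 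Your plan instead characterizes $\mathbb{P}_{[i]}$ globally as the pushforward of $\mathbb{P}_{[b]}$ under $\mathbf{S}_{[i][b]}$ and reads off the identity from injectivity of the $\mathbf{I}_{[b][b]}$ block; your observation that $\mathbf{S}_{[i][b]}^{-1}\left(\mathbf{S}_{[i][b]}(\mathcal{B})\right)=\mathcal{B}$ is exactly right, and your route buys a uniqueness statement the paper never claims, at the cost of proving something stronger than the lemma requires.

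The one genuine gap is your step (i). The hypothesis, applied to a single $\mathcal{B}$, does not say that conditionally on $\mathbf{y}_{[b]}\in\mathcal{B}$ the aggregate is a.s.\ the image of the \emph{realized} bottom sample; it only says $\mathbf{y}_{[a]}\in\mathbf{A}_{[a][b]}(\mathcal{B})$, i.e.\ the image of \emph{some} point of $\mathcal{B}$. So concentration on the graph $\{(\mathbf{A}_{[a][b]}\mathbf{y}_{[b]},\mathbf{y}_{[b]})\}$ is not immediate from one application of the hypothesis, and as written your plan asserts it rather than derives it. It can be recovered: partition $\Omega_{[b]}$ into countably many Borel cells of diameter at most $\epsilon$; on each positive-probability cell the hypothesis plus linearity of $\mathbf{A}_{[a][b]}$ give $\lVert \mathbf{y}_{[a]}-\mathbf{A}_{[a][b]}\mathbf{y}_{[b]}\rVert\le \lVert \mathbf{A}_{[a][b]}\rVert\,\epsilon$ a.s.; then let $\epsilon\to 0$. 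Either include such an argument or bypass the support claim entirely and argue per-$\mathcal{B}$ as the paper does. Finally, make explicit the assumption both proofs use implicitly: the bottom-coordinate marginal of $\mathbb{P}_{[i]}$ is $\mathbb{P}_{[b]}$ (and the conditional $\mathbb{P}_{[a]}(\cdot\mid\mathcal{B})$ is taken under $\mathbb{P}_{[i]}$); without it neither your step (ii) nor the paper's multiplication-rule step is meaningful.
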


\begin{proof} 
We note the following:
\begin{align*}
    \mathbb{P}_{[i]}\left(\mathbf{S}_{[i][b]} (\mathcal{B}) \right)
    &= 
    \mathbb{P}_{[i]}\left(
        \begin{bmatrix}
        \mathbf{A}_{[a][b]}\\
        \mathbf{I}_{[b][b]}
        \end{bmatrix}(\mathcal{B})
    \right) =
    \mathbb{P}_{[i]}\left(
        \{
        \begin{bmatrix}
        \mathbf{A}_{[a][b]}(\mathcal{B})\\
        \mathbb{R}^{N_{b}}
        \end{bmatrix}\} 
        \cap
        \{
        \begin{bmatrix}
        \mathbb{R}^{N_{a}}\\
        \mathcal{B}
        \end{bmatrix}\}         
    \right) \\
    &= 
    \mathbb{P}_{[a]}\left(
    \mathbf{A}_{[a][b]}(\mathcal{B})\;|\;\mathcal{B}
    \right)
    \mathbb{P}_{[b]}\left(\mathcal{B}\right) =
    (1 - \mathbb{P}_{[a]}\left(\mathbf{y}_{[a]} \notin \mathbf{A}_{[a][b]}(\mathcal{B})\;|\;\mathcal{B}\right))\times \mathbb{P}_{[b]}\left(\mathcal{B}\right) = 
    \mathbb{P}_{[b]}\left(\mathcal{B}\right) .
\end{align*}
The first equality is the image of a set $\mathcal{B} \in \Omega_{[b]}$ corresponding to the constraints matrix transformation, the second equality defines the spanned space as a subspace intersection of the aggregate series and the bottom series, the third equality uses the conditional probability multiplication rule, and the final equality uses the zero probability assumption.

By construction of the samples of our model $\mathbf{\tilde{y}}_{[i]} = \mathbf{S}_{[i][b]}\left(\mathbf{\hat{y}}_{[b]}\right)_{+}$ and $\mathbf{\tilde{y}}_{[a]} = \mathbf{A}_{[a][b]}\left(\mathbf{\hat{y}}_{[b]}\right)_{+}$, satisfying the assumptions of the lemma and proving the coherence of our approach.

\end{proof}

\subsection{Covariance Structure}

Here we prove the covariance structure of our factor model introduced in Section~\ref{sec:probabilistic_model}.
\begin{lemma}
\label{def:covariance_lemma}
Let our factor model be defined by
\begin{eqnarray}
    \mathbf{\hat{y}}_{[b],\eta,t} =
    \boldsymbol{\hat{\mu}}_{[b],\eta,t} +
    \mathrm{Diag}(\boldsymbol{\hat{\sigma}}_{[b],\eta,t}) \mathbf{z}_{[b],\eta,t} + 
    \hat{\mathbf{F}}_{[b][k],\eta,t} \boldsymbol{\epsilon}_{[k],\eta,t}, \qquad
    \eta = 1,\cdots,N_h ,
     \label{eq:gaussian_factor_model2}
\end{eqnarray}

with independent factors $\mathbf{z}_{[b],\eta} \sim \mathcal{N}(\mathbf{0}_{[b]}, \mathbf{I}_{[b][b]})$, and $\boldsymbol{\epsilon}_{[k],\eta}\sim \mathcal{N}(\mathbf{0}_{[k]},\mathbf{I}_{[k][k]})$, its covariance satisfies
\begin{equation}
\mathrm{Cov}\left(\mathbf{\hat{y}}_{[b],\eta,t}\right) =
    \mathrm{Diag}(\boldsymbol{\hat{\sigma}}^{2}_{[b],\eta,t}) + \hat{\mathbf{F}}_{[b][k],\eta,t} \hat{\mathbf{F}}_{[b][k],\eta,t} \tran.
\end{equation}
\end{lemma}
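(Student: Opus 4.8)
The plan is to compute the covariance directly from the definition in \eqref{eq:gaussian_factor_model2}, exploiting linearity of covariance and the independence of the two noise sources $\mathbf{z}_{[b],\eta,t}$ and $\boldsymbol{\epsilon}_{[k],\eta,t}$. First I would observe that the term $\boldsymbol{\hat{\mu}}_{[b],\eta,t}$ is deterministic (it is a fixed output of the network given the covariates), so it contributes nothing to the covariance and we may restrict attention to the centered random part $\mathrm{Diag}(\boldsymbol{\hat{\sigma}}_{[b],\eta,t})\mathbf{z}_{[b],\eta,t} + \hat{\mathbf{F}}_{[b][k],\eta,t}\boldsymbol{\epsilon}_{[k],\eta,t}$.

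Next I would use the bilinearity of $\mathrm{Cov}$ to expand the covariance of this sum into four terms: $\mathrm{Cov}$ of the $\mathbf{z}$-term with itself, $\mathrm{Cov}$ of the $\boldsymbol{\epsilon}$-term with itself, and the two cross terms. For a fixed deterministic matrix $\mathbf{M}$ and random vector $\mathbf{v}$ with $\mathrm{Cov}(\mathbf{v}) = \mathbf{\Sigma}$ we have $\mathrm{Cov}(\mathbf{M}\mathbf{v}) = \mathbf{M}\mathbf{\Sigma}\mathbf{M}\tran$; applying this with $\mathbf{\Sigma} = \mathbf{I}$ gives $\mathrm{Cov}(\mathrm{Diag}(\boldsymbol{\hat{\sigma}}_{[b],\eta,t})\mathbf{z}_{[b],\eta,t}) = \mathrm{Diag}(\boldsymbol{\hat{\sigma}}_{[b],\eta,t})\mathbf{I}_{[b][b]}\mathrm{Diag}(\boldsymbol{\hat{\sigma}}_{[b],\eta,t})\tran = \mathrm{Diag}(\boldsymbol{\hat{\sigma}}^2_{[b],\eta,t})$, and $\mathrm{Cov}(\hat{\mathbf{F}}_{[b][k],\eta,t}\boldsymbol{\epsilon}_{[k],\eta,t}) = \hat{\mathbf{F}}_{[b][k],\eta,t}\mathbf{I}_{[k][k]}\hat{\mathbf{F}}_{[b][k],\eta,t}\tran = \hat{\mathbf{F}}_{[b][k],\eta,t}\hat{\mathbf{F}}_{[b][k],\eta,t}\tran$. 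The cross terms vanish because $\mathbf{z}_{[b],\eta,t}$ and $\boldsymbol{\epsilon}_{[k],\eta,t}$ are independent and each is centered, so $\mathrm{Cov}(\mathrm{Diag}(\boldsymbol{\hat{\sigma}}_{[b],\eta,t})\mathbf{z}_{[b],\eta,t}, \hat{\mathbf{F}}_{[b][k],\eta,t}\boldsymbol{\epsilon}_{[k],\eta,t}) = \mathrm{Diag}(\boldsymbol{\hat{\sigma}}_{[b],\eta,t})\,\mathbb{E}[\mathbf{z}_{[b],\eta,t}\boldsymbol{\epsilon}_{[k],\eta,t}\tran]\,\hat{\mathbf{F}}_{[b][k],\eta,t}\tran = \mathbf{0}$. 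Summing the surviving terms yields the claimed identity.

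This proof is essentially a routine computation; there is no real obstacle. The only point requiring a small amount of care is being explicit that the factor model parameters $\boldsymbol{\hat{\mu}}_{[b],\eta,t}$, $\boldsymbol{\hat{\sigma}}_{[b],\eta,t}$, and $\hat{\mathbf{F}}_{[b][k],\eta,t}$ are treated as fixed (conditioned on the covariates) when computing this conditional covariance, so that the pushforward-of-covariance rule $\mathrm{Cov}(\mathbf{M}\mathbf{v}) = \mathbf{M}\mathrm{Cov}(\mathbf{v})\mathbf{M}\tran$ applies with $\mathbf{M}$ deterministic. I would state this assumption once at the start and then proceed with the three-line calculation.
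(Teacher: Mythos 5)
Your proposal is correct and follows essentially the same route as the paper's proof: expand the covariance of the centered sum by bilinearity, use independence of $\mathbf{z}_{[b],\eta,t}$ and $\boldsymbol{\epsilon}_{[k],\eta,t}$ to drop the cross term, and apply $\mathrm{Cov}(\mathbf{M}\mathbf{v}) = \mathbf{M}\,\mathrm{Cov}(\mathbf{v})\,\mathbf{M}\tran$ with identity covariances to obtain $\mathrm{Diag}(\boldsymbol{\hat{\sigma}}^{2}_{[b],\eta,t}) + \hat{\mathbf{F}}_{[b][k],\eta,t}\hat{\mathbf{F}}_{[b][k],\eta,t}\tran$. Your explicit remark that the parameters are treated as fixed given the covariates is a harmless (and welcome) clarification that the paper leaves implicit.
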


\begin{proof}
First, we observe that
\begin{equation}
\begin{aligned}
\mathrm{Cov}\left(\mathbf{\hat{y}}_{[b],\eta,t}, \mathbf{\hat{y}}_{[b],\eta,t}\right)
&= 
\mathrm{Cov}\left(\mathrm{Diag}(\boldsymbol{\hat{\sigma}}_{[b],\eta,t}) \mathbf{z}_{[b],\eta,t}, \mathrm{Diag}(\boldsymbol{\hat{\sigma}}_{[b],\eta,t}) \mathbf{z}_{[b],\eta,t}\right)  \\
&+ \quad 2 \mathrm{Cov}\left(\mathrm{Diag}(\boldsymbol{\hat{\sigma}}_{[b],\eta,t}) \mathbf{z}_{[b],\eta,t}, \hat{\mathbf{F}}_{[b][k],\eta,t} \boldsymbol{\epsilon}_{[k],\eta,t}\right)  \\
&+ \quad\;\, \mathrm{Cov}\left(\hat{\mathbf{F}}_{[b][k],\eta,t} \boldsymbol{\epsilon}_{[k],\eta,t}, \hat{\mathbf{F}}_{[b][k],\eta,t} \boldsymbol{\epsilon}_{[k],\eta,t}\right) .
\end{aligned}
\end{equation}

By bilinearity of covariance and independence of the sampled factors, it follows that
\begin{equation*}
\begin{aligned}
\mathrm{Cov}\left(\mathbf{\hat{y}}_{[b],\eta,t}, \mathbf{\hat{y}}_{[b],\eta,t}\right)
&= 
\mathrm{Diag}(\boldsymbol{\hat{\sigma}}_{[b],\eta,t}) \mathrm{Cov}\left(\mathbf{z}_{[b],\eta,t},  \mathbf{z}_{[b],\eta,t}\right)
\mathrm{Diag}(\boldsymbol{\hat{\sigma}}_{[b],\eta,t}) ^{\intercal} + 
\hat{\mathbf{F}}_{[b][k],\eta,t} 
\mathrm{Cov}\left(\boldsymbol{\epsilon}_{[k],\eta,t},  \boldsymbol{\epsilon}_{[k],\eta,t} \right) \hat{\mathbf{F}}_{[b][k],\eta,t}^{\intercal} .
\end{aligned}
\end{equation*}

We conclude that
\begin{equation}
\begin{aligned}
\mathrm{Cov}\left(\mathbf{\hat{y}}_{[b],\eta,t}, \mathbf{\hat{y}}_{[b],\eta,t}\right)
&= 
\mathrm{Diag}(\boldsymbol{\hat{\sigma}}^{2}_{[b],\eta,t}) +
\hat{\mathbf{F}}_{[b][k],\eta,t} \hat{\mathbf{F}}_{[b][k],\eta,t}^{\intercal} .
\end{aligned}
\end{equation}

\end{proof}

 \clearpage
\section{Code Script for Sampling}
\label{sec:code_sampling}
{
\begin{figure*}[ht]
    \centering
    \begin{python}
    def sample(self, distr_args, window_size, num_samples=None):
        """
        **Parameters**
        `distr_args`: Forecast Distribution arguments.
        `window_size`: int=1, for reconciliation reshapes in sample method.
        `num_samples`: int=500, number of samples for the empirical quantiles.

        **Returns**
        `samples`: tensor, shape [B,H,`num_samples`].
        `quantiles`: tensor, empirical quantiles defined by `levels`.
        """
        means, factor_loading, stds = distr_args
        collapsed_batch, H, _ = means.size()

        # [collapsed_batch,H]:=[B*N*Ws,H,F] -> [B,N,Ws,H,F]
        factor_loading = factor_loading.reshape(
            (-1, self.n_series, window_size, H, self.n_factors)
        ).contiguous()
        factor_loading = torch.einsum("iv,bvwhf->biwhf", self.SP, factor_loading)

        means = means.reshape(-1, self.n_series, window_size, H, 1).contiguous()
        stds = stds.reshape(-1, self.n_series, window_size, H, 1).contiguous()

        # Loading factors for covariance Diag(stds) + F F^t -> (SPF)(SPF^t)
        hidden_factor = Normal(
            loc=torch.zeros(
                (factor_loading.shape[0], window_size, H, self.n_factors),
            scale=1.0)
        sample_factors = hidden_factor.rsample(sample_shape=(self.num_samples,))
        sample_factors = sample_factors.permute(
            (1, 2, 3, 4, 0)
        ).contiguous()  # [n_items, window_size, H, F,num_samples]

        sample_loaded_factors = torch.einsum("bvwhf,bwhfn->bvwhn", factor_loading, sample_factors)
        sample_loaded_means = means + sample_loaded_factors

        # Sample Normal
        normal = Normal(loc=torch.zeros_like(sample_loaded_means), scale=1.0)
        samples = normal.rsample()
        samples = F.relu(sample_loaded_means + stds * samples)

        samples = torch.einsum("iv,bvwhn->biwhn", self.SP, samples)
        samples = samples.reshape(collapsed_batch, H, self.num_samples).contiguous()

        # Compute quantiles and mean
        quantiles_device = self.quantiles.to(means.device)
        quants = torch.quantile(input=samples, q=quantiles_device, dim=-1)
        quants = quants.permute((1, 2, 0))  # [Q,B,H] -> [B,H,Q]
        sample_mean = torch.mean(samples, dim=-1, keepdim=True)
        return samples, sample_mean, quants
    \end{python}
    \caption{PyTorch function for sampling from our Gaussian Factor model. Note that the factor samples are shared across all bottom-level distributions. The samples are differentiable with regard to the function inputs. We can easily adapt this function to sample from other distributions.}
    \label{code:sampling}
\end{figure*}
}
 \clearpage
\begin{table} [ht]
\caption{\ourscomplete \ (\ours) hyperparameters. \\
We use a small or large model configuration depending on the datasets' size.
}
\label{table:model_parameters}
\scriptsize
\centering
    \begin{tabular}{lccc|cccc}
    \toprule
    \textsc{Parameter}                                     & \multicolumn{6}{c}{Considered Values} \\
                                                           & \textsc{\Traffic} & \textsc{\TourismS}  & \textsc{\Wiki}  & \textsc{\Labour}  & \textsc{\TourismL}  & \textsc{\Favorita} \\ \midrule
    Activation Function.                                   & ReLU              & \EDIT{ReLU}         & \EDIT{ReLU}     & \EDIT{ReLU}       & ReLU                & ReLU               \\
    Static Encoder Dimension.                              & 5                 & \EDIT{10}            & \EDIT{5}        & \EDIT{10}         & 20                  & 20                 \\
    Temporal Convolution Channel Size.                     & 10                & \EDIT{10}           & \EDIT{10}       & \EDIT{20}         & 30                  & 30                 \\
    Future Encoder Dimension.                              & 20                & \EDIT{4}           & \EDIT{20}       & \EDIT{20}         & 50                  & 50                 \\
    Horizon Specific Decoder Dimensions.                   & 5                 & \EDIT{5}            & \EDIT{5}        & \EDIT{5}          & 5                   & 5                  \\ 
    Horizon Agnostic Decoder Dimensions.                   & 20                & \EDIT{10}           & \EDIT{20}       & \EDIT{10}         & 20                  & 20                 \\ \midrule
    Factor Model Components.                               & 10                & \EDIT{10}           & \EDIT{20}       & \EDIT{20}         & 10                  & 5                  \\
    Cross Series MLP Hidden Size.                          & 200               & \EDIT{100}          & \EDIT{200}      & \EDIT{0}          & 50                  & 5                  \\ \midrule
    SGD Batch Size.                                        & 1                 & \EDIT{1}            & \EDIT{1}        & \EDIT{1}          & 1                   & 4                  \\
    SGD Effective Batch Size.                              & 207               & \EDIT{89}           & \EDIT{199}      & \EDIT{57}         & 555                 & 744                \\
    SGD Max steps.                                         & 2e3               & \EDIT{1e3}          & \EDIT{2e3}      & \EDIT{2e3}        & 2e3                 & 80e3               \\
    Early Stop Patience steps.                             & 5                 & 5                   & 5               & 4                 & 5                   & -1                 \\
    Learning Rate.                                         & 5e-3              & \EDIT{5e-4}         & \EDIT{5e-4}     & \EDIT{1e-3}       & 5e-4                & 5e-4               \\ \bottomrule
    \end{tabular}
\end{table}



\section{Training Methodology and Hyperparameters}
\label{sec:training_methodology}

Here, we complement and extend the description of our method in Section~\ref{sxn:methods}. 

To avoid information leakage, we perform ablation studies in the validation set preceding the test set, where we explored variants of the probabilistic method, as well as its optimization. We report these ablation studies in the Appendix~\ref{sec:ablation_studies}. For each dataset, given the prediction horizon $h$, the test set is composed of the last $h$ time steps. The validation set is composed of the $h$ time-steps preceding the test set time range. The training set is composed of all dates previous to the validation time-range. When reporting the final accuracy results of our model in the test set, we used the settings that perform the best in the validation set.

We minimally tune the architecture and its parameters varying only its size and the convolution kernel filters to match the seasonalities present in each dataset. For the data set \Favorita, we use dilations of $[1,2,4,8,16,32]$ to match the weekly and monthly seasonalities. For the \TourismL \EDIT{ and \Labour} datasets we use dilations of $[1,2,3,6,12]$ to match the monthly and yearly seasonalities. For the \Traffic dataset we use dilations of $[1,7,14,28]$ as multiples of 7 to match the weekly seasonalities. \EDIT{For the \TourismS dataset we use dilations of $[1,2,4]$ as multiples of 4 to match the quarterly seasonalities. For the \Wiki dataset we use minimal dilations $[1,2]$.}

The selection of the number of factors mostly follows the memory constraints of the GPU, as the effective batch size implied by our probabilistic model grows rapidly as a function of the multivariate series. In the \Favorita dataset, more factors are likely to continue to improve accuracy but with the tradeoff of the computational speed. Similarly, the Cross-series MLP hidden size is selected following the GPU memory constraints.

We share a learning rate of 5e-4 constant across the three datasets, which shows that the method is reasonably robust across different forecasting tasks. During the optimization of the networks we use adaptive moment stochastic gradient descent~\citep{Kingma2014AdamAM} with early stopping~\citep{yuan2007early_stopping} guided by the sCRPS signal measured in the validation set. We use a learning rate scheduler that decimates the learning rate four times during optimization (SGD Maxsteps / 4), to ensure the convergence of the optimization.

The \ours model is implemented using PyTorch~\citep{Pytorch}, with the NeuralForecast library framework~\citep{olivares2022library_neuralforecast}. We run all experiments using a single NVIDIA V100 GPU.

\EDIT{As mentioned earlier in Section~\ref{sec:setting}, statistical methods available in StatsForecast and HierarchicalForecast libraries (Olivares et al., 2022c; Garza et al., 2022). 
In particular, we use available code on the \hyperlink{https://github.com/Nixtla/hierarchicalforecast/tree/main/experiments/hierarchical_baselines}{hierarchical baselines repository} and  \hyperlink{https://github.com/Nixtla/datasetsforecast/blob/main/datasetsforecast/hierarchical.py}{hierarchical datasets repository}. As described in Table~\ref{tab:datasets}, we deviate slightly from the experimental setting in \citet{rangapuram2021end} to ensure the reproducibility of the results of this paper. For Table~\ref{table:empirical_evaluation} we rerun the \HierEtoE~\citep{rangapuram2021end} baseline on \Labour, \Traffic, \Wiki and \TourismS using their best reported hyperparameters.}
 \clearpage
\begin{figure}[t]
    \centering
    \begin{subfigure}[b]{.22\textwidth}
         \centering
         \includegraphics[width=.9\columnwidth]{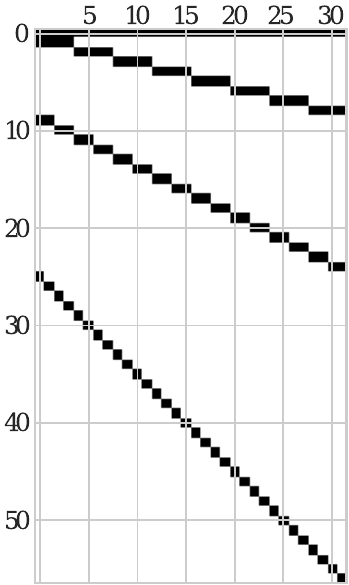}
         \caption{Labour}
    \end{subfigure}
    \begin{subfigure}[b]{.22\textwidth}
         \centering
         \includegraphics[width=.9\columnwidth]{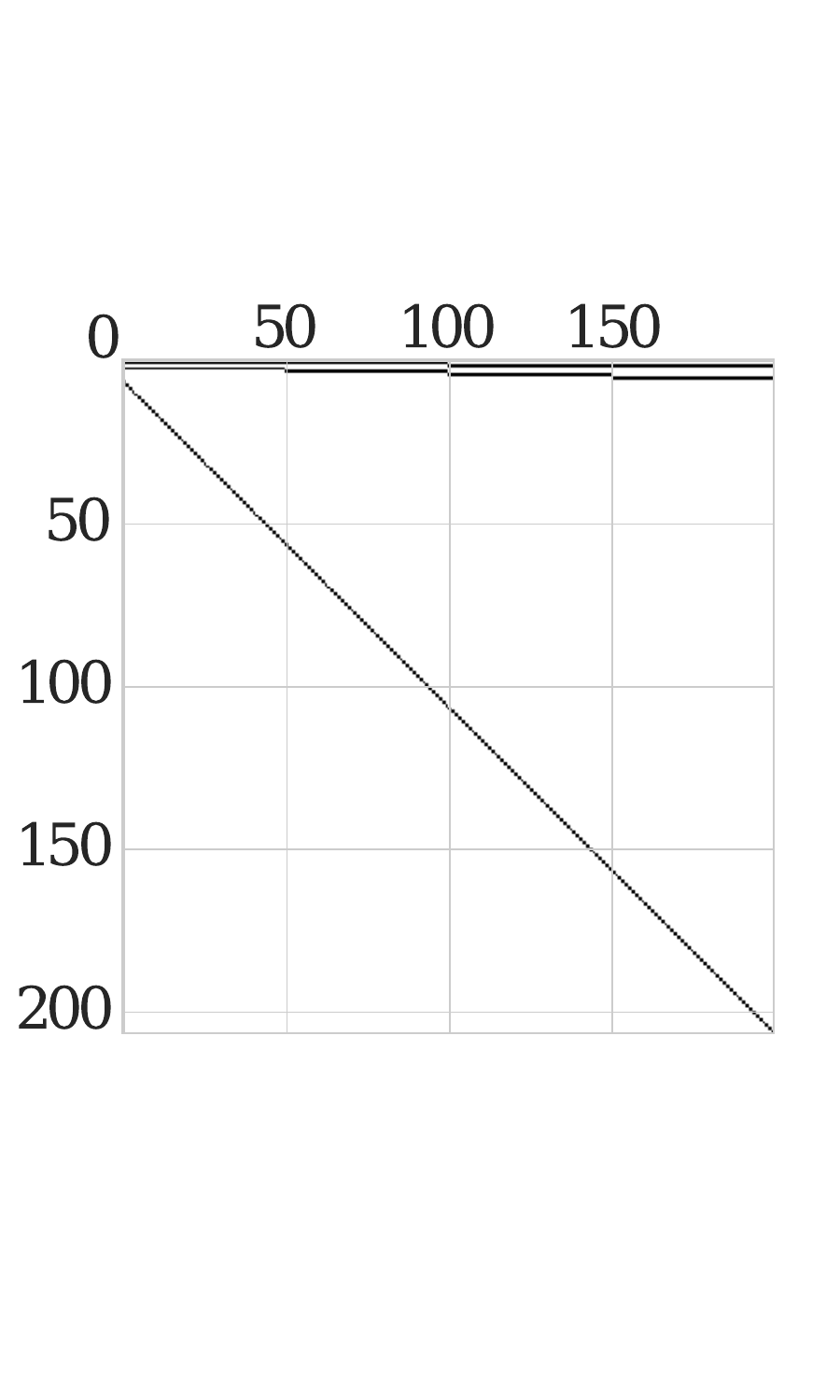}
         \caption{Traffic}
    \end{subfigure}
    \begin{subfigure}[b]{.22\textwidth}
         \centering
         \includegraphics[width=.9\columnwidth]{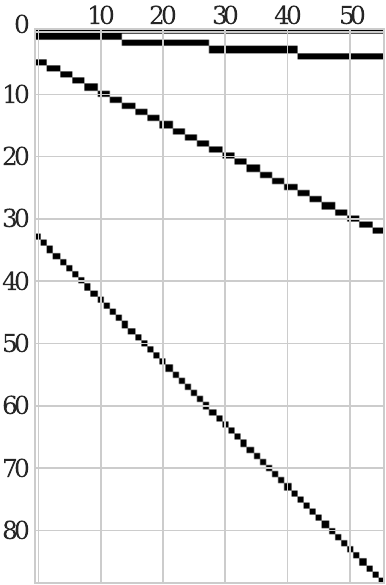}
         \caption{Tourism-S}
    \end{subfigure}
    \begin{subfigure}[b]{.22\textwidth}
         \centering
         \includegraphics[width=.9\columnwidth]{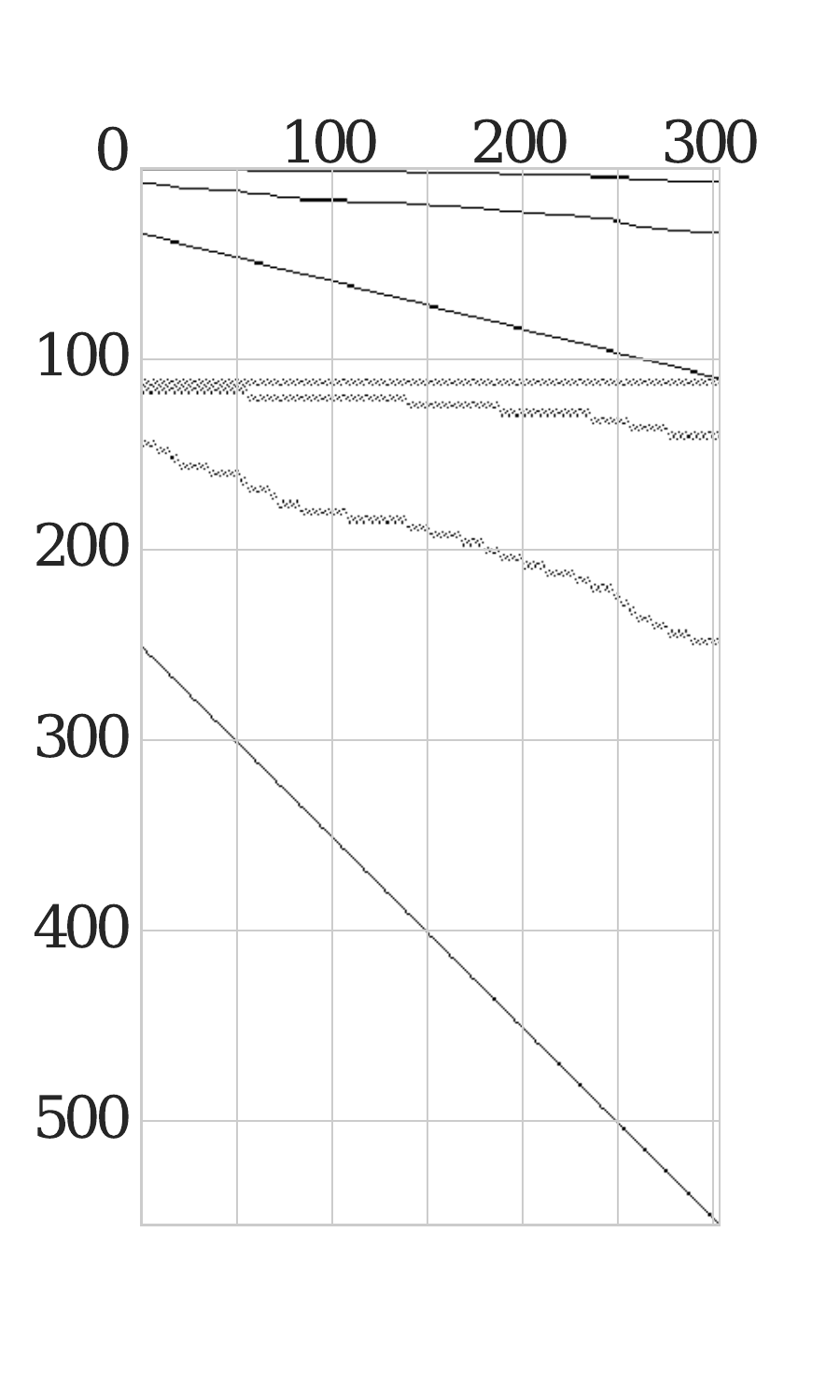}
         \caption{Tourism-L}
    \end{subfigure}
    \begin{subfigure}[b]{.22\textwidth}
         \centering
         \includegraphics[width=.9\columnwidth]{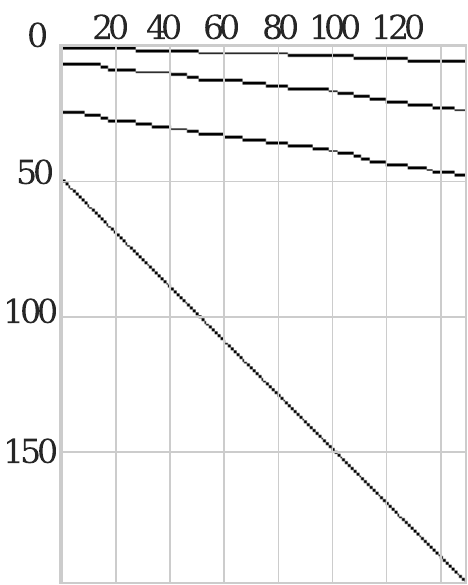}
         \caption{Wiki}
    \end{subfigure}
    \begin{subfigure}[b]{.22\textwidth}
         \centering
         \includegraphics[width=.9\columnwidth]{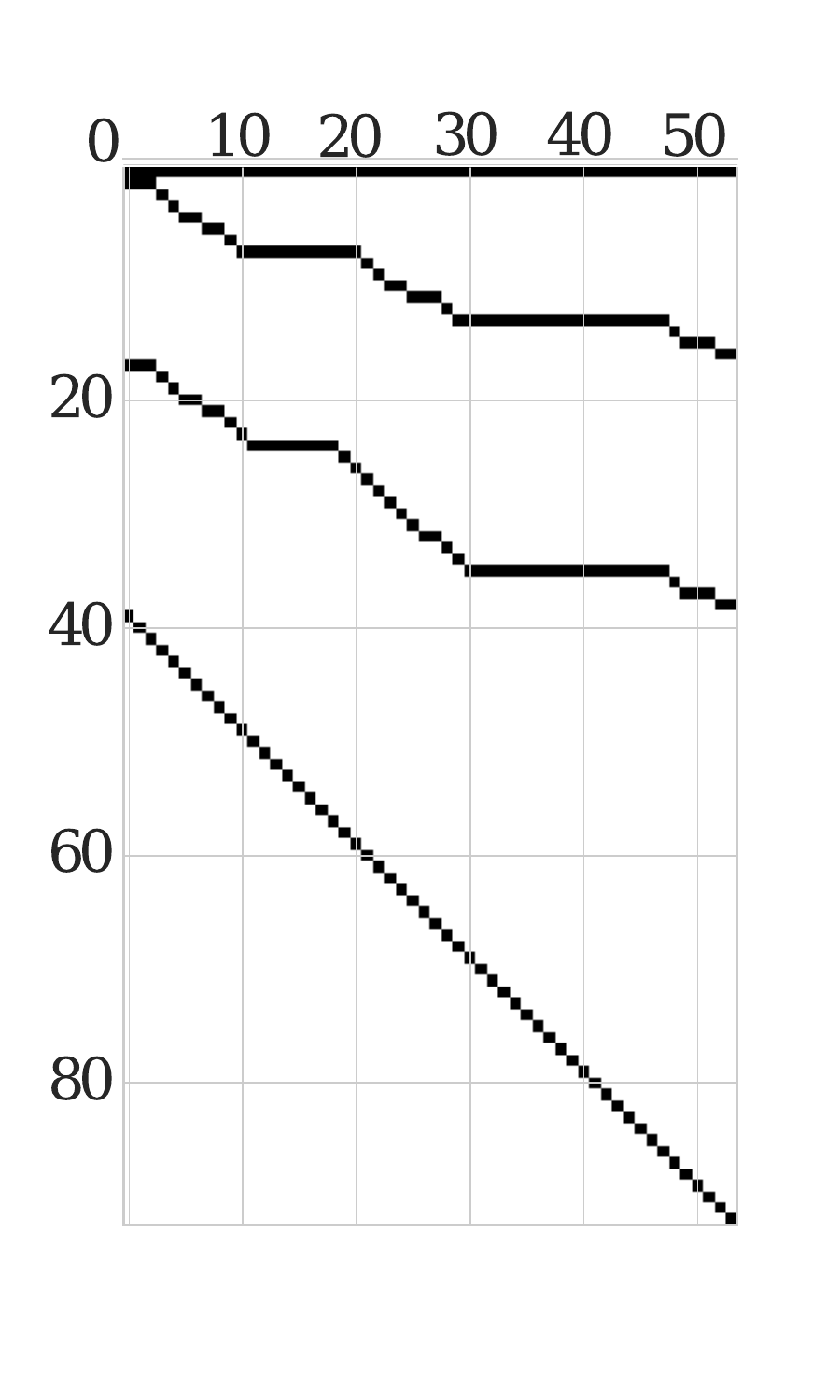}
         \caption{Favorita}
    \end{subfigure}
    \caption{Hierarchical constraints of the empirical evaluation datasets. 
    (a) \Labour\ reports 57 series number of employees by full-time status, gender and geographic levels.
    (b) \Traffic\ organizes the occupancy series of 200 highways into quarters, halves, and totals. 
    (c) \TourismS\ categorizes its 89 regional visit series based on travel purpose, zones, states, and country-level aggregations and urbanization within regions. 
    (d) \TourismL\ categorizes its 555 regional visit series based on travel purpose, zones, states, and country-level geographical aggregations. 
    (e) \Wiki\ groups 150 daily visits to Wikipedia articles by language and article categorical taxonomy.
    (f) \Favorita\ classifies its grocery sales by store, city, state, and country levels. 
    }
    \label{fig:summation_matrices}
\end{figure}
\begin{figure}[ht] 
\centering     
\includegraphics[height=5.5 cm]{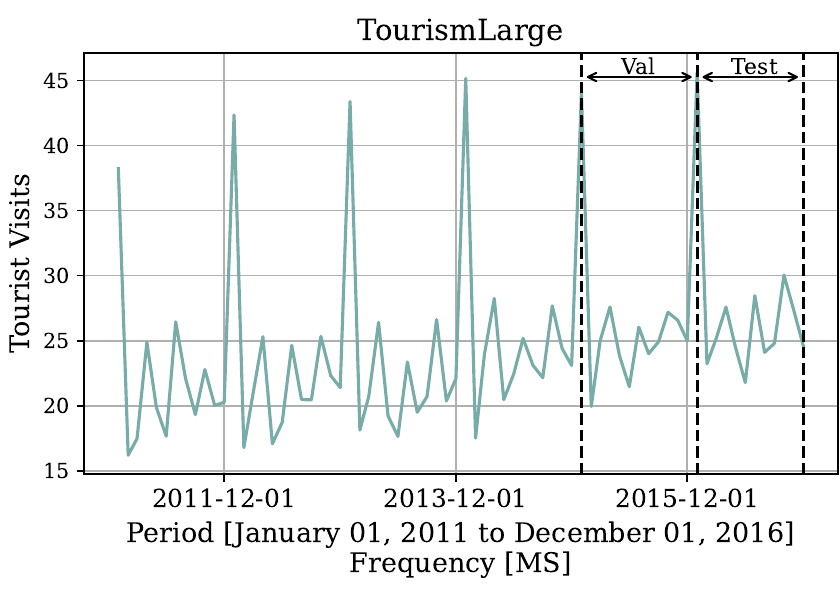}
\caption{\TourismL dataset partition into train, validation, and test sets used in our experiments. All datasets use the last horizon window as defined in Table~\ref{tab:datasets} (marked by the second dotted line), and the previous window preceding the test set as validation (between the first and second dotted lines). Validation  provides the signal for hyperparameter selection and the ablation studies.}
\label{fig:train_methodology}
\end{figure}

\section{Dataset Details}
\label{sec:dataset_details}

\EDIT{\paragraph{\Labour:} The Labour dataset~\citep{Aulabor2019Aulabor_dataset} tracks monthly Australian employment from February 1978 to December 2019, reporting total employees by part-time/full-time status, gender, and geography. It includes $N=57$ series in total, with $N_a=25$ aggregate series and $N_b=32$ bottom-level series. We use 8 months from May 2019 to December 2019 as test, and the rest of the data as training and validation.}

\paragraph{\Traffic:} The Traffic dataset~\citep{Taieb19HF} contains daily (aggregated from hourly rates) freeway occupancy rates for 200 car lanes in the San Francisco Bay Area, aggregated from January 2008 to March 2009. The data is grouped into three levels: four groups of 50 lanes, two groups of 100 lanes, and one overall group of 200 lanes, following the random grouping in~\citet{rangapuram2021end,Olivares2021ProbabilisticHF}. Consistent with prior work~\citep{Taieb19HF, rangapuram2021end, Olivares2021ProbabilisticHF}, we split the dataset into 120 training, 120 validation, and 126 test samples, reporting accuracy for the last test date. We use geographic node dummies, weekend indicators, and proximity to Saturday for exogenous variables.


\EDIT{\paragraph{\TourismS:} The \TourismS dataset~\citep{canberra2005tourismS_dataset} records quarterly visits to Australia from 1996 to 2006. We use data from 2005 for validation, 2006 for testing, and the remaining quarters for training. Each series contains 28 quarterly observations and is structured with aggregated visit data by country, purpose of travel, state, regions, and urbanization level within regions. The most disaggregated level includes 56 regions by urbanization, while the aggregated data consists of 33 series. We use state dummies, and for the future exogenous variables, we use quarterly dummies and seasonal naive 4 and 8 anchors.}


\paragraph{\TourismL:} The \TourismL dataset~\citep{Wickramasuriya2019ReconciliationTrace} represents visits to Australia, at a monthly frequency, between January 1998 and December 2016. We use 2015 for validation, and 2016 for testing, and all previous years for training. The dataset contains 228 monthly observations. For each month, we have the number of visits to each of Australia's 78 regions, which are aggregated to the zone, state, and national level, and for each of four purposes of travel. These two dimensions of aggregation total $N=304$ leaf entities (a region-purpose pair), with a total of $M=555$ series in the hierarchy.

We pre-process the data to include static features. We use purpose of travel as well as state dummies. For the historical information, we use month dummies, and for the future exogenous variables, we use month dummies and a seasonal naive anchor forecast that helps greatly to account for the series seasonality.


\paragraph{\Favorita:} The Favorita dataset~\citep{favorita-grocery-sales-forecasting} contains grocery sales of the Ecuadorian Corporaci\'on Favorita in $N=54$ stores. We perform geographical aggregation of the sales at the store, city, state, and national levels, following~\citep{Olivares2021ProbabilisticHF}. This yields a total of $M=94$ aggregates. Concerning features, we use past unit sales and number of transactions as historical data. In the \Favorita dataset, we include item perishability static information, geographic state dummy variables, and for the historic exogenous features and future exogenous features, we use promotions and day of the week. During the model's optimization we consider a balanced dataset of items and stores, for 217,944 bottom level series (4,036 items * 54 stores), along with aggregate levels for a total of 371,312 time series. The dataset is at the daily level and starts from 2013-01-01 and ends on 2017-08-15, comprehending 1688 days. We keep 34 days (1654 to 1988 days) as hold-out test and 34 days (1620 to 1654 days) as validation.


\EDIT{\paragraph{\Wiki:} The Wikipedia dataset~\citep{wikipedia2018web_traffic_dataset} contains daily views of 145,000 online articles from July 2015 to December 2016. The dataset is processed into $N_b=150$ bottom series and $N_a=49$ aggregate series based on country, access, agent and article categories, following \cite{Taieb19HF,rangapuram2021end} processing. The last week of December 2016 is used as test, and the remaining data as training and validation. We use day of the week dummies to capture seasonalities, country, access, agent dummy static features, and a seasonal naive 7 anchor.} \clearpage
\section{Forecast Distributions Visualization}\label{sec:visualize_prediction}

\begin{figure}[ht]
    \centering
    \begin{subfigure}[b]{.24\textwidth}
         \centering
         \includegraphics[width=.9\columnwidth]{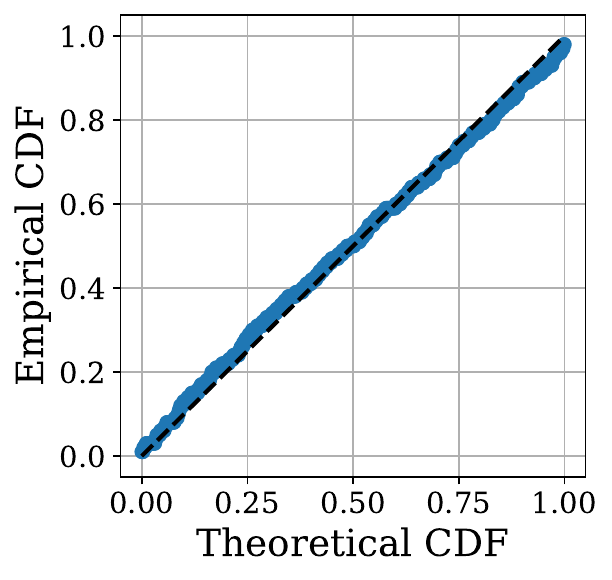}
         \caption{Labour}
    \end{subfigure}
    \begin{subfigure}[b]{.24\textwidth}
         \centering
         \includegraphics[width=.9\columnwidth]{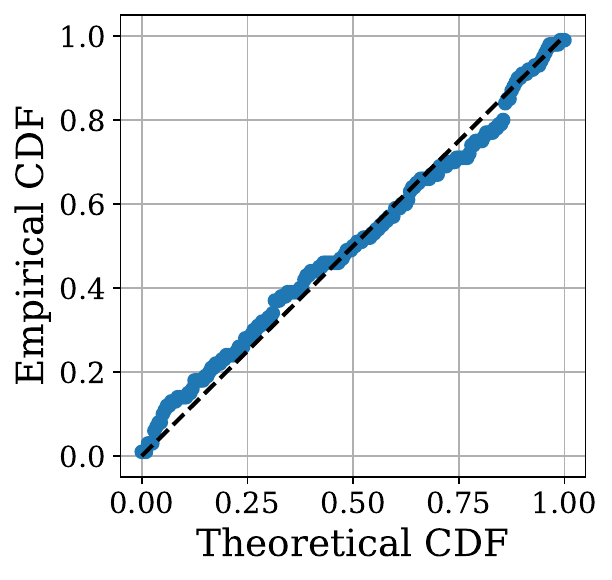}
         \caption{Traffic}
    \end{subfigure}
    \begin{subfigure}[b]{.24\textwidth}
         \centering
         \includegraphics[width=.9\columnwidth]{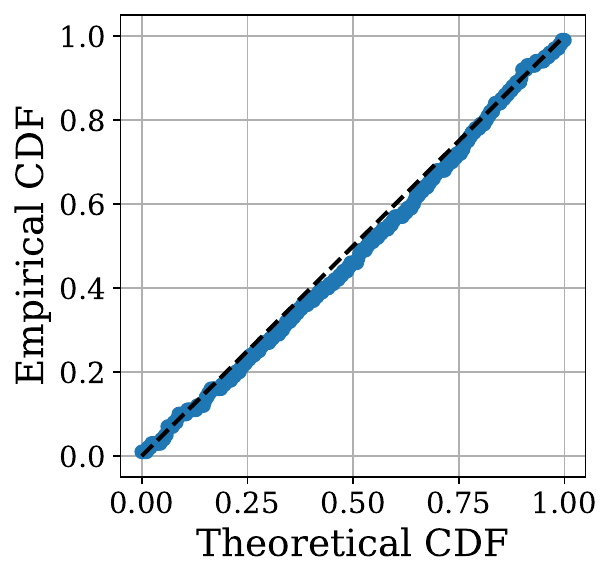}
         \caption{Tourism-S}
    \end{subfigure} \hspace{0.3\textwidth}
    ~
    \begin{subfigure}[b]{.24\textwidth}
         \centering
         \includegraphics[width=.9\columnwidth]{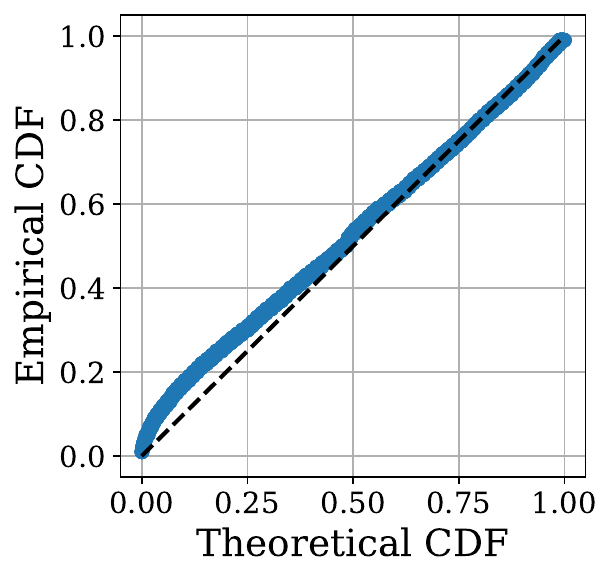}
         \caption{Tourism-L}
    \end{subfigure}
    \begin{subfigure}[b]{.24\textwidth}
         \centering
         \includegraphics[width=.9\columnwidth]{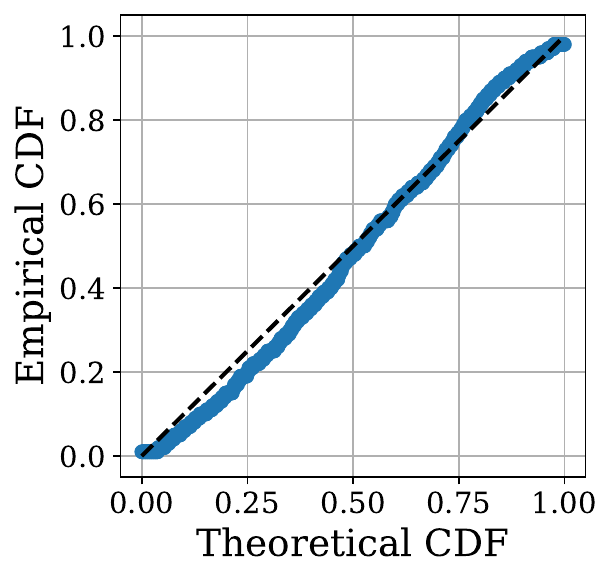}
         \caption{Wiki}
    \end{subfigure}
    \begin{subfigure}[b]{.24\textwidth}
         \centering
         \includegraphics[width=.9\columnwidth]{figures/calibration/traffic_pp_plot.pdf}
         \caption{Favorita}
    \end{subfigure}
    \caption{Probability-Probability plot comparing the cumulative probabilities of the empirical distribution and the \ours' forecast distribution. Points near the 45-degree line indicate similarity between the distributions. The plot uses a single run from the models reported in the main results Table~\ref{table:empirical_evaluation}.
    }
    \label{fig:pp_plots}
\end{figure}

\begin{figure}[ht]
    \centering
    \begin{subfigure}[b]{.25\textwidth}
         \centering
         \includegraphics[width=.9\columnwidth]{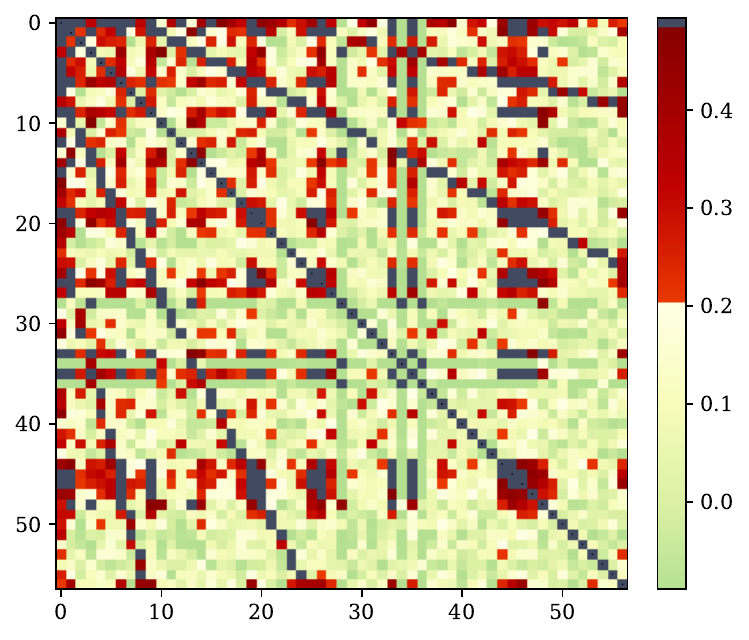}
         \caption{Labour}
    \end{subfigure}
    \begin{subfigure}[b]{.25\textwidth}
         \centering
         \includegraphics[width=.9\columnwidth]{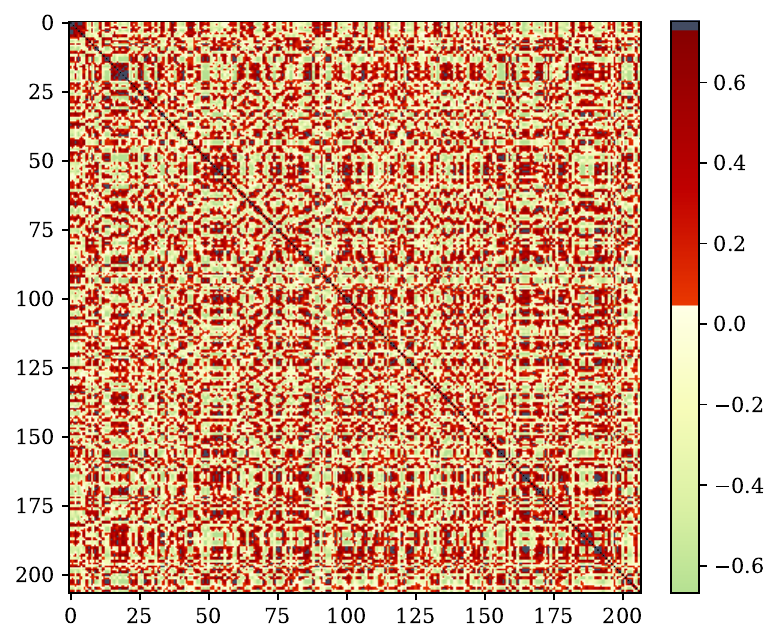}
         \caption{Traffic}
    \end{subfigure}
    \begin{subfigure}[b]{.25\textwidth}
         \centering
         \includegraphics[width=.9\columnwidth]{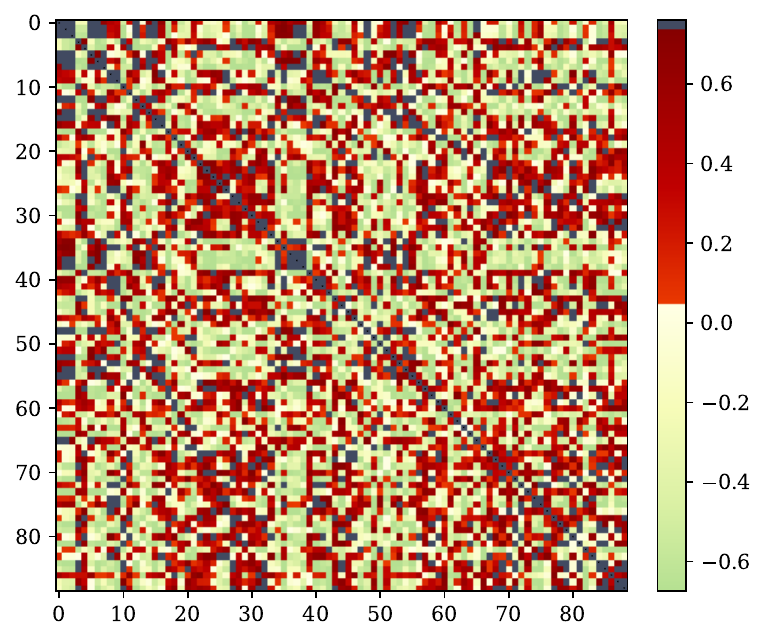}
         \caption{Tourism-S}
    \end{subfigure} \hspace{0.3\textwidth}
    ~
    \begin{subfigure}[b]{.25\textwidth}
         \centering
         \includegraphics[width=.9\columnwidth]{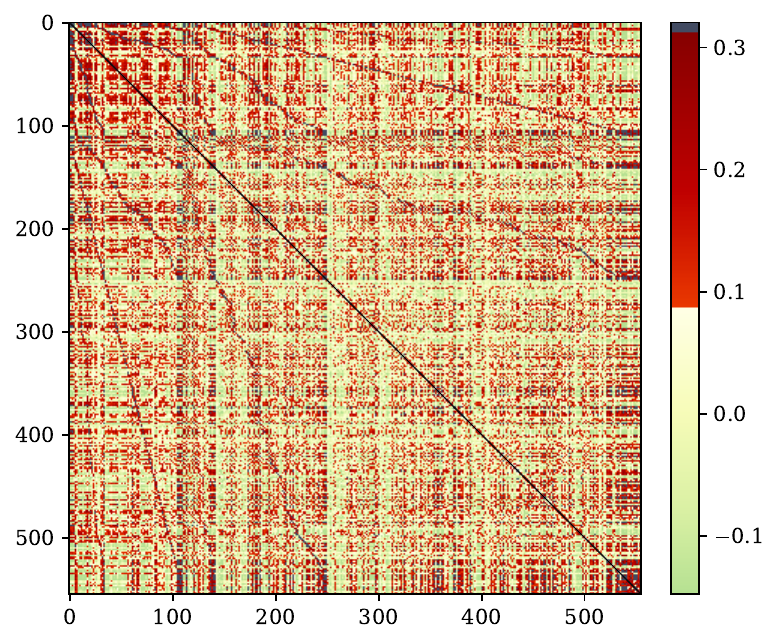}
         \caption{Tourism-L}
    \end{subfigure}
    \begin{subfigure}[b]{.25\textwidth}
         \centering
         \includegraphics[width=.9\columnwidth]{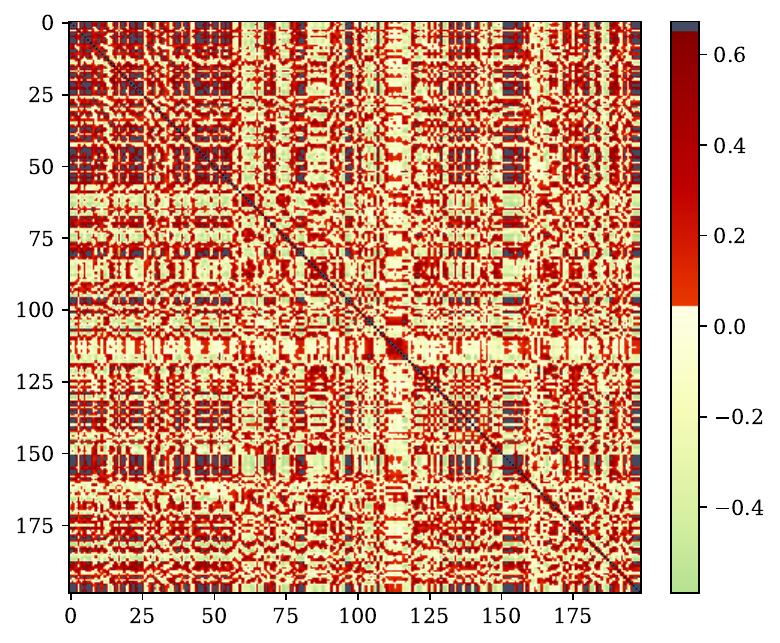}
         \caption{Wiki}
    \end{subfigure}
    \begin{subfigure}[b]{.25\textwidth}
         \centering
         \includegraphics[width=.9\columnwidth]{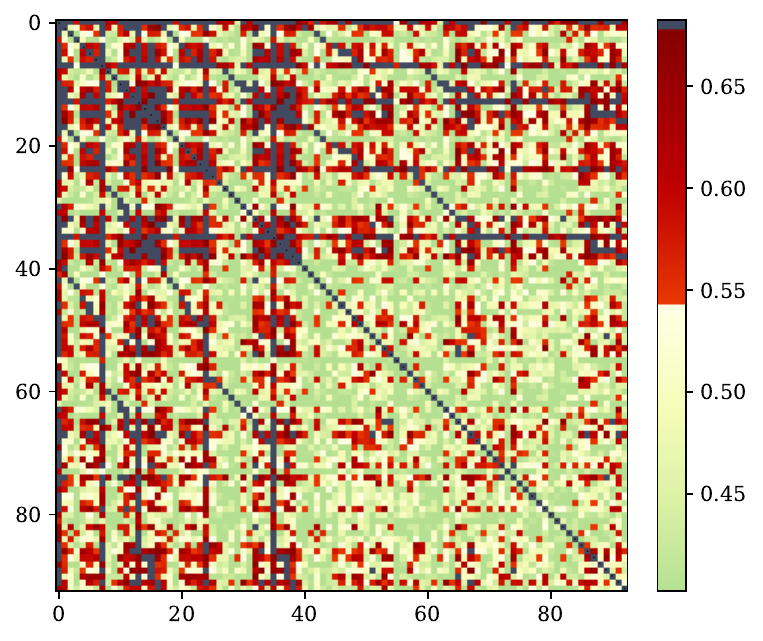}
         \caption{Favorita}
    \end{subfigure}
    \caption{Estimated correlation matrices produced by \ours. The color scale indicates the strength of correlation between pairs of series in the hierarchy. Green represents negative correlations, red indicates positive correlations, and black highlights the most highly correlated series. The estimated covariances are sparse, with many correlations close to zero. Aggregated levels, seen in the top-left corner, show stronger positive correlations.  The plot uses a single run from the models in Table~\ref{table:empirical_evaluation}.
    }
    \label{fig:covariances}
\end{figure}

\clearpage
\vfill
\begin{figure*}[ht]
     \centering
     \includegraphics[width=.69\columnwidth]{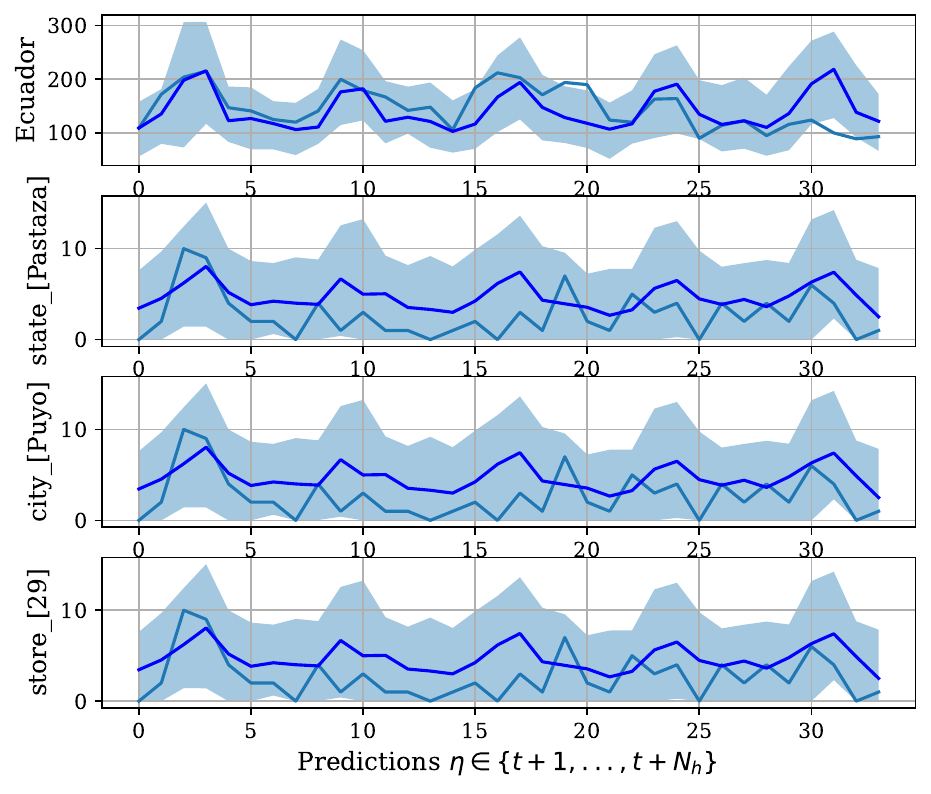}
      \caption{
      \ours\ forecast distributions on the \Favorita dataset. We show the forecasted demand for a grocery item on a store of the Puyo City, in the State of Pastaza and the whole country demand in the top row. Forecast distributions show the 90\% forecast intervals in light blue, and the forecasted median in dark blue. The clipped Normal distribution achieves non-negative predictions and a point mass at zero.}
      \label{fig:favorita_prediction}
\end{figure*}

 \clearpage
\captionsetup{skip=-1pt}

\section{Ablation Studies Details}
\label{sec:ablation_studies}

To analyze the sources of improvements in our model, we conducted ablation studies on variants of the \ours/\MQCNN/\DPMN~\citep{Wen2017AMQ,olivares2022hierarchicalforecast}. We used a simplified setup on the \Traffic dataset, focusing on the same forecasting task as in the main experiment. We evaluated sCRPS from \eqref{eq:scrps} in the validation set across five randomly initialized neural networks. The experiments use the same hyperparameters as reported in Table~\ref{table:model_parameters}, and vary a single characteristic of interest of the network, and measure its effects on the validation dataset.

\begin{table*}[tp]
\scriptsize
\centering
\caption{Ablation study on the \Traffic dataset, empirical evaluation of probabilistic coherent forecasts. Mean \emph{scaled continuous ranked probability score} (sCRPS) averaged over 5 runs, at each aggregation level, the best result is highlighted (lower measurements are preferred). \EDIT{We report 95\% confidence intervals}. \EDIT{Comparison of different neural forecasting architectures augmented with the Factor Model and trained with CRPS.} \\
\tiny{
}
}
\label{table:ablation_study_crossmlp}
\begin{tabular}{lccccccc}
\toprule
Level    & \ours                         & \MQCNN          & \TFT                  &  \NHITS               &  \NBEATS               &  \LSTM                 &  \FCGAGA               \\ \midrule
Overall  & 0.0242±0.0035                 & 0.0613±0.0257   & \EDIT{0.0698±0.0106}  & \EDIT{0.0644±0.0095}  &  \EDIT{0.0688±0.0156}  &  \EDIT{0.0646±0.0066}  &  \EDIT{0.0592±0.0235}  \\
Total    & 0.0035±0.0015                 & 0.0432±0.0296   & \EDIT{0.0311±0.0185}  & \EDIT{0.0339±0.0131}  &  \EDIT{0.0423±0.0175}  &  \EDIT{0.0377±0.0130}  &  \EDIT{0.0241±0.0103}  \\
Halves   & 0.0048±0.0030                 & 0.0437±0.0289   & \EDIT{0.0402±0.0104}  & \EDIT{0.0376±0.0122}  &  \EDIT{0.0443±0.0159}  &  \EDIT{0.0402±0.0089}  &  \EDIT{0.0259±0.0099}  \\
Quarters & 0.0041±0.0022                 & 0.0440±0.0281   & \EDIT{0.0441±0.0111}  & \EDIT{0.0420±0.0084}  &  \EDIT{0.0478±0.0132}  &  \EDIT{0.0402±0.0084}  &  \EDIT{0.0285±0.0083}  \\
Lanes    & 0.0905±0.0084                 & 0.1145±0.0174   & \EDIT{0.1686±0.0159}  & \EDIT{0.1442±0.0213}  &  \EDIT{0.1409±0.0182}  &  \EDIT{0.1404±0.0071}  &  \EDIT{0.1583±0.0923}  \\
\bottomrule
\end{tabular}
\end{table*}


In our first ablation study, we explore the effects of the impact of including vector autoregressive relationships of the hierarchy through the CrossSeriesMLP module described in \Eqref{eq:cross_series_mlp}. In the experiment, we train a \ours with and without the module in the \Traffic dataset. \EDIT{Additionally we compare different well-performing neural forecasting architectures augmented with the CrossSeriesMLP including (1) \LSTM~\citep{sak2014lstm}, (2) \NBEATS~\citep{oreshkin2020nbeats,olivares2022nbeatsx}, (3) \NHITS~\citep{challu2022nhits}, (4) \TFT~\citep{lim2021temporal_fusion_transformer}, (5) \FCGAGA~\citep{oreshkin2021fcgaga} a network specialized in spatio-temporal forecasting. We use the default implementations available in the NeuralForecast library~\citep{olivares2022library_neuralforecast}~\footnote{Optimization of the neural forecasting networks follows details from Appendix~\ref{sec:training_methodology}, we increased four times the early stopping patience on \NHITS/\NBEATS/\FCGAGA to account for the gradient variance compared to \MQCNN's gradient based on forking sequences.}.} \EDIT{Table~\ref{table:ablation_study_losses} shows that convolution-based architectures continue to deliver state-of-the-art results.}, more importantly using the CrossSeriesMLP improves sCRPS upon the alternative (without) by 66 percent. The technique bridges the gap to the \HierEtoE~\citep{rangapuram2021end}, which previously outperformed all alternative methods by over 50 percent. We attribute the improvements to the heavy presence of Granger causal relationships between the traffic lanes, as they carry lag historical information that influences each other.

\EDITtwo{The Factor Model can augment models from the Neural Forecast library~\citep{olivares2022library_neuralforecast}. Because of this, it can readily augment AutoFormer, BitCN, DeepAR, DeepNPTS, DilatedRNN, DLinear, FedFormer, FCGaga, GRU, Informer, ITransformer, KAN, LSTM, MLP, NBEATS, NBEATSx, NHITS, NLinear, PatchTST, RMOK, RNN, SOFTS, TCN, TFT, Tide, TimeLLM, TimeMixer, and TimesNet. For this ablation study, we focus on NBEATS, NHITS, LSTM, FCGaga, and TFT; A comprehensive comparison of all augmented forecasting models is beyond the scope of this article.}

\begin{table*}[tp]
\scriptsize
\centering
\caption{Ablation study on the \Traffic dataset. empirical evaluation of probabilistic coherent forecasts. Mean \emph{scaled continuous ranked probability score} (sCRPS) averaged over 5 runs, at each aggregation level, the best result is highlighted (lower measurements are preferred). \EDIT{We report 95\% confidence intervals}. \EDIT{Comparison of \MQCNN-based\ architecture trained with different learning objectives.}
\\
\tiny{
\textsuperscript{*} The Normal and StudentT are non coherent forecast distributions, in contrast to the Factor Model and the Poisson Mixture. \\
}
}
\label{table:ablation_study_losses}
\begin{tabular}{l|ccc|ccc}
\toprule
Level     & \multicolumn{3}{c|}{\ours}                               & \multicolumn{3}{c}{Other Distributions}       \\ 
          & CRPS             & Energy Score              & NLL             & PoissonMixture & StudentT*     & Normal*       \\ \midrule
Overall   & 0.0259±0.0060    &  \EDIT{0.0269±0.0044}     & 0.0879±0.1136   & 0.0827±0.1408  & 0.0600±0.0367 & 0.0734±0.0253 \\
Total     & 0.0023±0.0022    &  \EDIT{0.0031±0.0030}     & 0.0623±0.1335   & 0.0667±0.1817  & 0.0280±0.0283 & 0.0556±0.0327 \\
Halves    & 0.0028±0.0018    &  \EDIT{0.0035±0.0027}     & 0.0640±0.1318   & 0.0765±0.2030  & 0.0296±0.0300 & 0.0510±0.0316 \\
Quarters  & 0.0043±0.0015    &  \EDIT{0.0074±0.0019}     & 0.0644±0.1313   & 0.0742±0.1782  & 0.0301±0.0294 & 0.0450±0.0334 \\
Lanes     & 0.0942±0.0195    &  \EDIT{0.0882±0.0159}     & 0.1608±0.0615   & 0.1136±0.0056  & 0.1523±0.0752 & 0.1422±0.0554 \\
\bottomrule
\end{tabular}
\end{table*}


In our second ablation study, we explore learning objective alternatives to \Eqref{equation:crps_loss}. For this purpose, we replace the last layer of \ours with different distribution outputs, including the normal, Student-t, and Poisson mixture distributions~\citep{Olivares2021ProbabilisticHF}. In addition we also compare with our own Factor model approach, as we can see in Table~\ref{table:ablation_study_losses} and Figure~\ref{fig:ablation_studies}, the CRPS optimization of the Factor Model improves upon the negative log likelihood by 60 percent in the mean sCRPS in the validation set. The difference is highly driven by outlier runs, but it is expected as the CRPS objective has much more convenient numerical properties, starting with its bounded gradients. Another important note is that, in the literature, factor model estimation is usually done using evidence lower bound optimization, as the latent factors can quickly land in sub-par local minima. In this ablation study, we show that the CRPS offers a reliable alternative to both.
 \clearpage
\section{Mean Forecast Accuracy Evaluation}
\label{sec:mean_evaluation}

\EDIT{To complement the probabilistic and L1-based results in Section~\ref{sec:setting}}. We also evaluate mean forecasts denoted by $\bar{\mathbf{y}}_{[i][h],t}:=(\bar{\mathbf{y}}_{[i],1,t},\cdots,\bar{\mathbf{y}}_{[i],N_h,t})$ through the \emph{relative squared error} relSE~\citep{hyndman2006another_look_measures}, that considers the ratio between squared error across forecasts in all levels over squared error of the \Naive forecast (i.e., a point forecast using the last observation $\mathbf{y}_{[i],t}$) as described by
\begin{equation}
\label{eq:rel_mse}
    \mathrm{relSE}\left(\mathbf{y}_{[i][h],t},\; \mathbf{\bar{y}}_{[i][t+1:t+N_h]}\mid \boldsymbol{l}^{(g)}\right) = \frac{\sum_{i=1}^{N_a+N_b}\|\mathbf{y}_{i,[t+1:t+N_h]}- \mathbf{\bar{y}}_{i,[h],t}\|_2^2\cdot l_i^{(g)}}{\sum_{i=1}^{N_a+N_b}\|\mathbf{y}_{i,[t+1:t+N_h]}- y_{i,t}\cdot \mathbf{1}_{[h]}\|_2^2\cdot l^{(g)}_i}. 
\end{equation}

\begin{table*}[t]
\scriptsize
\centering
\caption{Empirical evaluation of mean hierarchical forecasts. \emph{Relative squared error} (relSE) averaged over 5 runs, at each aggregation level, the best result is highlighted (lower values are preferred). \EDIT{We report 95\% confidence intervals}, the methods without standard deviation have deterministic solutions.\\
\tiny{\textsuperscript{*} The \ARIMA-\ERM \ results for \TourismL \ differ from \cite{rangapuram2021end}, as we improved the numerical stability of their implementation.}
\vspace{1mm}
}
\label{table:empirical_evaluation_relmse}
{
\setlength\tabcolsep{3.3pt}
\begin{tabular}{cc|cccccc|cc}
\toprule
\textsc{Data} 
& \textsc{Level}
& \thead{\ours \\ (crps)} 
& \thead{\ours \\ (energy)} 
& \DPMN-\GroupBU
& \ARIMA-\ERM \textsuperscript{*}
& \ARIMA-\MinT-ols
& \ARIMA-\BottomUp
& \thead{\ARIMA \\ (not hier.)}
& \thead{\SNaive \\ (not hier.)} \\
\midrule
\parbox[t]{.2mm}{\multirow{5}{*}{\rotatebox[origin=c]{90}{\Labour}}}
 & Overall           &  \EDITt{0.3882±0.0507}               &  \EDITt{0.3816±0.0813}             & \EDITt{0.7896±0.2825}        &  \EDITt{0.5671}         &  \textbf{\EDITt{0.3136}}   & \EDITt{0.4527}              & \EDITt{0.3289}             & 5.0683                 \\
 & 1 (geo.)          &  \EDITt{0.1483±0.0614}               &  \EDITt{0.1421±0.0902}             & \EDITt{0.3307±0.3012}        &  \EDITt{0.0395}         &  \textbf{\EDITt{0.0320}}   & \EDITt{0.1455}              & \EDITt{0.0369}             & 5.9572                 \\
 & 2 (geo.)          &  \EDITt{0.4965±0.0898}               &  \EDITt{0.4889±0.1322}             & \EDITt{1.3206±0.4084}        &  \EDITt{0.4274}         &  \textbf{\EDITt{0.3590}}   & \EDITt{0.5927}              & \EDITt{0.3609}             & 5.8649                 \\
 & 3 (geo.)          &  \EDITt{0.6645±0.0528}               &  \textbf{\EDITt{0.6580±0.0765}}    & \EDITt{1.2675±0.2874}        &  \EDITt{0.9561}         &  \EDITt{0.6807}            & \EDITt{0.8742}              & \EDITt{0.6723}             & 4.0696                 \\
 & 4 (geo.)          &  \EDITt{0.7301±0.0236}               &  \textbf{\EDITt{0.7234±0.0285}}    & \EDITt{1.1693±0.3414}        &  \EDITt{1.8568}         &  \EDITt{0.7605}            & \EDITt{0.8364}              & \EDITt{0.8364}             & 2.6208                 \\ \midrule
\parbox[t]{.2mm}{\multirow{4}{*}{\rotatebox[origin=c]{90}{\Traffic}}}
& Overall            & \textbf{0.0008±0.0004}               & \EDITt{0.0021±0.0031}              & 0.1750±0.0099                & 0.0199                  &  0.0425                    &  0.0217                     &  0.0433                    & 0.0709                 \\
& 1 (geo.)           & \textbf{0.0001±0.0002}               & \EDITt{0.0013±0.0027}              & 0.1619±0.0099                & 0.0133                  &  0.0344                    &  0.0168                     &  0.0302                    & 0.0547                 \\
& 2 (geo.)           & \textbf{0.0001±0.0003}               & \EDITt{0.0014±0.0031}              & 0.1835±0.0101                & 0.0135                  &  0.0380                    &  0.0180                     &  0.0392                    & 0.0676                 \\
& 3 (geo.)           & \textbf{0.0005±0.0007}               & \EDITt{0.0017±0.0035}              & 0.1819±0.0100                & 0.0373                  &  0.0647                    &  0.0295                     &  0.0850                    & 0.0989                 \\
& 4 (geo.)           & \textbf{0.1354±0.0325}               & \EDITt{0.1454±0.0604}              & 0.9964±0.0430                & 0.6355                  &  0.5876                    &  0.5669                     &  0.5669                    & 1.3118                 \\ \midrule
\parbox[t]{.2mm}{\multirow{5}{*}{\rotatebox[origin=c]{90}{\TourismS}}}
& Overall            & \EDITt{0.1078±0.0129}                & \EDITt{0.1104±0.0131}              & \EDITt{0.1141±0.0241}        & \EDITt{0.3781}          & \EDITt{0.1772}             & \textbf{\EDITt{0.0994}}     &  \EDITt{0.1886}            & \EDITt{0.2198}         \\
& 1 (geo.)           & \EDITt{0.1044±0.0189}                & \EDITt{0.0874±0.0202}              & \EDITt{0.1362±0.0359}        & \EDITt{0.4174}          & \EDITt{0.1722}             & \textbf{\EDITt{0.0696}}     &  \EDITt{0.1649}            & \EDITt{0.2596}         \\
& 2 (geo.)           & \EDITt{0.0969±0.0060}                & \EDITt{0.1004±0.0175}              & \EDITt{0.0797±0.0103}        & \EDITt{0.367}           & \EDITt{0.1605}             & \textbf{\EDITt{0.0819}}     &  \EDITt{0.2066}            & \EDITt{0.1741}         \\
& 3 (geo.)           & \textbf{\EDITt{0.1381±0.0098}}       & \EDITt{0.1642±0.0097}              & \EDITt{0.1241±0.0299}        & \EDITt{0.3033}          & \EDITt{0.2015}             & \EDITt{0.1684}              &  \EDITt{0.1819}            & \EDITt{0.2163}         \\
& 4 (geo.)           & \textbf{\EDITt{0.1576±0.0079}}       & \EDITt{0.1872±0.0206}              & \EDITt{0.1597±0.0371}        & \EDITt{0.3369}          & \EDITt{0.2444}             & \EDITt{0.2218}              &  \EDITt{0.2218}            & \EDITt{0.2557}         \\ \midrule
\parbox[t]{.2mm}{\multirow{8}{*}{\rotatebox[origin=c]{90}{\TourismL}}}
& Overall            & \textbf{0.0951±0.0145}               & \EDITt{0.0952±0.0188}              & 0.1113±0.0158                & 0.1178                  &  0.1251                    &  0.2979                     &  0.1414                    &  0.1306                \\
& 1 (geo.)           & 0.0447±0.0171                        & \EDITt{0.0405±0.0327}              & 0.0597±0.0212                & 0.0596                  &  0.0472                    &  0.4002                     &  \textbf{0.0343}           &  0.0582                \\
& 2 (geo.)           & \textbf{0.1014±0.0180}               & \EDITt{0.1016±0.0304}              & 0.1121±0.0152                & 0.1293                  &  0.1476                    &  0.3340                     &  0.2530                    &  0.1628                \\
& 3 (geo.)           & 0.2309±0.0124                        & \EDITt{0.2299±0.0253}              & \textbf{0.2250±0.0196}       & 0.2529                  &  0.3556                    &  0.4238                     &  0.4429                    &  0.3695                \\
& 4 (geo.)           & 0.3075±0.0134                        & \EDITt{0.3263±0.0296}              & \textbf{0.2980±0.0197}       & 0.3236                  &  0.4288                    &  0.4012                     &  0.4835                    &  0.4766                \\
& 5 (prp.)           & 0.0596±0.0195                        & \EDITt{\textbf{0.0560±0.0078}}     & 0.0798±0.0195                & 0.0895                  &  0.0856                    &  0.1703                     &  0.0973                    &  0.0615                \\
& 6 (prp.)           & \textbf{0.1199±0.0115}               & \EDITt{0.1232±0.0085}              & 0.1403±0.0150                & 0.1466                  &  0.1537                    &  0.1986                     &  0.1663                    &  0.1577                \\
& 7 (prp.)           & \textbf{0.2484±0.0119}               & \EDITt{0.2644±0.0195}              & 0.2654±0.0212                & 0.2705                  &  0.3017                    &  0.3151                     &  0.2914                    &  0.3699                \\
& 8 (prp.)           & 0.3432±0.0157                        & \EDITt{0.3830±0.0260}              & \textbf{0.3302±0.0235}       & 0.3543                  &  0.3970                    &  0.3769                     &  0.3769                    &  0.4969                \\ \midrule
\parbox[t]{.2mm}{\multirow{6}{*}{\rotatebox[origin=c]{90}{\Wiki}}}
& Overall            & \textbf{\EDITt{0.5301±0.0502}}       & \EDITt{0.7639±0.1517}              & \EDITt{0.9270±0.1237}        & \EDITt{0.9768}          & \EDITt{1.0253}             & \EDITt{0.9621}              & \EDITt{1.0419}             & \EDITt{0.9288}          \\
& 1 (geo.)           & \textbf{\EDITt{0.1980±0.0626}}       & \EDITt{0.4673±0.2430}              & \EDITt{0.6294±0.1769}        & \EDITt{0.8347}          & \EDITt{1.0528}             & \EDITt{0.8271}              & \EDITt{1.1357}             & \EDITt{0.6555}          \\
& 2 (geo.)           & \textbf{\EDITt{0.4963±0.0548}}       & \EDITt{0.7627±0.1917}              & \EDITt{0.9266±0.1056}        & \EDITt{1.0342}          & \EDITt{0.9913}             & \EDITt{0.9615}              & \EDITt{0.8300}             & \EDITt{1.0672}          \\
& 3 (geo.)           & \textbf{\EDITt{0.8272±0.0777}}       & \EDITt{0.9492±0.3670}              & \EDITt{1.2059±0.0997}        & \EDITt{1.0844}          & \EDITt{1.0097}             & \EDITt{1.1105}              & \EDITt{1.0398}             & \EDITt{1.1441}          \\
& 4 (geo.)           & \textbf{\EDITt{0.8152±0.0732}}       & \EDITt{0.9339±0.3571}              & \EDITt{1.1789±0.0961}        & \EDITt{1.0744}          & \EDITt{1.0030}             & \EDITt{1.0858}              & \EDITt{1.0259}             & \EDITt{1.1095}          \\
& 5 (prp.)           & \textbf{\EDITt{0.8820±0.0698}}       & \EDITt{1.2234±0.5854}              & \EDITt{1.1958±0.0732}        & \EDITt{1.0881}          & \EDITt{1.0294}             & \EDITt{1.0504}              & \EDITt{1.0504}             & \EDITt{1.1080}          \\ \midrule
\parbox[t]{.2mm}{\multirow{4}{*}{\rotatebox[origin=c]{90}{\Favorita}}}
& Overall            & \textbf{0.5885±0.0291}               & \EDITt{0.7517±0.0255}              & 0.7563±0.0713                & 0.8163                  &  0.9465                    & 0.8276                      &  0.9665                    &  1.1420                \\
& 1 (geo.)           & \textbf{0.6109±0.0400}               & \EDITt{0.7864±0.0291}              & 0.7944±0.0568                & 0.8362                  &  0.8999                    & 0.8415                      &  0.9217                    &  1.1269                \\
& 2 (geo.)           & \textbf{0.5618±0.0265}               & \EDITt{0.7183±0.0216}              & 0.7355±0.1057                & 0.7830                  &  1.0057                    & 0.8050                      &  1.0451                    &  1.1078                \\
& 3 (geo.)           & \textbf{0.5619±0.0256}               & \EDITt{0.7256±0.0259}              & 0.7303±0.1035                & 0.7986                  &  1.0418                    & 0.8192                      &  1.0881                    &  1.1315                \\
& 4 (geo.)           & \textbf{0.5854±0.0130}               & \EDITt{0.7083±0.0274}              & 0.6770±0.0351                & 0.8199                  &  0.8808                    & 0.8228                      &  0.8228                    &  1.2815                \\ \bottomrule
\end{tabular}
}
\end{table*}

As discussed in Section~\ref{sec:probabilistic_model}, our factor model naturally defines a probabilistic coherent system, where hierarchical coherence emerges as a consequence. In this section, we compare our method with the following coherent mean approaches: (1) \DPMN-\GroupBU, (2) \ARIMA-\ERM~\citep{Taieb19HF}, (3) \ARIMA-\MinT~\citep{Wickramasuriya2019ReconciliationTrace}, (4) \ARIMA-\BottomUp, (5) \ARIMA, and (6) Seasonal Naive. The statistical methods are implemented using the StatsForecast and HierarchicalForecast libraries~\citep{olivares2022hierarchicalforecast,garza2022statsforecast}. In particular we use available code on the \hyperlink{https://github.com/Nixtla/hierarchicalforecast/tree/main/experiments/hierarchical_baselines}{hierarchical baselines repository}, and  \hyperlink{https://github.com/Nixtla/datasetsforecast/blob/main/datasetsforecast/hierarchical.py}{hierarchical datasets repository}.

\EDIT{The \ours model improves relSE accuracy in five out of six datasets by an average of 32\%, while degrading accuracy on the smallest dataset, Labour, by 32\%. These changes can be attributed to squared error metrics' sensitivity to outliers, though the results are generally consistent with sCRPS outcomes.} 
\section{Efficient Estimation of Scoring Rules}

\begin{figure}[ht]
    \centering
    \begin{subfigure}[b]{.75\textwidth}
         \centering
         \includegraphics[width=.9\columnwidth]{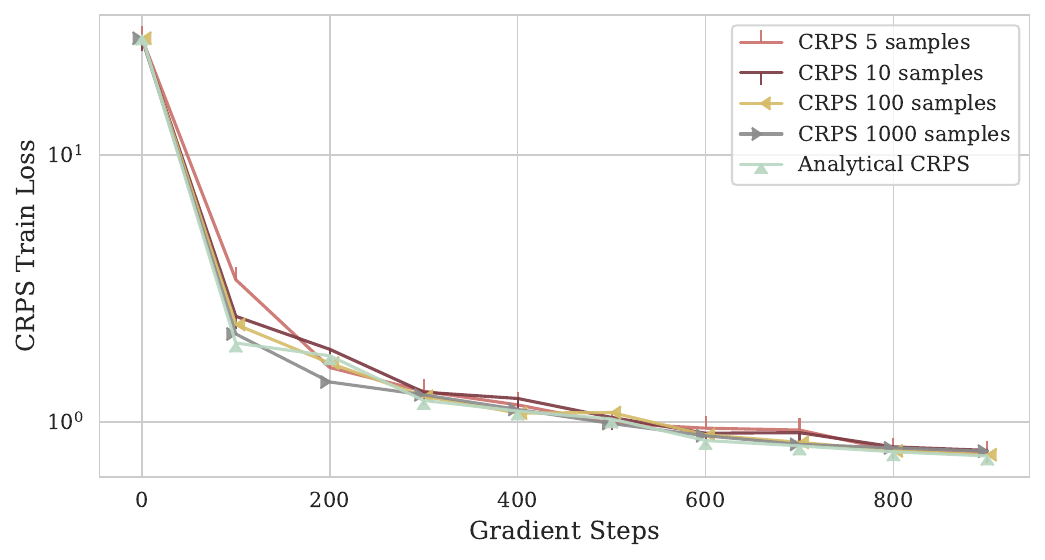}
         \caption{Training Trajectory}
    \end{subfigure}
    \begin{subfigure}[b]{.75\textwidth}
         \centering
         \includegraphics[width=.9\columnwidth]{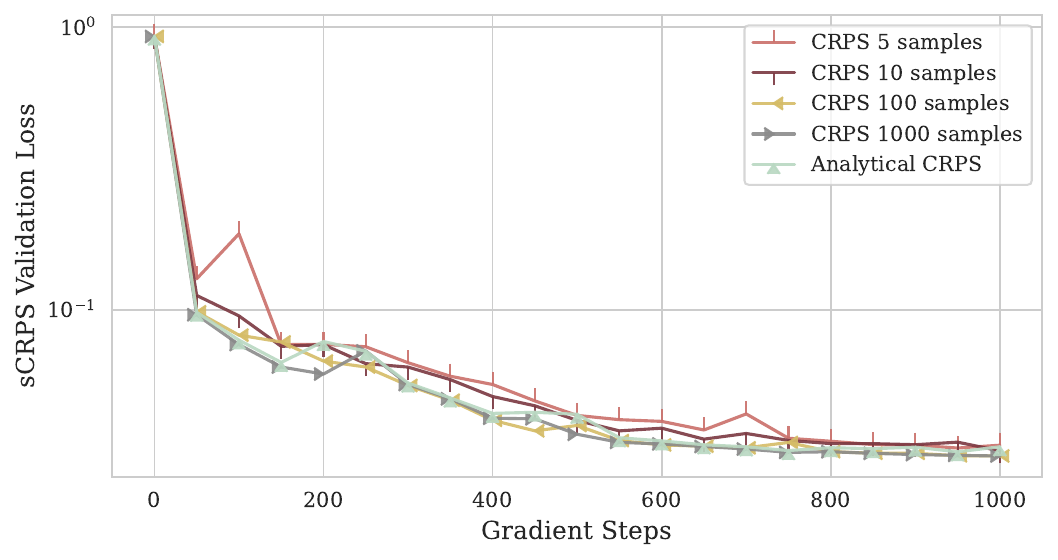}
         \caption{Validation Trajectory}
    \end{subfigure}
    \vspace{2mm}
    \caption{Training and validation \emph{Continuous Ranked Probability Score} curves on the SF Bay Area \Traffic dataset. We show curves for \ours varying as a function of the gradient steps. Each curve uses a different number of Monte Carlo samples to estimate CRPS.
    }
    \label{fig:learning_curves}
\end{figure}

\EDIT{
For the network's optimization we implement the following CRPS Monte Carlo estimator for \eqref{eq:crps2}
\begin{equation}
    \hat{\mathrm{CRPS}}\left(y,Y(\btheta)\right) 
    = \frac{1}{N}\sum^{N}_{i=1}\,|y_{i}-y|-\frac{1}{2N(N-1)}\sum^{N}_{i,j=1}|y_{i}-y_{j}|
\label{eq:crps5}
\end{equation}

Where $y_{i}, y_{j} \quad i,j=1,...,N$ are independent samples of the random variable $Y(\btheta)$, and $y$ a possible observation.

To assess the effect of sample size on the CRPS Monte Carlo estimator, we ran diagnostics throughout the model training process. Figure~\ref{fig:learning_curves} shows the training and validation loss trajectories for varying Monte Carlo sample sizes, using model configurations from Table~\ref{table:model_parameters}. For reference, we include the analytical CRPS of truncated normals~\citep{thorarinsdottir2010truncated_normal}, despite minor distributional differences compared to \eqref{eq:bootstrap_reconciliation2}. For a comprehensive review of alternative CRPS estimators, we refer to \citet{zamo2018crps_estimators}.

We found that reducing the number of Monte Carlo samples had minimal impact on both training and validation scores. Given GPU memory constraints, decreasing the sample size effectively reduces computational overhead without significantly affecting model performance. We observed similar phenomena with the energy score learning objective and its relationship to Monte Carlo samples.
}
\section{Note on the Reparameterization Trick}
\label{sec:reparameterization_trick}

\EDITtwo{
In this section, we discuss the importance of the reparameterization trick in the context of hierarchical series and our multivariate factor model. We explain how it enables us to maintain the network's differentiability while optimizing arbitrary learning objectives, regardless of the complexity of our probabilistic model.

Let a disaggregated series be
\begin{equation}
 \hat{Y}(\btheta)= \mu(\btheta) + \mathrm{Diag}(\sigma(\btheta)) \mathbf{z} + \mathbf{F}(\btheta) \bepsilon \in \mathbb{R}^{N_b}   .
\end{equation}

Let the hierarchical series under aggregation constraints be 
\begin{equation}
 \tilde{Y}(\btheta) = \mathbf{S}_{[i][b]} \hat{Y}(\btheta)_{+} ,
\label{eq:aggregation_constraints3}
\end{equation}
with $\bepsilon \sim N(\mathbf{0},\mathrm{I}_{N_k})$ and $\mathbf{z} \sim N(\mathbf{0},\mathrm{I}_{N_b})$ the reparameterization random variables. For simplicity of exposition, we consider a single entry in the reconciled multivariate series $\tilde{Y}(\btheta)$ that we denote $Y(\btheta)=\mathbf{S}_{i,[b]} \hat{Y}(\btheta)_{+}$.

For a differentiable learning objective $\mathcal{L}: \mathbb{R}^{|\btheta|} \mapsto \mathbb{R}$, the loss gradient with respect to the network parameters $\btheta$ can be obtained as 

\begin{equation}
\begin{aligned}
    \nabla_{\btheta} \mathbb{E}_{Y} \left[ \mathcal{L}\left(\,Y(\btheta)\,\right) \right]
    &=
    \nabla_{\btheta} \left[ \int_{Y}  \mathcal{L}\left(\,Y(\btheta)\,\right) \mathbb{P}(Y|\btheta) \delta Y \right] \\
    &= 
    \int_{Y} \nabla_{\btheta}  \mathcal{L}\left(\,Y(\btheta)\,\right) \mathbb{P}(Y|\btheta) \delta Y
    +
    \int_{Y} \mathcal{L}\left(\,Y(\btheta)\,\right) \nabla_{\btheta} \mathbb{P}(Y|\btheta)  \delta Y \\
    &= 
    \mathbb{E}_{Y} \left[ \nabla_{\btheta}  \mathcal{L}\left(\,Y(\btheta)\,\right) \right]
    + \int_{Y} \mathcal{L}\left(\,Y(\btheta)\,\right) \nabla_{\btheta} \mathbb{P}(Y|\btheta) \delta Y .
\end{aligned}
\label{eq:loss_gradient}
\end{equation}

Note that the expectation's gradient is not the expectation of the gradient due to the second term that captures the dependence of the probability of $Y(\btheta)$ on the parameters of the network. The second term may face challenges when the probability of $Y(\btheta)$ does not have an analytical form or its gradient $ \nabla_{\btheta} \mathbb{P}(Y|\btheta)$ poses challenges for automatic differentiation tools. The solution proposed by \citet{Kingma2013AutoEncodingVB} reparameterizes the target variable to recover the differentiability of the gradient loss in \eqref{eq:loss_gradient}.

Using the chain rule and the reparameterization trick, the loss' gradient can be recovered:

\begin{equation}
\begin{aligned}
    \nabla_{\btheta} \mathbb{E}_{Y} \left[ \mathcal{L}\left(\,Y(\btheta)\,\right) \right]
    &=
    \nabla_{\btheta} \mathbb{E}_{\bepsilon,\mathbf{z}}
    \left[ \mathcal{L}\left(\mathbf{S}_{[i][b]} \hat{Y}(\btheta)\right) \right] \\
    &= 
    \mathbb{E}_{\mathbf{z},\bepsilon} 
    \left[ \nabla_{\btheta} \mathcal{L}\left(\mathbf{S}_{i,[b]} \hat{Y}(\btheta)\right) \right] \\
    &= 
    \mathbb{E}_{\mathbf{z},\bepsilon}
    \left[
    \frac{\delta\mathcal{L}}{\delta Y} \cdot 
    \mathbf{S}_{i,[b]} \cdot 
    \left( \nabla_{\btheta} \hat{Y}\left(\btheta\right)\right)
    \right] \\
    &=
    \mathbb{E}_{\mathbf{z},\bepsilon}
    \left[
    \frac{\delta\mathcal{L}}{\delta Y} \cdot 
    \mathbf{S}_{i,[b]} \cdot
    \left[
    \nabla_{\btheta} \mu 
    +
    \nabla_{\btheta} \;\sigma \;\mathbf{z}
    + 
    \nabla_{\btheta} \mathbf{F} \bepsilon
    \right]_{+}
    \right] .
\end{aligned}
\label{eq:reparametrization_trick}
\end{equation}

As highlighted in Section~\ref{sec:conclusion}, the key insight from \eqref{eq:reparametrization_trick} is that the reparameterization trick provides a highly flexible framework. It is compatible with any differentiable learning objective and can accommodate differentiable constraints that extend beyond traditional aggregation constraints, such as those in \eqref{eq:aggregation_constraints3}. Furthermore, the techniques explored in this paper can be generalized to continuous distributions~\citep{Figurnov2018ImplicitRG, Ruiz2016TheGR, Jankowiak2018PathwiseDB} beyond Gaussian random variables, opening up exciting avenues for future research.
}

\end{document}